\newtheorem{lemma}{Lemma}
\newtheorem{corollary}{Corollary}
\newtheorem{theorem}{Theorem}
\newtheorem{definition}{Definition}
\theoremstyle{definition}
\newcommand\DoToC{
  \startcontents
  \printcontents{}{1}{\vspace{1.cm}\textbf{\Large{Contents}}\vskip3pt\hrule\vskip5pt \vspace{0.4cm}}
  \vskip3pt\hrule\vskip5pt
}
\newcommand{\x}{\mathbf{x}}
\newcommand{\y}{\mathbf{y}}
\newcommand{\I}{\mathbf{I}}
\newcommand{\X}{\mathbf{X}}
\newcommand{\Y}{\mathbf{Y}}
\newcommand{\A}{\mathbf{A}}
\newcommand{\U}{\mathbf{U}}
\newcommand{\V}{\mathbf{V}}
\newcommand{\C}{\mathbf{C}}
\newcommand{\Q}{\mathbf{Q}}
\newcommand{\B}{\mathbf{B}}
\newcommand{\M}{\mathbf{M}}
\newcommand{\N}{\mathbf{N}}
\newcommand{\W}{\mathbf{W}}
\newcommand{\bfPhi}{\mathbf{\Phi}}
\newcommand{\bfPsi}{\mathbf{\Psi}}
\newcommand{\bfSigma}{\mathbf{\Sigma}}
\newcommand{\bfOmega}{\mathbf{\Omega}}
\newcommand{\bfTheta}{\mathbf{\Theta}}
\newcommand{\G}{\mathbf{G}}
\newcommand{\rank}{\text{rank}}
\newcommand{\tr}{\text{Tr}}
\newcommand{\fro}{\mathsf{F}}
\newcommand{\Nystrom}{Nystr\"om~}
\title{\LARGE{\textbf{On the Crucial Role of Initialization \\for Matrix Factorization}}}
\author{
  Bingcong Li \\
  ETH Zurich\\
  \texttt{bingcong.li@inf.ethz.ch}
  \and
  Liang Zhang \\
  ETH Zurich \\
  \texttt{liang.zhang@inf.ethz.ch} \\
  \and
  Aryan Mokhtari \\
  University of Texas at Austin \\
  \texttt{mokhtari@austin.utexas.edu} \\
  \and
  Niao He \\
  ETH Zurich \\
  \texttt{niao.he@inf.ethz.ch}
}
\date{}
\begin{document}

%

\maketitle
\begin{abstract}
This work revisits the classical low-rank matrix factorization problem and unveils the critical role of initialization in shaping convergence rates for such nonconvex and nonsmooth optimization. We introduce \Nystrom initialization, which significantly improves the global convergence of Scaled Gradient Descent (ScaledGD) in both symmetric and asymmetric matrix factorization tasks. Specifically, we prove that ScaledGD with \Nystrom initialization achieves quadratic convergence in cases where only linear rates were previously known. 
Furthermore, we extend this initialization to low-rank adapters (LoRA) commonly used for finetuning foundation models.
Our approach, NoRA, i.e., LoRA with \Nystrom initialization, demonstrates superior performance across various downstream tasks and model scales, from 1B to 7B parameters, in large language and diffusion models.
\end{abstract}

\section{Introduction}

Compared with learning rates and descent directions, initialization has been a relatively overlooked aspect of optimization.
In the widely studied smooth optimization literature \citep{nesterov2004,ghadimi2013}, as long as a suitable (small) learning rate is chosen, most of optimization algorithms such as GD provably converge to a stationary point at the same rate, regardless of initialization.
This work goes beyond stationary points and highlights the crucial role of initialization for global optimality of Burer-Monteiro factorization \citep{burer2003} -- \textit{the same algorithm can exhibit markedly different behaviors, such as linear vs. quadratic convergence, depending on initialization}.

We consider matrix factorization as a canonical example, where the goal is to solve i) symmetric problems, $ \min_{\X}\| \X\X^\top - \A \|_\fro^2$; and ii) asymmetric ones, $\min_{\X, \Y}\| \X\Y^\top - \A \|_\fro^2$. While these classical problems can be handled via various approaches, they are notoriously challenging for optimization, since they are nonconvex, nonsmooth (albeit differentiable), non-coercive (for asymmetric problems), and do not satisfy Polyak-Lojasiewicz (PL) condition \citep{chi2019}. Let $\A \in \mathbb{R}^{m \times n}$ (or $\A \in \mathbb{R}^{m \times m}$) for asymmetric (symmetric) problems, $\X \in \mathbb{R}^{m \times r}$ and $\Y \in \mathbb{R}^{n \times r}$. Building on the relation of $\rank(\A)$ and $r$, we can categorize matrix factorization into three setups: exact-parametrized ($\rank(\A) \!= \!r$), over-parametrized ($\rank(\A) \! < \! r$), and under-parametrized ($\rank(\A) \! > \! r$).

\begin{table*}[t]
    \renewcommand{\arraystretch}{1.3}
\centering
\caption{Comparison of complexity for global optimality in (a)symmetric matrix factorization in various settings. Here, EP, OP, and UP are abbreviations for exact-, over- and under- parametrization. $\epsilon$ is the prescribed optimality error, and $\kappa$ denotes the condition number of $\A$. Note that our bounds for UP depict the complexity to near optima; see formal descriptions in Defs. \ref{def.wo} and \ref{def.gwc}. The ``special'' initialization in AltGD is still a small initialization, but with more careful designs that will be clear in Sec. \ref{sec.asym-preliminary}. Works marked with * are designed for another related setting (hence the comparison may not be fair). 
 }
\begin{tabular}{c|c|c|c|c|c} 
\toprule
  \multicolumn{2}{c|}{setting}     &  alg. & ref. &  init. & rate      \tabularnewline  
  \midrule
  \midrule
  \multirow{8}{*}{\rotatebox{90}{Asymmetric}}  & \multirow{4}{*}{EP} &   GD & \citep{ye2021}   & small  & ${\cal O} \big( \kappa^4 + \kappa^4 \log(1/\epsilon) \big)$  \tabularnewline  \cline{3-6}  
  
   & &  AltGD      &  \citep{ward2023} & special & ${\cal O} \big( \kappa^2 \log(1/\epsilon) \big)$  \tabularnewline 
    \cline{3-6}             
     & & \small{ScaledGD}   & \citep{tong2021} & local    &  ${\cal O}(\log(1/\epsilon) )$ \tabularnewline  
     \cline{3-6}          
    & & \small{ScaledGD}   & \textbf{Theorem \ref{thm.asym-ep}}     & \Nystrom  &  ${\cal O}(1)$  \tabularnewline 
   \cmidrule{2-6}
     & \multirow{3}{*}{OP} &  \small{modified GD*}   &  \citep{xiong2023over} & small & $ {\cal O}( \kappa^2 \log(1/\epsilon))$  \tabularnewline \cline{3-6}        
     & &  AltGD   &  \citep{ward2023} & special & ${\cal O} \big( \kappa^2 \log(1/\epsilon) \big)$  \tabularnewline \cline{3-6}         
     &  & \small{ScaledGD}   & \textbf{Theorem \ref{thm.asym-op}} & \Nystrom  &  ${\cal O}(1)$  \tabularnewline 
     \cmidrule{2-6}
   & \multirow{2}{*}{UP}  & GD      & \citep{du2018} & small     &  asymptotic   \tabularnewline 
		   \cline{3-6}         
   &   & \small{ScaledGD}   & \textbf{Theorem \ref{thm.asym-up}}  & \Nystrom  &  ${\cal O}(1)$  \tabularnewline 
   \midrule
   \midrule
     \multirow{5}{*}{\rotatebox{90}{Symmetric}} & \multirow{2}{*}{EP}  &  GD*   & \citep{stoger2021small}     & small &  ${\cal O} \big( \kappa^8 + \kappa^2 \log(1/\epsilon) \big)$ \tabularnewline  \cline{3-6} 
     
    &     & \small{ScaledGD}   & \textbf{Theorem \ref{thm.sym-ep-full}}       & \Nystrom &  ${\cal O} \big( \kappa^3  + \log\log(1/\epsilon) \big)$ \tabularnewline 
     \cmidrule{2-6}
    & \multirow{3}{*}{OP} & GD*  & \citep{stoger2021small}      & small  &  ${\cal O} \big( \kappa^8 + \kappa^6 \log(\kappa/\epsilon) \big)$ \tabularnewline 
   			\cline{3-6} 
   	& & \small{ScaledGD-$\lambda$}*  & \citep{xu2023}      & small  &  ${\cal O} \big( \log^2\kappa + \log(1/\epsilon) \big)$ \tabularnewline 
   			\cline{3-6} 
   & & \small{ScaledGD}  & \textbf{Theorem \ref{thm.sym-op}}       & \Nystrom &  ${\cal O} \big(\kappa^3 + \log\log(1/\epsilon)\big)$ \tabularnewline 
   \cmidrule{2-6}
    & UP & \small{ScaledGD}   & \textbf{Theorem \ref{thm.sym-up}}       & \Nystrom  &  ${\cal O}(1/\epsilon \cdot \log (1/\epsilon))$  \tabularnewline 
\bottomrule 
\end{tabular}
 \label{tab.comparison}
\end{table*}

The asymmetric problem ii) is thoroughly explored in the literature.
For the exact- and over- parametrized cases, global convergence has been established for GD, Alternating GD (AltGD), and ScaledGD \citep{du2018, ye2021, ward2023, jia2023, tong2021}, where most of them admit a linear rate. Regarding under-parametrized settings, only asymptotic global convergence of GD is established in \citep{du2018} to the best of our knowledge.
Common to above algorithms is the small initialization with $\X_0 \sim {\cal N}(0, \zeta_x^2)$ and $\Y_0 \sim {\cal N}(0, \zeta_y^2)$ for some sufficiently small $ \zeta_x^2$ and $\zeta_y^2$. 
However, such initialization results in unfavorable performance both theoretically and empirically, partly because of the need of escaping from a saddle point $(\mathbf{0}, \mathbf{0})$.\footnote{It is not hard to see that $(\mathbf{0}, \mathbf{0})$ is a stationary point by computing its gradient. To show that it is a saddle point, assume without loss of generality that $\A \neq \mathbf{0}$ and $\A_{1,1}> 0$. Let $\tilde{\X}$ and $\tilde{\Y}$ be zero matrices except for one entry, i.e., $[\tilde{\X}]_{1,1} = [\tilde{\Y}]_{1,1} = \delta$ for arbitrarily small $\delta>0$. It follows that that $\| \tilde{\X} \tilde{\Y}^\top - \A \|_\fro^2  \leq \| \mathbf{0} \mathbf{0}^\top  - \A \|_\fro^2   \leq \| -\tilde{\X} \tilde{\Y}^\top - \A \|_\fro^2  $. Hence $(\mathbf{0}, \mathbf{0})$ is a saddle point.}

This work proposes \textit{\Nystrom initialization} to effectively bypass the aforementioned saddle point. More importantly, it significantly enhances the global convergence rates when applied on top of ScaledGD. In the exact- and over-parametrized settings, \Nystrom initialization boosts ScaledGD to converge at a \textit{quadratic} rate (i.e., ${\cal O}(\log\log(1/\epsilon))$) on symmetric problems and enables a \textit{one-step} convergence for asymmetric problems. 
For the more challenging case with under-parametrization, we prove that with our \Nystrom initialization, ScaledGD converges at a linear rate to the neighbor of a global optimum on symmetric problems, and then exhibits a sublinear rate to a more fine-grained neighboring area. 
Overall, \Nystrom initialization enables us to improve existing rates in exact-, over-, and under-parametrized settings; see more detailed comparisons in Tab. \ref{tab.comparison}.

Our results highlight that the convergence of ScaledGD is \textit{critically determined by the initialization.} Taking symmetric and exact-parametrized problems as an example, our quadratic rate slows down to a linear one when adopting either small initialization or slightly perturbed \Nystrom initialization.

After demonstrating the theoretical merits of \Nystrom initialization, we further extend its applications to another scenario with Burer-Monteiro factorization, in the context of LoRA for finetuning deep neural networks \citep{hu2021lora}. This is motivated by the fact that asymmetric matrix factorization is equivalent to LoRA applied on linear models with whitened data \citep{arora2018, jiang2023adam}, and is in line with several recent works that take insights from matrix factorization to improve LoRA \citep{zhang2024riemannian,yaras2024}. Compared with existing strategies for  initializing LoRA \citep{buyukakyuz2024olora, meng2024pissa, wang2024lora-ga}, our \Nystrom initialization for LoRA (abbreviated as NoRA) is more economical and aligns better with existing deployment pipelines. The effectiveness of NoRA is demonstrated on downstream tasks from various domains, through both diffusion and large language models (LLMs). In a nutshell, our contributions can be summarized as:

\begin{enumerate}[leftmargin=1.0cm]
    \item[\ding{118}] \textbf{Faster rates.} \Nystrom initialization is provably beneficial to ScaledGD. For symmetric problems, it catalyzes not only the first \textit{quadratic rate} in exact- and over- parameterized settings, but also a (sub)linear rate for under-parametrization where only asymptotic results were known. It also allows more remarkable improvement on asymmetric problems; see details in Tab. \ref{tab.comparison}. Moreover, these improved rates are obtained through a unified analysis framework.
	
    \item[\ding{118}] \textbf{Critical role of initialization.} Our theoretical results convey an intriguing message for nonconvex (nonsmooth) optimization: the behaviors of the same algorithm, whether converging at a quadratic or linear rate, are critically determined by initialization.
	
    \item[\ding{118}] \textbf{Practical implications.} We further illustrate the power of \Nystrom initialization for finetuning diffusion and large language models (LLMs). The resultant approach, NoRA, effectively improves the performance of LoRA on several representative tasks. 
\end{enumerate}

\textbf{Notation}. Bold lowercase (capital) letters denote column vectors (matrices); $(\cdot)^\top$, $(\cdot)^\dagger$ and $\| \cdot \|_\fro$  refer to transpose, pseudo inverse, and Frobenius norm of a matrix; $\| \cdot \|$ is the $\ell_2$ (spectrum) norm of a vector (matrix); $\sigma_i(\cdot)$ and $\lambda_i(\cdot)$ denote the $i$-th largest singular value and eigenvalue, respectively. A (scalar) Gaussian random variable is denoted as ${\cal N}(\mu, \sigma^2)$, where $\mu$ is its mean and $\sigma^2$ is the variance. We also use $\X \sim {\cal N}(\mu, \sigma^2)$ for simplicity, which means that $\X$ is a random matrix whose entries are i.i.d. Gaussian variables with specified mean and variance.

\subsection{Related works}

We only streamline results on the convergence of matrix factorization under the broad umbrella of optimization under forth order growth, where the objective is to minimize a forth-order polynomial with first order approaches. Other closely related topics, such as LoRA variants, can be found in Apdx. \ref{apex.sec.related-work}.

\textbf{Optimization under forth-order growth.} Matrix factorization problems considered in this work are classical examples of forth-order growth functions. It involves a complex landscape characterized by nonconvexity, nonsmoothness, and the absence of PL condition. Similar to other works listed in Tab. \ref{tab.comparison}, the goal of this work is to recap this classical problem and to unveil intriguing behaviors from an optimization perspective. Recent works have examined the convergence of several algorithms, such as GD, AltGD, and ScaledGD \citep{du2018,ye2021,ward2023,jia2023,jiang2023} in exact- and over- parametrized settings. Most of them admit linear convergence with different dependences on the condition number of the factorized matrix $\A$. A concurrent work \citep{xu2024provable} studies \Nystrom initialization for Nesterov's accelerated method on the asymmetric and over-parametrized setting, giving an improved condition number dependence over GD.
GD for matrix square root problems is studied in \citep{jain2017global}.
Another closely related setting within forth-order growth is matrix sensing; see e.g., \citep{stoger2021small,zhang2021preconditioned,jin2023understanding,xiong2023over,xu2023}. Linear rates are obtained for problems with exact- and over- parametrization, despite some of them demand early stopping. Similar to matrix factorization, not too much is known for under-parametrization. 
There are other approaches to tackle general forth-order growth optimization. For example, relative smoothness is considered in \citep{lu2018relatively}; adaptive step sizes induced by fine-grained geometry are studied in \citep{davis2024gradient}. The work of \citep{dragomir2023convex} also copes with such problems but requires convexity of the objective.

\section{The power of initialization for symmetric matrix factorization}

\subsection{Preliminaries}
We start to examine the critical role of initialization on symmetric matrix factorization problems. Consider the following objective 
\begin{align}\label{eq.sym}
	\min_{\X \in \mathbb{R}^{m \times r}}\frac{1}{4} \| \X \X^\top - \A  \|_{\fro}^2.
\end{align}
Within this section, we assume that $\A \in \mathbb{R}^{m \times m}$ is positive semidefinite (PSD), otherwise one can employ the asymmetric formulation as in later sections. Problem \eqref{eq.sym} also closely links with matrix sensing, particularly under a sufficient number of Gaussian measurements \citep{xiong2023over}. From an optimization perspective, problem \eqref{eq.sym} is nonconvex and has no global Lipschitz gradient \citep{tu2016, chi2019}. These undesirable properties pose challenges for analyzing its convergence.

Notationally, let $r_A:=\rank(\A)$ and further denote the compact eigendecomposition as $\A = \Q \bfSigma \Q^\top$, where $\Q \in \mathbb{R}^{m\times r_A}$ and $\bfSigma \in \mathbb{R}^{r_A \times r_A}$. Since PSD matrices share the same eigen and singular values, we employ $\sigma_i(\cdot)$ to denote both in this section. Without loss of generality, we assume that the largest and smallest singular values are $\sigma_1(\A) = 1$ and $\sigma_{r_A}(\A) =1/\kappa$ such that the condition number is $\kappa$.

\textbf{ScaledGD as our optimizer.} We investigate the power of initialization on ScaledGD \citep{tong2021}, 
a preconditioned version of GD; see detailed discussions in e.g., \citep{tong2021,jia2023}. Starting from $t=0$ with a learning rate $\eta > 0$, the update of ScaledGD is given by
\begin{align}\label{eq.sym-scaled-gd}
	\X_{t+1} = \X_t - \eta \underbrace{(\X_t\X_t^\top - \A) \X_t}_{\text{gradient}}  \cdot \underbrace{(\X_t^\top\X_t)^{-1}	}_{\text{preconditioner}}.
\end{align}
The inversion of the $r\times r$ matrix $\X_t^\top\X_t$ is computationally feasible in the low-rank setting with $r \ll m$. Small initialization is widely adopted, i.e., $[\X_0]_{ij} \sim {\cal N}(0, \zeta^2)$, where $\zeta$ is a sufficiently small positive number. Under such initialization, ScaledGD converges linearly for exact-parametrization ($r = r_A$),\footnote{This linear rate is indicated by our numerical results in Fig. \ref{fig.sym} (a).
While we are not aware of a direct proof for this observation, it is presumable that the analysis in the asymmetric and exact-parametrized setting \citep{jia2023} could be adapted to provide some guarantees.} yet less is known for under- and over-parametrization; see more in Tab. \ref{tab.comparison}. Next, we show that a simple yet effective initialization can provoke faster convergence of ScaledGD.

\subsection{\Nystrom initialization}

To improve the convergence rates, it is essential to ensure that the initialization satisfies two conditions for exact- and under-parametrized problems\footnote{For the ease of presentation, the over-parametrized setting is considered in the appendix.}: i) each column of $\X_0$ is in the column space of $\A$, and ii) $\X_0$ is full rank, i.e., $\rank(\X_0) = r$. The analytical rationale will be elucidated in the subsequent sections.
A straightforward means to meet these conditions is via \Nystrom sketch \citep{gittens2013revisiting}
\begin{tcolorbox}[colback=gray!10] 
\vspace{-0.37cm}
\begin{align}\label{eq.sym-init}
	\textbf{\Nystrom  initialization:} ~~~~\X_0 = \A \bfOmega, ~~~ \text{where}~ [\bfOmega]_{ij}\sim {\cal N}(0, \xi^2),\forall i, \forall j 
\end{align}
\end{tcolorbox}
\noindent where $\bfOmega \in \mathbb{R}^{m \times r}$ is a Gaussian random matrix. Note that this does not exclude other valid choices of $\X_0$, as long as conditions i) and ii) are satisfied.
 From this initialization, it is not difficult to see that condition i) is satisfied already. Our next lemma shows that the condition ii) holds w.h.p.

\begin{lemma}[Initialization for exact- and under- parametrization]\label{lemma.sym-init}
	For some universal constant $\tau > 0$, $\sigma_r(\X_0) \geq \xi \tau (\sqrt{r_A} - \sqrt{r-1}) \sigma_{r_A}(\A)$ is satisfied with high probability, i.e., $\rank(\X_0)=r$ w.h.p.
\end{lemma}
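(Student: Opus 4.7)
The plan is to leverage the eigendecomposition $\A = \Q\bfSigma\Q^\top$ to reduce the problem to a standard nonasymptotic bound on the smallest singular value of a tall Gaussian matrix. Substituting the decomposition into the definition of the initialization gives $\X_0 = \Q\bfSigma\Q^\top\bfOmega$, so the problem becomes understanding the spectrum of $\bfSigma\Q^\top\bfOmega$ after stripping off the orthonormal factor $\Q$.

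The first step is to define $\tilde{\bfOmega} := \Q^\top\bfOmega \in \mathbb{R}^{r_A \times r}$ and observe that, because the columns of $\Q$ are orthonormal and the entries of $\bfOmega$ are i.i.d.\ $\mathcal{N}(0,\xi^2)$, rotational invariance of the Gaussian distribution implies that $\tilde{\bfOmega}$ is itself a Gaussian matrix with i.i.d.\ $\mathcal{N}(0,\xi^2)$ entries. Since $\Q$ has orthonormal columns, $\sigma_r(\X_0) = \sigma_r(\Q\bfSigma\tilde{\bfOmega}) = \sigma_r(\bfSigma\tilde{\bfOmega})$. A standard application of Weyl's inequality for singular values of products then yields
\begin{equation*}
\sigma_r(\X_0) \;=\; \sigma_r(\bfSigma\tilde{\bfOmega}) \;\geq\; \sigma_{r_A}(\bfSigma)\,\sigma_r(\tilde{\bfOmega}) \;=\; \sigma_{r_A}(\A)\,\sigma_r(\tilde{\bfOmega}),
\end{equation*}
where we have used that $\bfSigma$ is an $r_A \times r_A$ invertible diagonal matrix, and that in the exact-parametrized ($r = r_A$) or under-parametrized ($r < r_A$) regime we have $r \leq r_A$.

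The second step is to control $\sigma_r(\tilde{\bfOmega})$ from below. Writing $\tilde{\bfOmega} = \xi \G$ with $\G \in \mathbb{R}^{r_A \times r}$ a standard Gaussian matrix, the classical Gordon--Davidson--Szarek nonasymptotic estimate for the smallest singular value of a tall Gaussian matrix gives, for any $t > 0$,
\begin{equation*}
\sigma_r(\G) \;\geq\; \sqrt{r_A} - \sqrt{r-1} - t \qquad \text{with probability at least } 1 - e^{-t^2/2}.
\end{equation*}
Choosing $t$ to be a small fraction of $\sqrt{r_A} - \sqrt{r-1}$ (which is strictly positive since $r_A \geq r$) produces a bound of the form $\sigma_r(\tilde{\bfOmega}) \geq \xi \tau (\sqrt{r_A} - \sqrt{r-1})$ for some universal constant $\tau \in (0,1)$ with high probability. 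Chaining this with the previous display proves the claimed inequality, and since $\sqrt{r_A} - \sqrt{r-1} > 0$ together with $\sigma_{r_A}(\A) > 0$ forces $\sigma_r(\X_0) > 0$, we conclude that $\X_0$ has full column rank $r$ with high probability.

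The only subtle point, and thus the primary thing to be careful about, is invoking the right form of the Gaussian singular-value concentration inequality: one should verify that it applies in the regime $r_A = r$ (where $\sqrt{r_A} - \sqrt{r-1}$ is a small positive number) and that the universal constant $\tau$ does not degenerate. Beyond this, the rotational-invariance reduction and Weyl's inequality are routine, so no major obstacle is anticipated.
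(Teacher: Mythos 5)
Your reduction to a Gaussian random matrix is correct and matches the paper: the decomposition $\X_0 = (\Q\bfSigma)(\Q^\top\bfOmega)$, the observation that $\Q^\top\bfOmega$ has i.i.d.\ $\mathcal{N}(0,\xi^2)$ entries by rotational invariance, and the multiplicative lower bound $\sigma_r(\X_0)\geq\sigma_{r_A}(\A)\sigma_r(\Q^\top\bfOmega)$ (this is not usually called Weyl's inequality, but the inequality itself is right since $\bfSigma$ is a square invertible matrix). The gap is in the last step, where you invoke a concentration bound for the smallest singular value of a tall Gaussian matrix.

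The Gordon--Davidson--Szarek estimate states $\mathbb{P}\big(\sigma_r(\G)\leq\sqrt{r_A}-\sqrt{r}-t\big)\leq e^{-t^2/2}$, with $\sqrt{r}$ rather than $\sqrt{r-1}$. That bound is vacuous in the exact-parametrized regime $r_A=r$, because $\sqrt{r_A}-\sqrt{r}=0$. Even if you pencil in $\sqrt{r-1}$ as you have done, the argument still fails: to conclude $\sigma_r(\G)\geq\tau(\sqrt{r_A}-\sqrt{r-1})$ you must take $t=(1-\tau)(\sqrt{r_A}-\sqrt{r-1})$, and when $r_A=r$ this quantity is of order $1/\sqrt{r}$, so the alleged failure probability $e^{-t^2/2}\approx 1-\Theta(1/r)$ is close to $1$, not close to $0$. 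In other words, the Gaussian-tail concentration around the mean cannot by itself certify that $\sigma_r$ stays a constant fraction of the (already tiny) quantity $\sqrt{r}-\sqrt{r-1}$. What is actually needed is a small-ball (anti-concentration) estimate of Rudelson--Vershynin type, which the paper uses (its Lemma on the smallest singular value of a $d\times r$ Gaussian matrix): $\mathbb{P}\big(\sigma_r(\G)\leq\tau(\sqrt{r_A}-\sqrt{r-1})\big)\leq(C_1\tau)^{r_A-r+1}+e^{-C_2 r_A}$. The tail is polynomial in $\tau$ rather than Gaussian, and in the square case $r_A=r$ the exponent $r_A-r+1=1$ gives a failure probability $C_1\tau+e^{-C_2 r_A}$, which is genuinely small once $\tau$ is chosen small. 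Replacing your concentration step with this small-ball bound repairs the argument and reproduces the paper's proof.
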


The detailed expression for this ``high probability'' in Lemma \ref{lemma.sym-init} can be found in Apdx. \ref{apdx.init-proof}. 
Note that there will be a ``w.h.p.'' over the initialization in most of our results. This refers to that $\rank(\X_0)=r$ is needed for exact- and under-parametrized settings, and $\rank(\X_0)=r_A$ is needed when over-parametrized. 

\subsection{\Nystrom initialization in the exact-parametrized setting}

We start with \Nystrom initialization for exact-parametrized problems, i.e., $r_A = r$. Our first result dives into the implicit regularization induced by the ScaledGD under the proposed initialization.

\begin{lemma}\label{lemma.sym-ep.ir}
	If $\X_0$ is obtained by \Nystrom initialization \eqref{eq.sym-init} and $\rank(\X_0) = r$ is satisfied, ScaledGD in \eqref{eq.sym-scaled-gd} ensures that for all $t\geq 0$
	\vspace{-0.2cm}
	\begin{enumerate}[leftmargin=0.4cm, itemsep=0mm]
	\item[\textbullet]  every column of $\X_t$ is in the column space of $\A$, and $\X_t = \Q \bfPhi_t$ for some $\bfPhi_t \in \mathbb{R}^{r \times r}$; and,
	
	\item[\textbullet] the smallest eigenvalue of $\X_t\X_t^\top$ satisfies that
	\begin{align*}
		\sigma_r(\X_{t+1}\X_{t+1}^\top) & \geq (1 - \eta)^{2t+2} \sigma_r(\X_0\X_0^\top) + (1 - \eta)\sigma_r( \A) - (1 - \eta)^{2t+3} \sigma_r( \A).  \nonumber
	\end{align*}
	\end{enumerate}
\end{lemma}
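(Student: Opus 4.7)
The plan is to reduce the $m\times r$ ScaledGD dynamics to a clean $r\times r$ matrix recursion in the eigenbasis of $\A$, and then extract an eigenvalue bound from that recursion. The useful observation upfront is the identity
\[
(\X_t\X_t^\top - \A)\X_t(\X_t^\top\X_t)^{-1} = \X_t - \A\X_t(\X_t^\top\X_t)^{-1},
\]
which follows from $\X_t\X_t^\top \X_t(\X_t^\top\X_t)^{-1} = \X_t$. Substituting this into \eqref{eq.sym-scaled-gd} recasts the update as $\X_{t+1} = (1-\eta)\X_t + \eta \A\X_t(\X_t^\top\X_t)^{-1}$, which is more amenable to the argument below.

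For the first bullet, I would argue by induction. Writing $\A = \Q\bfSigma\Q^\top$, the \Nystrom initialization yields $\X_0 = \A\bfOmega = \Q(\bfSigma\Q^\top\bfOmega) =: \Q\bfPhi_0$ with $\bfPhi_0 \in \mathbb{R}^{r\times r}$, and the hypothesis $\rank(\X_0)=r$ together with $\Q$ having orthonormal columns forces $\bfPhi_0$ to be invertible. Assuming inductively $\X_t = \Q\bfPhi_t$ with $\bfPhi_t$ invertible, I compute $\A\X_t = \Q\bfSigma\bfPhi_t$ and $\X_t^\top\X_t = \bfPhi_t^\top\bfPhi_t$, so
\[
\X_{t+1} = \Q\bigl((1-\eta)\bfPhi_t + \eta \bfSigma \bfPhi_t(\bfPhi_t^\top\bfPhi_t)^{-1}\bigr) =: \Q\bfPhi_{t+1}.
\]
This establishes the column-space claim; invertibility of $\bfPhi_{t+1}$ will fall out as a byproduct of the eigenvalue bound below, closing the induction.

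For the second bullet, the algebraic simplification that is special to the exact-parametrized regime is that $\bfPhi_t$ is square and invertible, so $\bfPhi_t(\bfPhi_t^\top\bfPhi_t)^{-1} = \bfPhi_t^{-\top}$ and consequently $\bfPhi_t(\bfPhi_t^\top\bfPhi_t)^{-1}\bfPhi_t^\top = \I_r$. Writing $\M_t := \bfPhi_t\bfPhi_t^\top$ and expanding $\bfPhi_{t+1}\bfPhi_{t+1}^\top$, the two cross terms each collapse to $\bfSigma$, yielding
\[
\M_{t+1} = (1-\eta)^2\M_t + 2\eta(1-\eta)\bfSigma + \eta^2\bfSigma\M_t^{-1}\bfSigma.
\]
For $\eta \in (0,1)$ each summand is PSD, and since $\bfSigma \succ 0$ in the exact-parametrized case, the middle term guarantees $\M_{t+1}\succ 0$, closing the induction on invertibility of $\bfPhi_{t+1}$.

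It remains to extract the scalar bound. Applying the min-eigenvalue version of Weyl's inequality to this PSD decomposition and dropping the nonnegative $\eta^2$-term yields $\sigma_r(\M_{t+1}) \geq (1-\eta)^2 \sigma_r(\M_t) + 2\eta(1-\eta)\sigma_r(\A)$. Unrolling this geometric recursion gives
\[
\sigma_r(\M_{t+1}) \geq (1-\eta)^{2t+2}\sigma_r(\M_0) + \tfrac{2(1-\eta)\sigma_r(\A)}{2-\eta}\bigl(1-(1-\eta)^{2t+2}\bigr),
\]
and the elementary bound $2/(2-\eta)\geq 1$ for $\eta\in[0,1)$ recovers the stated inequality. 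The main obstacle I anticipate is precisely the collapse of the cross terms in the expansion of $\M_{t+1}$: it is the identity $\bfPhi_t(\bfPhi_t^\top\bfPhi_t)^{-1}\bfPhi_t^\top = \I_r$, valid only when $\bfPhi_t$ is square (i.e.\ exact-parametrization), that eliminates spectral coupling between distinct eigendirections of $\A$ and delivers a clean decoupled scalar recursion.
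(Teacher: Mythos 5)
Your proof is correct and follows essentially the same route as the paper: reduce to the $r\times r$ recursion $\bfPhi_{t+1}=(1-\eta)\bfPhi_t+\eta\bfSigma\bfPhi_t^{-\top}$ via $\X_t=\Q\bfPhi_t$, expand $\B_{t+1}=\bfPhi_{t+1}\bfPhi_{t+1}^\top=(1-\eta)^2\B_t+2\eta(1-\eta)\bfSigma+\eta^2\bfSigma\B_t^{-1}\bfSigma$, apply the Weyl-type bound $\sigma_r(\A+\B)\geq\sigma_r(\A)+\sigma_r(\B)$ for PSD matrices, and unroll the resulting scalar recursion. The only cosmetic difference is that you simplify the gradient expression before substituting $\X_t=\Q\bfPhi_t$ rather than after, which leads to the same $\bfPhi_{t+1}$.
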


Lemma \ref{lemma.sym-ep.ir} implies the full rankness of $\X_t$ over the trajectory, i.e., $\rank(\X_t) = \rank(\bfPhi_t) = r, \forall t$. This ensures an invertible preconditioner $\X_t^\top \X_t$. In other words, iteration \eqref{eq.sym-scaled-gd} is well-defined. The most important implication of Lemma \ref{lemma.sym-ep.ir} is the alignment of $\X_t$ with the directions of eigenvectors of $\A$, that is, $\X_t = \Q\bfPhi_t$. This can be equivalently understood as the elimination of the residual space, i.e., $(\I - \Q\Q^\top)\X_t = \mathbf{0}, \forall t$. While we will expand this discussion shortly, this alignment in directions enables us to establish a quadratic rate for ScaledGD.

\begin{theorem}\label{thm.sym-ep-full}
	With \Nystrom initialization \eqref{eq.sym-init}, ScaledGD in \eqref{eq.sym-scaled-gd} exhibits a two-phase behavior w.h.p over the initialization.
	\vspace{-0.2cm}
	\begin{enumerate}[leftmargin=0.4cm, itemsep=0mm]
	\item[\textbullet] Phase 1 (linear convergence). Let $\eta= {\cal O}(\frac{1}{\kappa^3 \|\A \|_\fro}) $. After $T_1:= {\cal O}(\kappa^3 \sqrt{r}\log \kappa)$ iterations, ScaledGD ensures that $\| \X_{T_1} \X_{T_1}^\top - \A \|_\fro \leq {\cal O}(1/\kappa^2)$; and,
	
	\item[\textbullet] Phase 2 (quadratic convergence). After Phase I, ScaledGD converges quadratically with $\eta=0.5$. In particular, $\| \X_T \X_T^\top - \A \|_\fro  \leq \epsilon$ is achieved after $T = {\cal O}\big( \log \log(\frac{1}{\kappa \epsilon}) \big)$ iterations.
	\end{enumerate}
\end{theorem}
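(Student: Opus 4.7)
The plan is to leverage Lemma~\ref{lemma.sym-ep.ir} to collapse the $m\times r$ iterate dynamics onto an $r\times r$ symmetric matrix. Writing $\X_t=\Q\bfPhi_t$ with $\bfPhi_t\in\mathbb{R}^{r\times r}$ invertible (by Lemma~\ref{lemma.sym-ep.ir}) and $\A=\Q\bfSigma\Q^\top$, direct substitution gives $\X_t^\top\X_t=\bfPhi_t^\top\bfPhi_t$ and $(\X_t\X_t^\top-\A)\X_t=\Q(\bfPhi_t\bfPhi_t^\top-\bfSigma)\bfPhi_t$, and since $\bfPhi_t$ is square and full rank we may simplify $\bfPhi_t(\bfPhi_t^\top\bfPhi_t)^{-1}=\bfPhi_t^{-\top}$, so iteration~\eqref{eq.sym-scaled-gd} reduces to $\bfPhi_{t+1}=\bfPhi_t-\eta(\bfPhi_t\bfPhi_t^\top-\bfSigma)\bfPhi_t^{-\top}$. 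Setting $\M_t:=\bfPhi_t\bfPhi_t^\top$ and expanding, the two cross terms collapse to $-2\eta(\M_t-\bfSigma)$, yielding the central matrix-Newton recursion
\begin{equation}\label{eq.plan-recursion}
\M_{t+1}-\bfSigma \;=\; (1-2\eta)(\M_t-\bfSigma) \;+\; \eta^2(\M_t-\bfSigma)\M_t^{-1}(\M_t-\bfSigma).
\end{equation}
The second bullet of Lemma~\ref{lemma.sym-ep.ir} simultaneously supplies the lower bound $\sigma_r(\M_t)\gtrsim\sigma_r(\A)=1/\kappa$, which keeps $\M_t^{-1}$ uniformly bounded.

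For \textbf{Phase~1}, I would take the Frobenius norm in~\eqref{eq.plan-recursion} and use $\|\M_t^{-1}\|\leq{\cal O}(\kappa)$ together with the crude bound $\|\M_t-\bfSigma\|_\fro\leq{\cal O}(\|\A\|_\fro)\leq{\cal O}(\sqrt{r})$. With $\eta={\cal O}(1/(\kappa^3\|\A\|_\fro))$, these estimates are exactly tuned so that the quadratic term is absorbed into the linear one, giving the contraction $\|\M_{t+1}-\bfSigma\|_\fro\leq(1-\eta)\|\M_t-\bfSigma\|_\fro$. Driving the error from ${\cal O}(\sqrt{r})$ down to ${\cal O}(1/\kappa^2)$ then takes $T_1={\cal O}(\log(\kappa^2\sqrt{r})/\eta)={\cal O}(\kappa^3\sqrt{r}\log\kappa)$ iterations, matching the stated bound. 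For \textbf{Phase~2}, setting $\eta=1/2$ annihilates the linear term in~\eqref{eq.plan-recursion}, so submultiplicativity yields $\|\M_{t+1}-\bfSigma\|_\fro\leq\tfrac14\|\M_t^{-1}\|\,\|\M_t-\bfSigma\|_\fro^2$. Weyl's inequality combined with $\|\M_{T_1}-\bfSigma\|\leq\|\M_{T_1}-\bfSigma\|_\fro\leq{\cal O}(1/\kappa^2)$ gives $\sigma_r(\M_{T_1})\geq 1/\kappa-{\cal O}(1/\kappa^2)\geq 1/(2\kappa)$, hence $\|\M_{T_1}^{-1}\|\leq 2\kappa$. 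Setting $u_k:=\kappa\|\M_{T_1+k}-\bfSigma\|_\fro$ reduces the recursion to $u_{k+1}\leq u_k^2/2$ with $u_0={\cal O}(1/\kappa)\ll 1$, so $u_k\leq u_0^{2^k}$ and $\epsilon$-accuracy is reached after a further ${\cal O}(\log\log(1/(\kappa\epsilon)))$ iterations.

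The main obstacle I anticipate is propagating the lower bound $\sigma_r(\M_t)\gtrsim 1/\kappa$ across the learning-rate switch. Lemma~\ref{lemma.sym-ep.ir} is stated for a single fixed $\eta$ starting from $t=0$, so in Phase~2 the bound $\sigma_r(\M_t)\geq 1/(2\kappa)$ must be re-established under the new step size. Fortunately, Weyl's inequality combined with the quadratic contraction shows that the error shrinks monotonically once $\sigma_r(\M_{T_1})\geq 1/(2\kappa)$ holds, so the lower bound persists inductively through Phase~2 and the quadratic rate is preserved. A secondary bookkeeping step is controlling $\|\M_0-\bfSigma\|_\fro$ via Gaussian concentration on $\bfOmega$ to verify the initial error assumed by Phase~1, and combining with Lemma~\ref{lemma.sym-init} for $\rank(\X_0)=r$ accounts for the ``w.h.p.\ over the initialization'' qualifier.
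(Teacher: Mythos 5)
Your proposal is correct and follows essentially the same route as the paper's proof. You collapse the dynamics to the $r\times r$ recursion on $\M_t=\bfPhi_t\bfPhi_t^\top$ exactly as the paper does with its $\B_t$; your Newton-form identity $\M_{t+1}-\bfSigma=(1-2\eta)(\M_t-\bfSigma)+\eta^2(\M_t-\bfSigma)\M_t^{-1}(\M_t-\bfSigma)$ is algebraically equivalent to the paper's $\B_{t+1}=(1-\eta)^2\B_t+2\eta(1-\eta)\bfSigma+\eta^2\bfSigma\B_t^{-1}\bfSigma$ (and to its $\eta=1/2$ normalized form $\C_{t+1}-\I_r=\tfrac14\C_t^{-1}(\C_t-\I_r)^2$), and the two-phase step-size schedule and the use of Lemma~\ref{lemma.sym-ep.ir} to keep $\sigma_r(\M_t)$ away from zero match the paper. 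The only substantive deviation is cosmetic: you re-establish the lower bound $\sigma_r(\M_t)\gtrsim1/\kappa$ through Phase~2 by Weyl's inequality plus monotone quadratic contraction, whereas the paper re-invokes Lemma~\ref{lemma.sym-ep.ir} from the switch time $T_2$ onward; your version even gives a slightly sharper quadratic constant ($\kappa/2$ vs.\ the paper's $3\kappa^2/4$), but both suffice for the stated ${\cal O}(\log\log(1/(\kappa\epsilon)))$ bound.
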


Theorem \ref{thm.sym-ep-full} establishes that global optimality of \eqref{eq.sym} is attained by ScaledGD within ${\cal O}(\kappa^3 \sqrt{r}\log \kappa + \log \log \frac{1}{\kappa\epsilon})$ iterations. ScaledGD first converges to a local region satisfying $\| \X_t \X_t^\top - \A \|_\fro \leq {\cal O}(\frac{1}{\kappa^2})$ linearly, after which a quadratic rate can be granted. This is, to the best of our knowledge, the first quadratic rate for symmetric matrix factorization \eqref{eq.sym}. Interestingly, it is achieved without requiring (exact) Hessian on a nonconvex and nonsmooth problem. Note that the high probability in Theorem \ref{thm.sym-ep-full} can be equivalently stated as the requirement on $\rank(\X_0) = r$.
A graphical illustration of this quadratic rate can be found in Fig. \ref{fig.sym} (a) using synthetic data detailed in Apdx. \ref{apdx.sec.synthetic}. It is observed that ScaledGD with \Nystrom initialization outperforms linearly converging algorithms such as GD and ScaledGD with small initialization.
Moreover, it is worth emphasizing that Theorem \ref{thm.sym-ep-full} has no requirement on the magnitude of \Nystrom initialization -- it does not need $\xi$ in \eqref{eq.sym-init} to be small. Compared with a small initialization, i.e., $\X_0 \approx \mathbf{0}$, this avoids escaping from the stationary point $\mathbf{0}$. The convergence of ScaledGD under various choices of $\xi$ can be found in (the solid lines of) Fig. \ref{fig.sym}(b).

\textbf{The critical role of initialization.} 
As shown in Lemma \ref{lemma.sym-ep.ir}, \Nystrom initialization aligns $\X_t$ to the directions of eigenvectors $\Q$, thereby eliminating the residual space, i.e., $(\I - \Q\Q^\top)\X_t = \mathbf{0}, \forall t$. This is in stark contrast with most of existing works \citep{du2018, ye2021, jia2023}, where small initialization only guarantees that $\| (\I - \Q\Q^\top)\X_t \|_\fro $ converges to $0$ at a linear rate. By getting rid of the residual space via  \Nystrom initialization, ScaledGD can achieve a quadratic rate. We graphically illustrate this point in Fig. \ref{fig.sym} (b), where we perturb \Nystrom initialization slightly to inject noise into the residual space. Reflected in the dotted lines, even if the noise is so small such that the earlier convergence does not differ from \Nystrom initialization, only a linear convergence can be observed for the perturbed initialization.

\textbf{Extensions to the case of over-parametrization.} \Nystrom initialization is further extended to cope with over-parametrized case ($r > r_A$) in Apdx. \ref{apdx.sec.sym-op}.
For this specific setup, we slightly modify ScaledGD by substituting the possibly non-invertible $(\X_t^\top\X_t)^{-1}$ in \eqref{eq.sym-scaled-gd} with $(\X_t^\top\X_t)^{\dagger}$; see \eqref{eq.sym-op-iter}. 
Unlike previous works \citep{xu2023,zhang2021preconditioned}, our modification requires no damping parameters thanks to our \Nystrom initialization. 
This leads to, as far as we know, the first quadratic rate for over-parametrized problems. Additional numerical experiments on over-parametrized problems are provided in Fig. \ref{fig.sym-op} in appendix to validate the established quadratic rate.

\begin{figure}[t]
	\centering
	\begin{tabular}{ccc}
		\hspace{-0.2cm}
		\includegraphics[width=.32\textwidth]{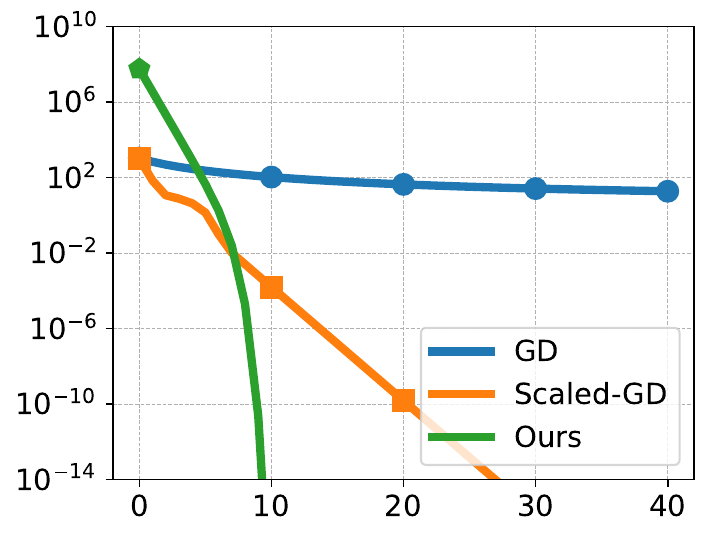}&
		\hspace{-0.2cm}
		\includegraphics[width=.32\textwidth]{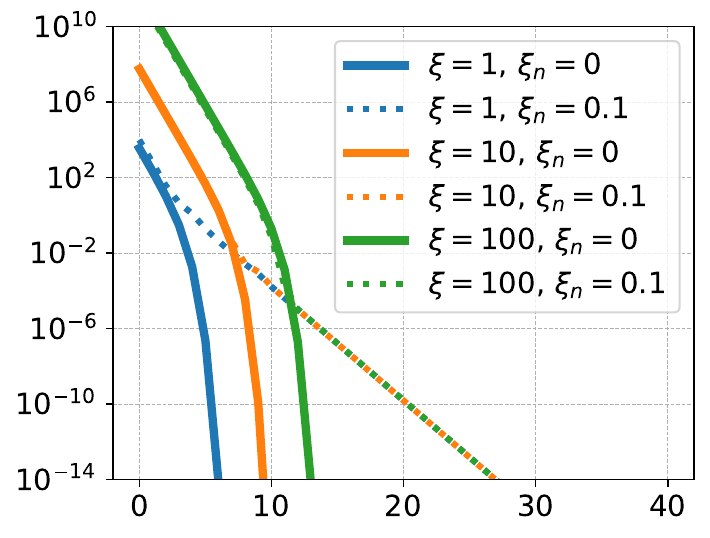}&
		\hspace{-0.2cm}
		\includegraphics[width=.295\textwidth]{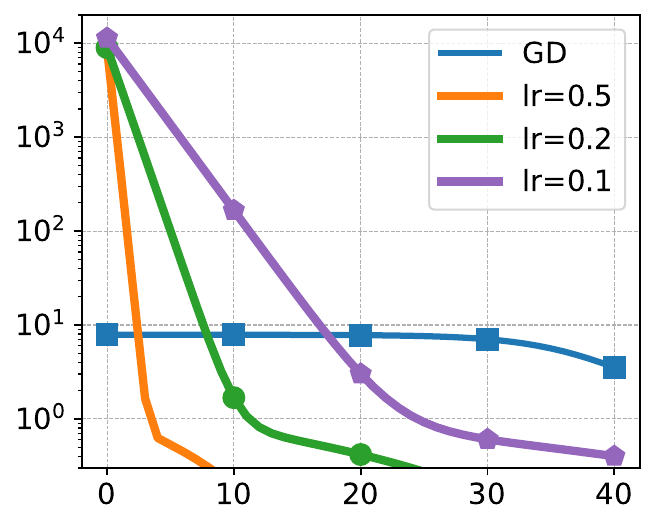}
		\\ 
	   (a) exact-parametrization & (b) exact-parametrization & (c) under-parametrization
	 \end{tabular}
	\vspace{-0.2cm}
	\caption{Convergence of ScaledGD under \Nystrom initialization (optimality error vs. iteration) in different settings. (a) Comparison of GD, and ScaledGD with small / \Nystrom initialization (ours). (b) Solid lines show that our initialization is not sensitive to magnitude of $\xi$; and dotted lines illustrate that quadratic convergence cannot be obtained after perturbing the initialization, i.e., $\X_0 = \A \bfOmega + \mathbf{N}$, where $[\mathbf{N}]_{ij}~\sim {\cal N}(0, \xi_n^2)$. (c) Comparison of ScaledGD under \Nystrom initialization with various $\eta$.}
	\vspace{-0.3cm}
	 \label{fig.sym}
\end{figure}

\subsection{\Nystrom initialization in the under-parametrized setting}\label{sec.sym-up}

Next, we consider the under-parametrized case of \eqref{eq.sym}, i.e., $r <r_A$. To the best of our knowledge, only asymptotic convergence is established for GD on such problems \citep{du2018}. This is partially because that even the local PL condition is challenging to be verified. With \Nystrom initialization, we will show that ScaledGD converges under a slightly weaker criterion. 

\begin{definition}[Weak optimality]\label{def.wo}
Matrix $\X \in \mathbb{R}^{m \times r}$ is weakly optimal to \eqref{eq.sym} if $ \X^\top \A^{\dagger} \X - \I_r  =\mathbf{0}$.
\end{definition}

Our first result characterizes that all global optima are also weakly optimal. In other words, if weak optimality is ensured, this algorithm has a chance to reach a global optimum as well. 

\begin{lemma}\label{lemma.sym-up-weak-opt}
    All globally optimal solutions to \eqref{eq.sym} are also weakly optimal.
\end{lemma}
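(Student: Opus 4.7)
The plan is to use the Eckart--Young--Mirsky theorem to pin down the structure of global optima and then plug them into the weak optimality condition.

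First I would recall that since $\A$ is PSD with compact eigendecomposition $\A = \Q \bfSigma \Q^\top$ where $\Q \in \mathbb{R}^{m \times r_A}$ has orthonormal columns and $\bfSigma = \mathrm{diag}(\sigma_1(\A), \ldots, \sigma_{r_A}(\A))$ with all entries strictly positive, the best rank-$r$ PSD approximation to $\A$ in Frobenius norm is $\A_r := \Q_r \bfSigma_r \Q_r^\top$, where $\Q_r$ collects the first $r$ columns of $\Q$ and $\bfSigma_r = \mathrm{diag}(\sigma_1(\A), \ldots, \sigma_r(\A))$. Since $\X\X^\top$ always lies in the cone of PSD matrices of rank at most $r$, any global optimum $\X^\star$ of \eqref{eq.sym} must satisfy $\X^\star(\X^\star)^\top = \A_r$ (assuming $\sigma_r(\A) > \sigma_{r+1}(\A)$; the degenerate case can be absorbed by choosing any valid top-$r$ eigenbasis). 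It follows that $\X^\star$ admits the parametric form $\X^\star = \Q_r \bfSigma_r^{1/2} \R$ for some orthogonal matrix $\R \in \mathbb{R}^{r \times r}$.

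Next I would write the pseudo-inverse as $\A^\dagger = \Q \bfSigma^{-1} \Q^\top$ and carry out the substitution
\begin{align*}
(\X^\star)^\top \A^\dagger \X^\star = \R^\top \bfSigma_r^{1/2} \, \Q_r^\top \Q \bfSigma^{-1} \Q^\top \Q_r \, \bfSigma_r^{1/2} \R.
\end{align*}
Since $\Q_r$ consists of the leading $r$ columns of $\Q$, the product $\Q_r^\top \Q$ equals the $r \times r_A$ matrix $[\I_r \; \mathbf{0}]$, whence $\Q_r^\top \Q \bfSigma^{-1} \Q^\top \Q_r = \bfSigma_r^{-1}$. Plugging this back gives $\R^\top \bfSigma_r^{1/2} \bfSigma_r^{-1} \bfSigma_r^{1/2} \R = \R^\top \R = \I_r$, which is exactly the weak optimality condition in Definition \ref{def.wo}.

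There is not really a main obstacle; the only subtlety is the non-uniqueness of global optima. The $\R$-ambiguity is handled automatically because it drops out once we compute $\R^\top \R$. The only potentially delicate point is when $\sigma_r(\A) = \sigma_{r+1}(\A)$, in which case the top-$r$ eigenbasis $\Q_r$ is not unique; I would handle this by noting that for any valid choice of $\Q_r$ (any orthonormal basis of an $r$-dimensional invariant subspace spanned by eigenvectors whose eigenvalues achieve the top-$r$ truncation), the identity $\Q_r^\top \Q \bfSigma^{-1} \Q^\top \Q_r = \bfSigma_r^{-1}$ continues to hold, so the argument is unaffected.
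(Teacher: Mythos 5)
Your proof is correct and follows essentially the same route as the paper: characterize the global optima as $\Q_r\bfSigma_r^{1/2}\R$ (up to rotation), then substitute into $\X^\top\A^\dagger\X$ and use the orthonormality of the leading eigenvectors to collapse the product to $\I_r$. The only difference is cosmetic — you explicitly invoke Eckart--Young--Mirsky and discuss the degenerate case $\sigma_r(\A)=\sigma_{r+1}(\A)$, whereas the paper simply asserts the form of the optimum without elaboration.
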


We then focus on the convergence of ScaledGD to weak optimality. In the case of under-parametrization, \Nystrom initialization also aligns $\X_t$ to the directions of eigenvectors of $\A$.

\begin{lemma}\label{lemma.sym-up-x-equiv}
If ScaledGD in \eqref{eq.sym-scaled-gd} is equipped with \Nystrom initialization \eqref{eq.sym-init}, one can write $\X_t = \Q \bfPhi_t, \forall t$ for some $\bfPhi_t \in \mathbb{R}^{r_A \times r}$.
\end{lemma}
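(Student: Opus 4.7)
The plan is a direct induction on $t$, exploiting the fact that Nyström initialization places $\X_0$ entirely in the column space of $\Q$ and that the ScaledGD update preserves this column-space inclusion. The base case is an immediate computation: from \eqref{eq.sym-init} and the compact eigendecomposition $\A = \Q\bfSigma\Q^\top$,
\begin{equation*}
    \X_0 = \A\bfOmega = \Q\bfSigma\Q^\top\bfOmega = \Q\bfPhi_0, \qquad \bfPhi_0 := \bfSigma\Q^\top\bfOmega \in \mathbb{R}^{r_A\times r},
\end{equation*}
so the claim holds at $t=0$ with an explicit expression for $\bfPhi_0$.

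For the inductive step, I would assume $\X_t = \Q\bfPhi_t$ for some $\bfPhi_t \in \mathbb{R}^{r_A\times r}$ and use $\Q^\top \Q = \I_{r_A}$ to rewrite each factor appearing in the ScaledGD update \eqref{eq.sym-scaled-gd}. Concretely,
\begin{equation*}
    \X_t\X_t^\top - \A = \Q\bigl(\bfPhi_t\bfPhi_t^\top - \bfSigma\bigr)\Q^\top,\quad (\X_t\X_t^\top - \A)\X_t = \Q\bigl(\bfPhi_t\bfPhi_t^\top - \bfSigma\bigr)\bfPhi_t,\quad \X_t^\top\X_t = \bfPhi_t^\top\bfPhi_t.
\end{equation*}
Substituting these into \eqref{eq.sym-scaled-gd} factors a common $\Q$ to the left, yielding $\X_{t+1} = \Q\bfPhi_{t+1}$ with
\begin{equation*}
    \bfPhi_{t+1} = \bfPhi_t - \eta\bigl(\bfPhi_t\bfPhi_t^\top - \bfSigma\bigr)\bfPhi_t(\bfPhi_t^\top\bfPhi_t)^{-1},
\end{equation*}
which closes the induction. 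As a byproduct, the proof produces a reduced ScaledGD-type recursion purely in terms of $\bfPhi_t$ and $\bfSigma$, which will presumably be the workhorse for the subsequent convergence arguments in Section \ref{sec.sym-up}.

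The one place where care is needed is ensuring that the preconditioner $(\X_t^\top\X_t)^{-1} = (\bfPhi_t^\top\bfPhi_t)^{-1}$ is well-defined at every iteration, since in the under-parametrized regime $r < r_A$ and one only has $\rank(\bfPhi_0) = r$ w.h.p. from Lemma \ref{lemma.sym-init}. I expect this to be the main (and essentially only) obstacle. The natural remedy is to carry along the invariant $\rank(\bfPhi_t) = r$ as an additional clause in the induction, arguing that since $\bfPhi_{t+1}$ is obtained from $\bfPhi_t$ by a perturbation whose rank-$r$ structure is controlled by $(\bfPhi_t^\top\bfPhi_t)^{-1}$, a spectral lower bound on $\sigma_r(\bfPhi_t)$ propagates — analogously to the eigenvalue tracking in Lemma \ref{lemma.sym-ep.ir}, though here one must be more delicate because $\bfSigma$ is $r_A\times r_A$ and $\bfPhi_t\bfPhi_t^\top$ is rank-deficient on $\mathbb{R}^{r_A}$. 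If a clean lower bound on $\sigma_r(\bfPhi_t)$ is not transparent, an alternative is to state the conclusion conditionally (on invertibility holding at each step) and defer the quantitative rank guarantee to the convergence theorem that follows this lemma.
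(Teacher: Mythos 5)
Your proposal is correct and follows essentially the same route as the paper: identify $\bfPhi_0 = \bfSigma\Q^\top\bfOmega$ for the base case, factor a left $\Q$ out of the ScaledGD update using $\Q^\top\Q = \I_{r_A}$ to obtain the reduced recursion for $\bfPhi_{t+1}$, and defer the invertibility of $\bfPhi_t^\top\bfPhi_t$ to a separate spectral lower bound (the paper handles this in Lemma~\ref{lemma.up-lower-bound} via a lower bound on $\sigma_r(\B_t)$ with $\B_t = \bfPhi_t^\top\bfSigma^{-1}\bfPhi_t$), exactly as you anticipated in your final paragraph.
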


Lemma \ref{lemma.sym-up-x-equiv} shows that $(\I - \Q\Q^\top)\X_t = \mathbf{0}, \forall t$ also holds, namely, \Nystrom initialization eliminates the residual space. Building upon this, the convergence of ScaledGD can be established.

\begin{theorem}\label{thm.sym-up}
    The following holds w.h.p. for ScaledGD \eqref{eq.sym-scaled-gd} with \Nystrom initialization \eqref{eq.sym-init}:
	
	i) (Linear convergence to neighborhood of weak optima). If one chooses a constant $\eta \leq 1$, ScaledGD ensures that $\| \X_t^\top \A^{\dagger} \X_t - \I_r \|_\fro \leq {\cal O}(\eta r) + \epsilon$ in ${\cal O}(\log \frac{1}{\epsilon})$ iterations; or,
	
	ii) (Convergence to weak optima). Let $\eta = {\cal O}(\epsilon/r)$, weak optimality is ensured by ScaledGD after ${\cal O}(\frac{r}{\epsilon} \log\frac{1}{\epsilon})$ iterations, i.e., $\| \X_t^\top \A^{\dagger} \X_t - \I_r \|_\fro \leq \epsilon$.
\end{theorem}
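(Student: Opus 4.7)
The plan is to transfer ScaledGD on $\X_t$ into a recursion on the quantity of interest $\M_t := \X_t^\top \A^\dagger \X_t - \I_r$, and then to control a residual term via a Loewner-order argument on the smallest eigenvalue.

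By Lemma \ref{lemma.sym-up-x-equiv}, I write $\X_t = \Q \bfPhi_t$ for some $\bfPhi_t \in \mathbb{R}^{r_A \times r}$, so that $\M_t = \bfPhi_t^\top \bfSigma^{-1} \bfPhi_t - \I_r =: \B_t - \I_r$, and the ScaledGD update reduces to
\begin{equation*}
\bfPhi_{t+1} = \bfPhi_t - \eta(\bfPhi_t\bfPhi_t^\top - \bfSigma)\bfPhi_t(\bfPhi_t^\top\bfPhi_t)^{-1}.
\end{equation*}
A direct expansion of $\B_{t+1} = \bfPhi_{t+1}^\top \bfSigma^{-1} \bfPhi_{t+1}$, using the key identity $\bfPhi_t^\top \bfSigma^{-1}(\bfPhi_t\bfPhi_t^\top - \bfSigma)\bfPhi_t(\bfPhi_t^\top\bfPhi_t)^{-1} = \B_t - \I_r$, yields the clean recursion
\begin{equation*}
\M_{t+1} = (1-\eta)^2 \M_t + \eta^2 (\mathbf{D}_t - \I_r), \qquad \mathbf{D}_t := (\bfPhi_t^\top\bfPhi_t)^{-1}\bfPhi_t^\top \bfSigma \bfPhi_t (\bfPhi_t^\top\bfPhi_t)^{-1} \succeq 0.
\end{equation*}

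Next, I uniformly bound $\|\mathbf{D}_t - \I_r\|_\fro$. Since $\mathbf{D}_t \succeq 0$, the cleaned-up recursion implies the Loewner relation $\B_{t+1} \succeq (1-\eta)^2 \B_t + 2\eta(1-\eta)\I_r$, which induces the scalar recursion $\lambda_{\min}(\B_{t+1}) \geq (1-\eta)^2 \lambda_{\min}(\B_t) + 2\eta(1-\eta)$, with fixed point $\tfrac{2(1-\eta)}{2-\eta}$ that is bounded below by an absolute constant for $\eta \leq 1/2$. Combined with Lemma \ref{lemma.sym-init} guaranteeing $\lambda_{\min}(\B_0) > 0$ w.h.p., this yields $\lambda_{\min}(\B_t) \geq \Omega(1)$ for all $t$ past a warm-up of $\mathcal{O}(1/\eta)$ iterations. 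Using the reduced SVD $\bfPhi_t = \U_t \Lambda_t^{1/2} \V_t^\top$ to diagonalize, one computes $\mathbf{D}_t = \V_t \Lambda_t^{-1/2} \U_t^\top \bfSigma \U_t \Lambda_t^{-1/2} \V_t^\top$, giving $\|\mathbf{D}_t\| \leq \sigma_1(\A)/\sigma_r(\bfPhi_t^\top \bfPhi_t) \leq \kappa / \lambda_{\min}(\B_t) = \mathcal{O}(\kappa)$ post-warm-up, whence $\|\mathbf{D}_t - \I_r\|_\fro \leq \mathcal{O}(\kappa\sqrt{r})$.

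Unrolling the Frobenius-norm recursion $\|\M_{t+1}\|_\fro \leq (1-\eta)^2 \|\M_t\|_\fro + \eta^2 \mathcal{O}(\kappa\sqrt{r})$ then gives
\begin{equation*}
\|\M_t\|_\fro \leq (1-\eta)^{2t} \|\M_0\|_\fro + \mathcal{O}(\eta \kappa \sqrt{r}) = (1-\eta)^{2t} \|\M_0\|_\fro + \mathcal{O}(\eta r),
\end{equation*}
after absorbing $\kappa$ and $\sqrt{r}$ into the $\mathcal{O}$-notation. For part (i), with constant $\eta$, this yields $\|\M_t\|_\fro \leq \mathcal{O}(\eta r) + \epsilon$ in $\mathcal{O}(\log(1/\epsilon))$ iterations. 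For part (ii), setting $\eta = \Theta(\epsilon/r)$ shrinks the stationary neighborhood below $\epsilon$ at a total cost of $\mathcal{O}((r/\epsilon)\log(1/\epsilon))$ iterations. The main obstacle is the uniform lower bound on $\lambda_{\min}(\B_t)$: one must simultaneously control $\lambda_{\min}(\B_t)$ through the Loewner recursion and account for the brief warm-up phase, during which $\|\mathbf{D}_t\|$ can be temporarily large, without letting $\|\M_t\|_\fro$ blow up—the contraction factor $(1-\eta)^2$ acting on $\M_t$ at each step, together with Lemma \ref{lemma.sym-init}, makes this accounting work out up to the claimed asymptotic rates.
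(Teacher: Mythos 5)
Your overall scaffolding mirrors the paper's: you arrive at essentially the paper's recursion (the paper writes $\B_{t+1}-\I_r = (1-\eta)^2(\B_t-\I_r) - \eta^2\I_r + \eta^2\bfTheta_t^\top\bfSigma\bfTheta_t$, which is identical to your $\M_{t+1}=(1-\eta)^2\M_t+\eta^2(\mathbf{D}_t-\I_r)$ with $\mathbf{D}_t=\bfTheta_t^\top\bfSigma\bfTheta_t$), and your Loewner/Weyl lower bound on $\lambda_{\min}(\B_t)$ is the paper's Lemma \ref{lemma.up-lower-bound}. The gap is in how you control $\|\mathbf{D}_t-\I_r\|_\fro$.

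You bound $\|\mathbf{D}_t\|\le \sigma_1(\bfSigma)/\sigma_r(\bfPhi_t^\top\bfPhi_t)\le \kappa/\lambda_{\min}(\B_t)$ and pass to Frobenius norm via $\|\mathbf{D}_t\|_\fro\le\sqrt{r}\|\mathbf{D}_t\|$, getting $\mathcal{O}(\kappa\sqrt{r})$. You then write $\mathcal{O}(\eta\kappa\sqrt{r})=\mathcal{O}(\eta r)$ ``after absorbing $\kappa$ and $\sqrt{r}$ into the $\mathcal{O}$-notation.'' This last step is not legitimate: $\kappa\sqrt{r}$ is not $\mathcal{O}(r)$ for general $\kappa$, and the theorem (and the paper's explicit remark after it) claims a $\kappa$-free neighborhood $\mathcal{O}(\eta r)$. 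Your bound on $\|\mathbf{D}_t\|_\fro$ is too coarse to match the statement. The paper avoids $\kappa$ entirely via Lemma \ref{lemma.sym-ep-preparation}: setting $\mathbf{E}_t:=\bfSigma^{1/2}\bfTheta_t$ and $\mathbf{F}_t:=\bfSigma^{-1/2}\bfPhi_t$, one has $\mathbf{F}_t^\top\mathbf{E}_t=\I_r$ (a biorthogonality relation) and $\mathbf{F}_t^\top\mathbf{F}_t=\B_t$; since $\mathbf{E}_t$ and $\mathbf{F}_t$ share the same $r$-dimensional column space, $\sqrt{r}=\|\mathbf{F}_t^\top\mathbf{E}_t\|_\fro\ge\sigma_r(\mathbf{F}_t)\|\mathbf{E}_t\|_\fro$, which gives $\|\mathbf{D}_t\|_\fro\le\|\mathbf{E}_t\|_\fro^2\le r/\sigma_r(\B_t)=\mathcal{O}(r)$ once $\sigma_r(\B_t)\ge 1/3$. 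Substituting this sharper bound into your recursion gives the stated $\mathcal{O}(\eta r)+\epsilon$ neighborhood; without it, your argument only yields a $\kappa$-dependent neighborhood and hence does not prove the theorem as stated.

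Two smaller points worth tightening. First, the inequality $\sigma_r(\bfPhi_t^\top\bfPhi_t)\ge\lambda_{\min}(\B_t)/\kappa$ that you implicitly use should be justified (it follows from $\B_t=\bfPhi_t^\top\bfSigma^{-1}\bfPhi_t\preceq\kappa\,\bfPhi_t^\top\bfPhi_t$). Second, the paper sidesteps the warm-up bookkeeping entirely by choosing $\xi$ large enough that $\sigma_r(\B_0)\ge 1/3$, i.e.\ setting $T_1=0$; if you keep a genuine warm-up phase you need to bound $\|\M_t\|_\fro$ during it, which your sketch flags but does not resolve.
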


If one chooses a constant learning rate e.g, $\eta=0.1$, linear convergence can be established until reaching a neighboring area of a weakly optimal solution. The error $\| \X_t^\top \A^{\dagger} \X_t - \I_r \|_\fro = {\cal O}(\eta r)$ is low, given that $r$ is typically small in practice. A graphical illustration of this linear rate can be found in Fig. \ref{fig.sym} (c). On the other hand, if the learning rate is chosen according to the prescribed accuracy $\epsilon$, one can obtain a sublinear rate ${\cal O}(\frac{r}{\epsilon} \log\frac{1}{\epsilon})$ to exact weak optimality. These behaviors clearly indicate a step scheduling of learning rates (e.g., setting $\eta = 0.1, 0.01, \ldots$ every a few iterations) for both fast convergence and exact weak optimality in practice. 
It is also worth mentioning that the convergence under both choices of $\eta$ has no dependence on $\kappa$. This aligns with the presumption in previous works \citep{tong2021,jia2023} that ScaledGD performs well on ill-conditioned problems, providing the first rigorous justification for the under-parametrized setting. 

Finally, we show that even in the worst case, ScaledGD guarantees that $\X_t$ converges to a point that is adequately close to a global solution, and the relative distance is sublinear in $r$.

\begin{lemma}\label{lemma.sym-up-how-far-away-from-opt}
	Let $\Q_1$ be the first $r$ column on $\Q$, and $\bfSigma_1$ be the top-left $r \times r$ sub-block of $\bfSigma$. Denote an optimal solution to \eqref{eq.sym} as $\X_* = \Q_1 \bfSigma_1^{1/2}$. W.h.p. over the initialization, ScaledGD \eqref{eq.sym-scaled-gd} with \Nystrom initialization \eqref{eq.sym-init} ensures
	\begin{align*}
		\lim_{t\rightarrow \infty}\| \X_t - \X_* \|_\fro \leq {\cal O}(r^{3/4}).
	\end{align*}
\end{lemma}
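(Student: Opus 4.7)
The plan is to reduce to the rotated coordinates granted by Lemma \ref{lemma.sym-up-x-equiv}. Writing $\X_t = \Q \bfPhi_t$ with $\bfPhi_t \in \mathbb{R}^{r_A \times r}$ and noting that $\X_* = \Q_1 \bfSigma_1^{1/2} = \Q \bfPhi_*$ with $\bfPhi_* := \bigl(\begin{smallmatrix}\bfSigma_1^{1/2} \\ \mathbf{0}_{(r_A - r) \times r}\end{smallmatrix}\bigr)$, orthonormality of $\Q$ gives $\|\X_t - \X_*\|_\fro = \|\bfPhi_t - \bfPhi_*\|_\fro$. I would then invoke Theorem \ref{thm.sym-up}(ii) under a diminishing step-size schedule (consistent with the scheduling remark right after that theorem) so that any cluster point $\bfPhi_\infty$ of $\{\bfPhi_t\}$ satisfies the \emph{exact} weak-optimality condition $\bfPhi_\infty^\top \bfSigma^{-1}\bfPhi_\infty = \I_r$. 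Setting $\U_\infty := \bfSigma^{-1/2}\bfPhi_\infty \in \mathbb{R}^{r_A \times r}$, this rearranges to $\U_\infty^\top \U_\infty = \I_r$, i.e., $\U_\infty$ has orthonormal columns and $\bfPhi_\infty = \bfSigma^{1/2}\U_\infty$. The high-probability qualifier is inherited: Lemma \ref{lemma.sym-init} ensures $\rank(\X_0) = r$ w.h.p., which is exactly what feeds into Lemma \ref{lemma.sym-up-x-equiv} and Theorem \ref{thm.sym-up}.

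The bound itself then reduces to an elementary Frobenius-norm calculation. Let $\E := \bigl(\begin{smallmatrix}\I_r \\ \mathbf{0}\end{smallmatrix}\bigr) \in \mathbb{R}^{r_A \times r}$ so that $\bfPhi_* = \bfSigma^{1/2}\E$. Because $\sigma_1(\A) = 1$,
\begin{align*}
\|\bfPhi_\infty - \bfPhi_*\|_\fro^2 = \|\bfSigma^{1/2}(\U_\infty - \E)\|_\fro^2 \leq \sigma_1(\A)\,\|\U_\infty - \E\|_\fro^2 = \|\U_\infty - \E\|_\fro^2.
\end{align*}
Partitioning $\U_\infty = \bigl(\begin{smallmatrix}\U_{\text{top}} \\ \U_{\text{bot}}\end{smallmatrix}\bigr)$ with $\U_{\text{top}} \in \mathbb{R}^{r\times r}$, one expands $\|\U_\infty - \E\|_\fro^2 = \|\U_\infty\|_\fro^2 - 2\tr(\U_{\text{top}}) + r = 2r - 2\tr(\U_{\text{top}})$, using $\|\U_\infty\|_\fro^2 = \tr(\I_r) = r$. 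From $\U_\infty^\top\U_\infty = \I_r$ one has $\U_{\text{top}}^\top\U_{\text{top}} \preceq \I_r$, hence $\|\U_{\text{top}}\|_\fro \leq \sqrt{r}$, and Cauchy-Schwarz yields $|\tr(\U_{\text{top}})| \leq \sqrt{r}\,\|\U_{\text{top}}\|_\fro \leq r$. Combining, $\|\bfPhi_\infty - \bfPhi_*\|_\fro \leq 2\sqrt{r} = O(r^{1/2}) \leq O(r^{3/4})$, which is the claimed bound.

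The main obstacle is making the limit rigorous: for a fixed step-size $\eta$, Theorem \ref{thm.sym-up}(i) only ensures an $O(\eta r)$ residual in the weak-optimality criterion, so reaching exact weak optimality in the limit calls for either a decaying $\eta$ or a subsequential/compactness argument that extracts a cluster point carrying the desired identity; boundedness of $\|\bfPhi_t\|_\fro$ along the trajectory (which follows from the same energy estimates underpinning Theorem \ref{thm.sym-up}) supplies the compactness needed to apply Bolzano-Weierstrass. A secondary subtlety is that the lemma targets the specific representative $\X_* = \Q_1\bfSigma_1^{1/2}$ rather than its full orbit under $r\times r$ orthogonal rotations (all of which are global optima); the ambient Frobenius estimate above is insensitive to this choice and therefore yields the same $O(r^{3/4})$ bound for the specific $\X_*$ stated in the lemma.
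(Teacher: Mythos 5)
Your proposal is correct and follows the same overall route as the paper: express $\X_t = \Q\bfPhi_t$ (Lemma \ref{lemma.sym-up-x-equiv}), pass to the weak-optimality identity $\bfPhi_\infty^\top\bfSigma^{-1}\bfPhi_\infty = \I_r$ (Theorem \ref{thm.sym-up}), and estimate $\|\bfPhi_\infty - \bfPhi_*\|_\fro$ in Frobenius norm. The two proofs differ only in packaging and in the sharpness of one trace inequality. The paper partitions $\bfPhi_t = (\M_t;\N_t)$ and $\bfSigma = \mathrm{diag}(\bfSigma_1,\bfSigma_2)$, then bounds $\|\M_t - \bfSigma_1^{1/2}\|_\fro$ and $\|\N_t\|_\fro$ separately, using $-\tr(\C) \le r\|\C\|_\fro$ (via the minimum diagonal entry) on the cross term; that inequality is crude and is what produces the $2r^{3/2}$ term and hence the $O(r^{3/4})$ in the statement. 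You instead set $\U_\infty := \bfSigma^{-1/2}\bfPhi_\infty$, observe it has orthonormal columns, and use the genuine Cauchy–Schwarz bound $|\tr(\U_{\text{top}})| = |\langle \I_r, \U_{\text{top}}\rangle| \le \sqrt{r}\|\U_{\text{top}}\|_\fro \le r$, which yields the strictly tighter estimate $\|\X_t - \X_*\|_\fro \le 2\sqrt{r} = O(r^{1/2})$. This is a genuine (albeit modest) improvement: the lemma as stated is correct but not sharp, and applying your Cauchy–Schwarz step in the paper's two-block decomposition would tighten its bound to $O(r^{1/2})$ as well. You are also more careful than the paper about the existence of the limit: the paper simply ``considers $t\to\infty$ with $\B_t = \I_r$'' without arguing convergence of $\X_t$ or of $\B_t$; your remark that the diminishing-step schedule from Theorem \ref{thm.sym-up}(ii) plus boundedness of $\{\bfPhi_t\}$ is what justifies extracting a cluster point satisfying exact weak optimality is the honest version of what the paper sweeps under the rug.
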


\section{The power of initialization for asymmetric matrix factorization}\label{sec.asym}

\subsection{Initialization and modified ScaledGD}\label{sec.asym-preliminary}
This section demonstrates that the power of initialization is even more striking in solving asymmetric matrix factorization than symmetric ones. Given $\A \in \mathbb{R}^{m \times n}$, consider the following problem
\begin{align}\label{eq.asym}
	\min_{\X \in \mathbb{R}^{m \times r}, \Y \in \mathbb{R}^{n \times r} }\frac{1}{2} \| \X \Y^\top - \A  \|_\fro^2.
\end{align}	
Denote $\rank(\A) = r_A$, and the compact SVD as $\A= \U \bfSigma \V^\top$, where $\U \in \mathbb{R}^{m \times r_A}$, $\bfSigma \in \mathbb{R}^{r_A \times r_A}$, and $\V \in \mathbb{R}^{n \times r_A}$. Similar to the previous section, we assume that $\sigma_1(\A) = 1$ and $\sigma_{r_A}(\A) = 1/\kappa$.

\textbf{\Nystrom initialization.} We adopt an asymmetric manner to initialize $\X_0$ and $\Y_0$ for \eqref{eq.asym}, i.e.,
\begin{tcolorbox}[colback=gray!10]
\vspace{-0.37cm}
\begin{align}\label{eq.asym-init}
	\textbf{\Nystrom initialization:}~~~ \X_0 = \A \bfOmega,  ~~~\Y_0 = \mathbf{0}
\end{align}
\end{tcolorbox}
\noindent where $\bfOmega$ is a Gaussian random matrix of $\mathbb{R}^{n \times r}$ with $ [\bfOmega]_{ij}\sim {\cal N}(0, \xi^2),\forall i, \forall j$. We can follow the same steps of Lemma \ref{lemma.sym-init} to show that $\X_0$ in \eqref{eq.asym-init} is rank $r$ w.h.p. in exact- and under-parametrized settings. 
Moreover, there is no requirement on the magnitude of $\xi$, meaning that it is possible to start far from the saddle point $(\mathbf{0}, \mathbf{0})$. 
This asymmetry of $\X_0$ and $\Y_0$ in \eqref{eq.asym-init} is in contrast with small initialization which typically induces $\| \X_0 \|_\fro \approx \| \Y_0 \|_\fro$ \citep{du2018, jia2023}. The merits will become clear shortly.
Note that AltGD \citep{ward2023} also adopts sketch at initialization, i.e., $\X_0 = {\cal O}(\A\bfOmega_1/\sigma_1(\A))$ and $\Y_0 = {\cal O}(\sigma_1(\A) \bfOmega_2)$, where $\bfOmega_1$ and $\bfOmega_2$ are Gaussian random matrices. Besides the requirement on small variance of $\bfOmega_1$ and $\bfOmega_2$ and the explicit need of $\sigma_1(\A)$, this initialization cannot eliminate the residual space. Consequently, AltGD demands early stopping in exact- and over-parametrized problems, and little is known for under-parametrized case.

\textbf{Modified ScaledGD.} To adapt to the non-invertible $\Y_0^\top \Y_0  = \mathbf{0}$ in \Nystrom initialization \eqref{eq.asym-init}, we modify the first iteration of ScaledGD. More precisely, the updates are summarized below
\begin{subequations}\label{eq.asym-iter}
	\begin{align}
		& \X_1  = \X_0, ~\text{and}~ \X_{t+1} = \X_t - \eta (\X_t \Y_t^\top - \A) \Y_t (\Y_t^\top \Y_t)^{-1}, \forall t \geq 1;  \\
		& \Y_{t+1} = \Y_t - \eta (\X_t \Y_t^\top - \A)^\top \X_t (\X_t^\top \X_t)^{-1}, \forall t \geq 0.
	\end{align}
\end{subequations}

\subsection{\Nystrom initialization in the exact-parametrized setting}

We start with the exact-parametrized case, i.e., $r_A = r$ in \eqref{eq.asym}. The benefit of \Nystrom initialization \eqref{eq.asym-init} for iteration \eqref{eq.asym-iter} is again the alignment of $\X_t$ and $\Y_t$ to the directions of singular vectors. 
\begin{lemma}\label{lemma.asym-ep-space-alignment}
	The modified ScaledGD in \eqref{eq.asym-iter} under \Nystrom initialization \eqref{eq.asym-init} guarantees that $\X_t = \U \bfPhi_t$ and $\Y_t = \V \bfPsi_t$, $\forall t \geq 0$ for some $\bfPhi_t \in \mathbb{R}^{r \times r}$ and $\bfPsi_t \in \mathbb{R}^{r \times r}$. 
\end{lemma}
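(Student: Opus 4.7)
The plan is a direct induction on $t$, exploiting the compact SVD $\A = \U \bfSigma \V^\top$ together with $\U^\top \U = \I_r$ and $\V^\top \V = \I_r$ (which hold since $r_A = r$). For the base case, the initialization immediately gives $\X_0 = \A \bfOmega = \U (\bfSigma \V^\top \bfOmega)$, so $\bfPhi_0 := \bfSigma \V^\top \bfOmega$, and $\Y_0 = \mathbf{0} = \V \cdot \mathbf{0}$, so $\bfPsi_0 := \mathbf{0}$. Lemma \ref{lemma.sym-init} (reused in this asymmetric setting) guarantees that $\bfPhi_0$ is full rank w.h.p., which will be essential for the very first preconditioned update.

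The first inductive step is unusual because the ordinary $\X$-update is skipped: the rule $\X_1 = \X_0$ is used precisely to avoid inverting $\Y_0^\top \Y_0 = \mathbf{0}$. For $\Y_1$, I would plug in $\Y_0 = \mathbf{0}$ and $\X_0 = \U\bfPhi_0$ to obtain
\begin{align*}
\Y_1 \;=\; \eta \A^\top \X_0 (\X_0^\top \X_0)^{-1} \;=\; \eta \V \bfSigma \U^\top \U \bfPhi_0 (\bfPhi_0^\top \bfPhi_0)^{-1} \;=\; \V \bigl(\eta \bfSigma \bfPhi_0 (\bfPhi_0^\top \bfPhi_0)^{-1}\bigr),
\end{align*}
so $\bfPsi_1 := \eta \bfSigma \bfPhi_0 (\bfPhi_0^\top \bfPhi_0)^{-1}$, and the inverse exists by the base-case rank of $\bfPhi_0$.

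For the generic inductive step with $t \geq 1$, assuming $\X_t = \U \bfPhi_t$ and $\Y_t = \V \bfPsi_t$, I would simply substitute and cancel. The residual factors as $\X_t \Y_t^\top - \A = \U(\bfPhi_t \bfPsi_t^\top - \bfSigma)\V^\top$, so $(\X_t\Y_t^\top - \A)\Y_t = \U(\bfPhi_t \bfPsi_t^\top - \bfSigma)\bfPsi_t$ and $\Y_t^\top \Y_t = \bfPsi_t^\top \bfPsi_t$, yielding
\begin{align*}
\X_{t+1} \;=\; \U\bigl[\bfPhi_t - \eta(\bfPhi_t \bfPsi_t^\top - \bfSigma)\bfPsi_t (\bfPsi_t^\top \bfPsi_t)^{-1}\bigr] \;=:\; \U \bfPhi_{t+1}.
\end{align*}
An entirely symmetric computation for the $\Y$-update gives $\Y_{t+1} = \V \bfPsi_{t+1}$ with $\bfPsi_{t+1} = \bfPsi_t - \eta(\bfPsi_t \bfPhi_t^\top - \bfSigma)\bfPhi_t (\bfPhi_t^\top \bfPhi_t)^{-1}$. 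Each step uses only $\U^\top \U = \V^\top \V = \I_r$ to collapse the inner product matrices, so the algebra is routine.

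The only non-routine point is well-posedness of the preconditioners along the trajectory, i.e., that $\bfPhi_t$ and $\bfPsi_t$ remain invertible for $t \geq 1$ so that the matrices $\X_t^\top \X_t = \bfPhi_t^\top \bfPhi_t$ and $\Y_t^\top \Y_t = \bfPsi_t^\top \bfPsi_t$ in \eqref{eq.asym-iter} can be inverted. This is where I expect the main subtlety to lie: the base rank of $\bfPhi_0$ comes from Lemma \ref{lemma.sym-init} (applied to $\X_0 = \A\bfOmega$), and the rank of $\bfPsi_1$ follows from the explicit formula $\bfPsi_1 = \eta \bfSigma \bfPhi_0 (\bfPhi_0^\top \bfPhi_0)^{-1}$ since $\bfSigma$ is invertible. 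Propagation of invertibility to larger $t$ is handled by subsequent lemmas in the paper (which establish stronger quantitative bounds on $\sigma_r(\bfPhi_t)$ and $\sigma_r(\bfPsi_t)$); for the statement of Lemma \ref{lemma.asym-ep-space-alignment} itself, it suffices to record the structural identity and note that the updates are well-defined whenever these preconditioners are invertible.
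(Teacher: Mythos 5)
Your proposal is correct and matches the paper's proof essentially step for step: direct induction using the compact SVD $\A = \U\bfSigma\V^\top$, the base case $\bfPhi_0 = \bfSigma\V^\top\bfOmega$, $\bfPsi_0 = \mathbf{0}$, and cancellation via $\U^\top\U = \V^\top\V = \I_r$ in the inductive step, with the invertibility of $\bfPsi_t^\top\bfPsi_t$ deferred to Corollary~\ref{coro.asym-ep} exactly as the paper does. The only cosmetic difference is that you spell out the explicit formula for $\bfPsi_1$, which the paper instead records inside the proof of Theorem~\ref{thm.asym-ep}.
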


Similar to the symmetric problems, the implication of Lemma \ref{lemma.asym-ep-space-alignment} is the elimination of residual space, i.e., $(\I - \U\U^\top)\X_t = \mathbf{0}$ and $(\I - \V\V^\top)\Y_t = \mathbf{0}$. This turns out to be even more beneficial for asymmetric problems, as it induces one-step convergence of ScaledGD.

\begin{theorem}[One-step convergence]\label{thm.asym-ep}
	With $\eta=1$ and \Nystrom initialization \eqref{eq.asym-init}, the modified ScaledGD \eqref{eq.asym-iter} guarantees $\X_1\Y_1^\top = \mathbf{A}$ w.h.p. over the initialization. In other words, global convergence is achieved in one step.
\end{theorem}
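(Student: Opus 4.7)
The plan is to simply unroll one step of the modified ScaledGD and observe that with $\Y_0=\mathbf{0}$ and $\eta=1$, the update for $\Y_1$ multiplied by $\X_1 = \X_0$ collapses into the orthogonal projector onto the column space of $\X_0$ applied to $\A$. Since $\X_0 = \A \bfOmega$ and (w.h.p.) $\rank(\X_0)=r=r_A$, the column space of $\X_0$ coincides with that of $\A$, so the projector acts as the identity on $\A$.

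More concretely, first I would write $\X_1 = \X_0$ from \eqref{eq.asym-iter} directly. Then, substituting $\Y_0 = \mathbf{0}$ into the $\Y$-update, I get
\begin{align*}
\Y_1 \;=\; \mathbf{0} - \eta (\X_0 \mathbf{0}^\top - \A)^\top \X_0 (\X_0^\top \X_0)^{-1} \;=\; \eta\, \A^\top \X_0 (\X_0^\top \X_0)^{-1}.
\end{align*}
Setting $\eta=1$ gives $\Y_1 = \A^\top \X_0 (\X_0^\top \X_0)^{-1}$, hence
\begin{align*}
\X_1 \Y_1^\top \;=\; \X_0 (\X_0^\top \X_0)^{-1} \X_0^\top \A.
\end{align*}
At this point I would invoke Lemma \ref{lemma.sym-init} (whose argument transfers verbatim to the asymmetric case) to conclude that $\sigma_r(\X_0) > 0$ w.h.p., so $\X_0^\top \X_0$ is invertible and $P_{\X_0} := \X_0 (\X_0^\top \X_0)^{-1} \X_0^\top$ is the orthogonal projector onto the column space of $\X_0$.

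The remaining step is to identify $\mathrm{col}(\X_0) = \mathrm{col}(\A)$. Since $\X_0 = \A \bfOmega$ we immediately have $\mathrm{col}(\X_0) \subseteq \mathrm{col}(\A)$; combined with $\rank(\X_0)=r=r_A=\rank(\A)$, this inclusion is an equality. Writing the compact SVD $\A = \U \bfSigma \V^\top$, this yields $P_{\X_0} = \U \U^\top$, and therefore
\begin{align*}
\X_1 \Y_1^\top \;=\; \U \U^\top \A \;=\; \U \U^\top \U \bfSigma \V^\top \;=\; \U \bfSigma \V^\top \;=\; \A,
\end{align*}
which completes the proof.

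There is essentially no analytical obstacle here: every step is a one-line matrix manipulation. The only subtlety worth emphasizing is that the entire argument hinges on the probabilistic statement $\rank(\A\bfOmega)=r$, which is precisely what the ``w.h.p.\ over the initialization'' clause in the theorem refers to, and which is exactly the content (transferred from the symmetric setting) of Lemma \ref{lemma.sym-init}. This makes transparent the two roles that \Nystrom initialization plays in the exact-parametrized asymmetric case: $\Y_0 = \mathbf{0}$ turns the $\Y$-update into a least-squares step for $\A$ against $\X_0$, while $\X_0 = \A\bfOmega$ ensures that this least-squares problem has zero residual because $\mathrm{col}(\X_0)$ already captures $\mathrm{col}(\A)$.
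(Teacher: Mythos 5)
Your proof is correct, and it is essentially the same one-step unrolling the paper performs; the only difference is the coordinate system in which the final cancellation is carried out. The paper first invokes Lemma~\ref{lemma.asym-ep-space-alignment} to pass to the reduced $r\times r$ coordinates $\X_t = \U\bfPhi_t$, $\Y_t = \V\bfPsi_t$, computes $\bfPhi_1=\bfPhi_0$ and $\bfPsi_1 = \eta\bfSigma\bfPhi_0^{-\top}$, and concludes $\bfPhi_1\bfPsi_1^\top = \eta\bfSigma$; you instead stay in the ambient space and recognize $\X_0(\X_0^\top\X_0)^{-1}\X_0^\top$ as the orthogonal projector onto $\mathrm{col}(\X_0)$, which equals $\U\U^\top$ once $\mathrm{col}(\X_0)=\mathrm{col}(\A)$ is established from $\X_0 = \A\bfOmega$ and $\rank(\X_0)=r=r_A$. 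The two routes use exactly the same inputs (one step of the modified update with $\Y_0=\mathbf{0}$, $\eta=1$, and the high-probability full-rankness of $\X_0$), but yours avoids the explicit $\bfPhi/\bfPsi$ reduction and makes the least-squares/projection structure of the $\Y$-update transparent, which is a pleasant alternative presentation. One small caveat: the paper's version, by going through $\bfPhi_t,\bfPsi_t$, also sets up notation that is reused for Corollary~\ref{coro.asym-ep}, so the reduced-coordinate route has a secondary purpose in the paper beyond this theorem alone.
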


Theorem \ref{thm.asym-ep} has a fundamental implication, that is, optimization is also a competitive tool for matrix factorization. This is because ScaledGD is the first optimization approach to share the same ${\cal O}(mnr)$ complexity as (compact) SVD given $r \leq \min\{m, n\}$. Comparing to symmetric matrix factorization (cf. Theorem \ref{thm.sym-ep-full}), Theorem \ref{thm.asym-ep} suggests that problem \eqref{eq.asym} requires less iterations to be solved owing to the asymmetry of $\X_0$ and $\Y_0$ at initialization \eqref{eq.asym-init}. 
This partially agrees with results in \citep{xiong2023over}, which illustrate the benefit of asymmetry in Burer-Monterio factorization for matrix sensing.

Lastly, we present a result that may be of independent interest -- the asymmetric and symmetric problems are interconnected under our \Nystrom initialization. This link is made clear in the proof of the following corollary (to Theorem \ref{thm.sym-ep-full}), which states that ScaledGD admits quadratic convergence under different choices of step sizes. 

\begin{corollary}[Quadratic convergence]\label{coro.asym-ep}
	With \Nystrom initialization \eqref{eq.asym-init} and different choices of step sizes, modified ScaledGD in \eqref{eq.asym-iter} has a similar behavior as Theorem \ref{thm.sym-ep-full}, that is, a two-phase behavior w.h.p. over the initialization:
	\vspace{-0.2cm}
	\begin{enumerate}[leftmargin=0.4cm, itemsep=0mm]
	\item[\textbullet] Phase 1 (linear convergence). Let $\eta= {\cal O}(\frac{1}{\kappa^3 \|\A \|_\fro}) $. After $T_1:= {\cal O}(\kappa^3 \sqrt{r}\log \kappa)$ iterations, ScaledGD ensures that $\| \X_{T_1} \Y_{T_1}^\top - \A \|_\fro \leq {\cal O}(1/\kappa^2)$. 
	
	\item[\textbullet] Phase 2 (quadratic convergence). After Phase I, ScaledGD converges quadratically with $\eta=0.5$. In particular, $\| \X_T \Y_T^\top - \A \|_\fro  \leq \epsilon$ is ensured after $T = {\cal O}\big( \log \log(\frac{1}{\kappa \epsilon}) \big)$ iterations.
	\end{enumerate}
	
\end{corollary}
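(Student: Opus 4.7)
The plan is to exploit the space alignment from Lemma \ref{lemma.asym-ep-space-alignment} to collapse the matrix iteration \eqref{eq.asym-iter} into an exactly scalar recursion, and then analyze that recursion in two phases that mirror the symmetric proof of Theorem \ref{thm.sym-ep-full}. By Lemma \ref{lemma.asym-ep-space-alignment}, $\X_t = \U\bfPhi_t$ and $\Y_t = \V\bfPsi_t$; setting $\M := \V^\top \bfOmega \in \mathbb{R}^{r\times r}$ (which is invertible w.h.p.\ by the argument of Lemma \ref{lemma.sym-init}), the \Nystrom initialization \eqref{eq.asym-init} yields $\bfPhi_0 = \bfSigma\M$, $\bfPsi_0 = \mathbf{0}$, and the modified first step gives $\bfPhi_1 = \bfSigma\M$ and $\bfPsi_1 = \eta\M^{-\top}$. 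A short induction then verifies that the ansatz $\bfPhi_t = a_t\,\bfSigma\,\M$ and $\bfPsi_t = b_t\,\M^{-\top}$ is preserved by \eqref{eq.asym-iter} for positive scalars $a_t, b_t$. This is the asymmetric counterpart of the eigen-space alignment in Lemma \ref{lemma.sym-ep.ir} and constitutes the ``interconnection'' with the symmetric case alluded to in the statement.

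Under this ansatz, $\bfPhi_t\bfPsi_t^\top = c_t\,\bfSigma$ with $c_t := a_tb_t$, so $\|\X_t\Y_t^\top - \A\|_\fro = |c_t - 1|\cdot \|\A\|_\fro$, and a direct computation using $\bfPhi_t^\top\bfPhi_t = a_t^2\M^\top\bfSigma^2\M$ and $\bfPsi_t^\top\bfPsi_t = b_t^2\M^{-1}\M^{-\top}$ (whose inverses cancel cleanly against the gradient factor $(c_t-1)\bfSigma$) collapses \eqref{eq.asym-iter} into the one-variable recursion
\[
c_{t+1} \;=\; (1-\eta)^2\, c_t + 2\eta(1-\eta) + \eta^2/c_t,
\]
equivalently $e_{t+1} = e_t\bigl[(1-\eta)^2 - \eta^2/(1+e_t)\bigr]$ for $e_t := c_t - 1$. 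As a sanity check, $\eta = 1$ produces $c_1 = 1$ and thereby recovers the one-step convergence of Theorem \ref{thm.asym-ep}.

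For Phase 1, with $\eta = \mathcal{O}(1/(\kappa^3\|\A\|_\fro))$ and $\|\A\|_\fro \leq \sqrt{r}$, the starting point $c_1 = \eta$ gives $e_1 \approx -1$. A first sub-stage of $\mathcal{O}(1/\eta) = \mathcal{O}(\kappa^3\sqrt{r})$ iterations grows $c_t$ by $\Theta(\eta)$ per step (the drift $2\eta(1-\eta) + \eta^2/c_t$ dominates when $c_t$ is small) until $|e_t|\leq 1/2$; the linearization $|e_{t+1}| \lesssim (1-2\eta)|e_t|$ then contracts $|e_{T_1}|$ below $\mathcal{O}(1/(\kappa^2\|\A\|_\fro))$ in a further $\mathcal{O}((\log\kappa)/\eta)$ steps, totalling $\mathcal{O}(\kappa^3\sqrt{r}\log\kappa)$ iterations and yielding $\|\X_{T_1}\Y_{T_1}^\top - \A\|_\fro \leq \mathcal{O}(1/\kappa^2)$. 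For Phase 2 with $\eta = 0.5$, the recursion simplifies to $e_{t+1} = \tfrac{1}{4}\,e_t^2/(1+e_t)$; since $|e_{T_1}|$ is already small, $|1+e_t|\geq 1/2$ is maintained throughout this phase, giving $|e_{t+1}| \leq \tfrac{1}{2}|e_t|^2$ and hence quadratic convergence. Unfolding gives $|e_{T_1+k}|$ iterated-squaring down to $\epsilon/\|\A\|_\fro$ in $\mathcal{O}(\log\log(1/(\kappa\epsilon)))$ additional iterations.

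I expect the main subtlety to be tracking the exact $\kappa$ and $\sqrt{r}$ dependence through the transient of Phase 1 (the interplay between the $c_t \mapsto c_t + 2\eta$ drift when $c_t$ is small and the linearized contraction after the turning point $|e_t| = 1/2$); the ansatz verification and the Phase 2 quadratic rate both follow mechanically once the scalar recursion is in hand, which is precisely what makes the proof a clean corollary of the two-phase mechanism underlying Theorem \ref{thm.sym-ep-full}.
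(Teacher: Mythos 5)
Your proof is correct and, interestingly, takes a genuinely different and more elementary route than the paper's. The paper's proof establishes only a \emph{weak} structural invariant: it shows by induction that $\B_t := \bfPhi_t\bfPsi_t^\top$ stays symmetric positive definite, observes that $\B_t$ obeys the identical matrix recursion $\B_{t+1}=(1-\eta)^2\B_t+2\eta(1-\eta)\bfSigma+\eta^2\bfSigma\B_t^{-1}\bfSigma$ as in the symmetric exact-parametrized case (cf.\ Theorem \ref{thm.sym-ep-full}), and then invokes the entire two-phase matrix analysis of Theorem \ref{thm.sym-ep-full} as a black box. You instead establish the much \emph{stronger} invariant that $\bfPhi_t = a_t\bfSigma\M$ and $\bfPsi_t = b_t\M^{-\top}$ for scalars $a_t,b_t>0$ (with $\M:=\V^\top\bfOmega$), so that $\B_t = c_t\bfSigma$ is always a scalar multiple of $\bfSigma$. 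This collapses the matrix dynamics to the one-variable recursion $c_{t+1}=(1-\eta)^2 c_t+2\eta(1-\eta)+\eta^2/c_t$, which is then analyzed directly. Your approach cleanly exposes \emph{why} the asymmetric case is so much more tractable than the symmetric one: the asymmetry $\Y_0=\mathbf 0$ forces $\B_1=\eta\bfSigma$ to be proportional to $\bfSigma$, a luxury absent in the symmetric problem where $\B_0=\bfPhi_0\bfPhi_0^\top$ is a generic PD matrix. The trade-off is modularity: the paper's proof instantiates a general lemma and requires no new calculation, whereas yours redoes Phase 1 and Phase 2 from scratch on the scalar dynamics (though those steps are short and transparent; your Phase 2 bound $|e_{t+1}|=|e_t|^2/(4|1+e_t|)\le |e_t|^2/2$ and the Phase 1 drift-then-contract argument both check out, with the $\mathcal{O}(\kappa^3\sqrt r)$ warm-up you include correctly accounting for the fact that $c_1=\eta$ starts far from $1$, matching the paper's own warm-up caveat in the proof of Theorem \ref{thm.sym-ep-full}). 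One small remark: the ansatz $\bfPhi_t=a_t\bfSigma\M$, $\bfPsi_t=b_t\M^{-\top}$ is slightly finer than needed; it suffices to track only $\B_t=c_t\bfSigma$ directly through the recursion, but the finer ansatz does make the "inverses cancel cleanly" claim transparent.
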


\textbf{Extensions to over-parametrization.} One-step global convergence can also be established for over-parametrized asymmetric problems under \Nystrom initialization. More on this can be found in Apdx. \ref{sec.apdx.asym-op}, where we provide the first convergence result on ScaledGD under such a setup.

\subsection{\Nystrom initialization in the under-parametrized setting}

Lastly, we tackle the case of under-parametrization in the asymmetric problem \eqref{eq.asym}, where $r_A > r$. Similar to the symmetric case in Sec.\ref{sec.sym-up}, we consider a slightly weaker version of optimality.

\begin{definition}[Generalized weak optimality] \label{def.gwc}
We say $(\X, \Y)$ is weakly optimal if $\Y^\top \A^{\dagger} \X - \I_r = \mathbf{0}$.
\end{definition}

Generalized weak optimality is satisfied by any global optimum, which is proved in Lemma \ref{lemma.asym-up-weak-opt} in the appendix. With this preparation, we are ready to show that ScaledGD converges in a single step.

\begin{theorem}\label{thm.asym-up}
	If $\eta=1$, ScaledGD in \eqref{eq.asym-iter} with \Nystrom initialization \eqref{eq.asym-init} ensures generalized weak optimality in one iteration w.h.p., i.e., $\Y_1^\top \A^{\dagger} \X_1 - \I_r = \mathbf{0}$.
\end{theorem}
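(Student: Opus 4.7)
The plan is to compute $\Y_1$ and $\X_1$ in closed form from the initialization, then verify generalized weak optimality by direct algebraic substitution, using one basic pseudoinverse identity.

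First I would unwind the first step of iteration \eqref{eq.asym-iter} under \Nystrom initialization. Since $\Y_0 = \mathbf{0}$, the prescription gives $\X_1 = \X_0 = \A \bfOmega$ and
\begin{align*}
\Y_1 = \mathbf{0} - \eta\bigl(\X_0\mathbf{0}^\top - \A\bigr)^\top \X_0 (\X_0^\top \X_0)^{-1}
= \eta\, \A^\top \X_0 (\X_0^\top \X_0)^{-1}.
\end{align*}
For this to be well-defined we need $\X_0^\top \X_0 \in \mathbb{R}^{r\times r}$ to be invertible, i.e., $\rank(\X_0) = r$. Because $\X_0 = \A\bfOmega$ with $\bfOmega$ Gaussian and $r_A > r$ in the under-parametrized setting, the asymmetric analogue of Lemma \ref{lemma.sym-init} gives $\rank(\X_0) = r$ w.h.p.; this is exactly the ``w.h.p. over the initialization'' in the theorem statement. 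I would state this explicitly and set $\eta = 1$ for the rest of the argument.

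Next I would verify generalized weak optimality per Definition \ref{def.gwc}. Substituting the expressions above,
\begin{align*}
\Y_1^\top \A^\dagger \X_1
= (\X_0^\top \X_0)^{-1}\X_0^\top \A\,\A^\dagger \X_0.
\end{align*}
The key step is to observe that every column of $\X_0 = \A\bfOmega$ lies in the column space of $\A$; equivalently, using $\A\A^\dagger \A = \A$,
\begin{align*}
\A\A^\dagger \X_0 = \A\A^\dagger \A \bfOmega = \A\bfOmega = \X_0.
\end{align*}
Plugging this back yields $\Y_1^\top \A^\dagger \X_1 = (\X_0^\top \X_0)^{-1}\X_0^\top \X_0 = \I_r$, which is exactly $\Y_1^\top \A^\dagger \X_1 - \I_r = \mathbf{0}$.

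There is no real obstacle here: the proof is essentially a one-line computation once the first iterate is written out. The only subtlety worth flagging carefully is the invertibility of $\X_0^\top \X_0$ in the under-parametrized regime (where $\A$ itself is non-invertible but $\A\bfOmega$ still has full column rank w.h.p.), so I would cite the relevant initialization lemma explicitly rather than leaving it implicit. Everything else reduces to the identity $\A \A^\dagger \A = \A$, which is simply the defining property of the Moore--Penrose pseudoinverse and does not require any spectral assumption on $\A$ beyond the SVD introduced in Sec.~\ref{sec.asym-preliminary}.
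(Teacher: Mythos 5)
Your proof is correct and takes essentially the same one-step direct computation as the paper; the only cosmetic difference is that you work directly with $\X_0$, $\Y_1$ and the Moore--Penrose identity $\A\A^\dagger\A = \A$, whereas the paper passes through the SVD coordinates $\X_t = \U\bfPhi_t$, $\Y_t = \V\bfPsi_t$ and cancels $\U^\top\U$, $\V^\top\V$ explicitly. Both arguments hinge on the same facts (full column rank of $\X_0$ w.h.p.\ and $\X_0$ lying in $\text{ColSpan}(\A)$), and your version is, if anything, slightly leaner.
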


\textbf{The critical role of initialization.} Through the theoretical analyses in the previous two sections, it is evident that the convergence of ScaledGD for matrix factorization is \textit{highly dependent on the initialization}. Here is an intuitive, though not strictly rigorous, summary: Small initialization results in behaviors similar to first-order optimizers, i.e., linear convergence \citep{jia2023}. In contrast, the proposed \Nystrom initialization catalyzes quadratic rates and even one-step convergence, resembling the optimization trajectory of second-order approaches such as Newton's method \citep{nesterov2004}.

\section{NoRA: \Nystrom low rank adapters}\label{sec.nora}

Our theoretical results highlight the merits of suitable initialization for matrix factorization problems. One of the key insights is that the Burer-Monterio factorization benefits from good directions of $\X_0$ and $\Y_0$ at initialization; cf. Lemmas \ref{lemma.sym-ep.ir} and \ref{lemma.asym-ep-space-alignment}. We term this as \textit{directional alignment}. 
In this section, we extend the benefit of initialization to practical scenarios, showing that directional alignment is also beneficial for low-rank adapters (LoRA) in finetuning deep neural networks \citep{hu2021lora}. 

LoRA enhances parameter efficiency of finetuning by approximating the unknown parameter-change $\Delta \W \in \mathbb{R}^{m \times n} $ through Burer-Monterio factorization 
	\begin{align}
		\W_0 + \Delta \W \approx \W_0 + \X\Y^\top
	\end{align}
	where $\W_0 \in \mathbb{R}^{m \times n}$ is the pretrained weight (of a particular layer), and $\X\in \mathbb{R}^{m \times r}$ and $\Y\in \mathbb{R}^{n \times r}$ with $r \ll \min\{m, n \}$. A more detailed recap of LoRA can be found in Apdx. \ref{apex.sec.related-work}.  Directional alignment can be achieved if singular vectors for $\Delta \W$ are leveraged to initialize $\X_0$ and $\Y_0$. While $\Delta \W$ is unavailable a priori, empirical wisdom suggests that there exists a set of well-performed adapters that lie in the column  span of the pretrained weight matrix \citep{lingam2024svft}, i.e., $\text{ColSpan}(\Delta \W ) \subseteq \text{ColSpan}(\W_0 ) $. In other words, $\W_0$ can be adopted as a suitable replacement of $\Delta \W$ for directional alignment.

\begin{figure}[t]
	\centering
	\begin{tabular}{ccc}
		\includegraphics[width=.32\textwidth]{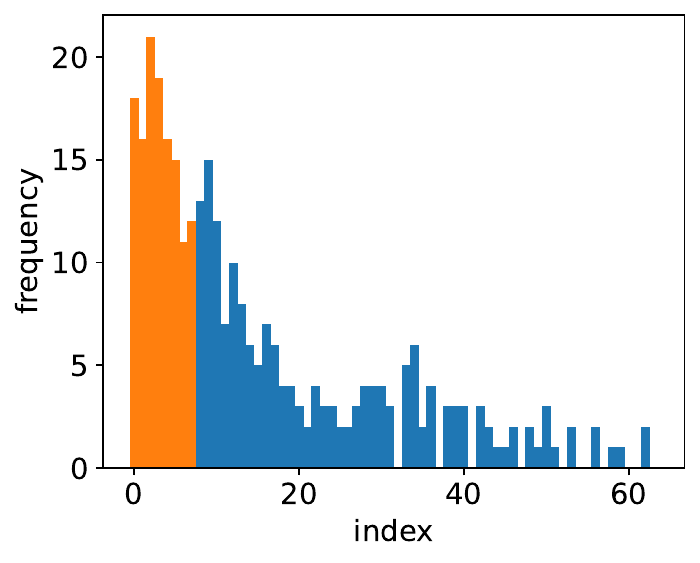}&
		\hspace{-0.3cm}
		\includegraphics[width=.31\textwidth]{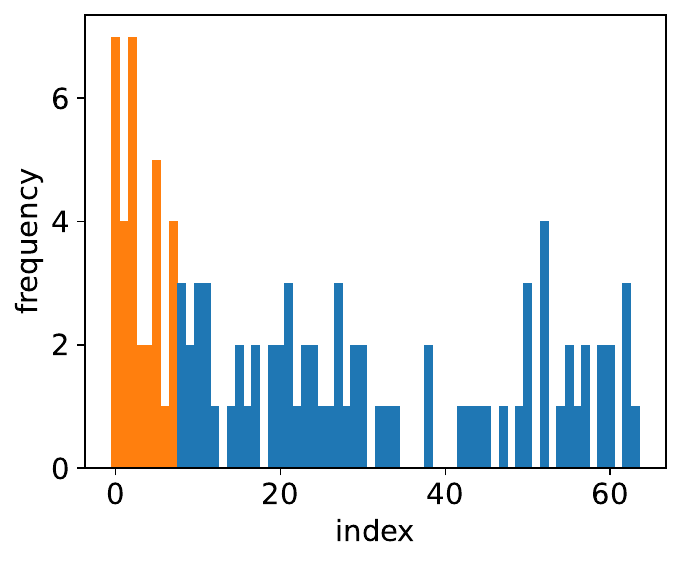}&
		\hspace{-0.3cm}
		\includegraphics[width=.32\textwidth]{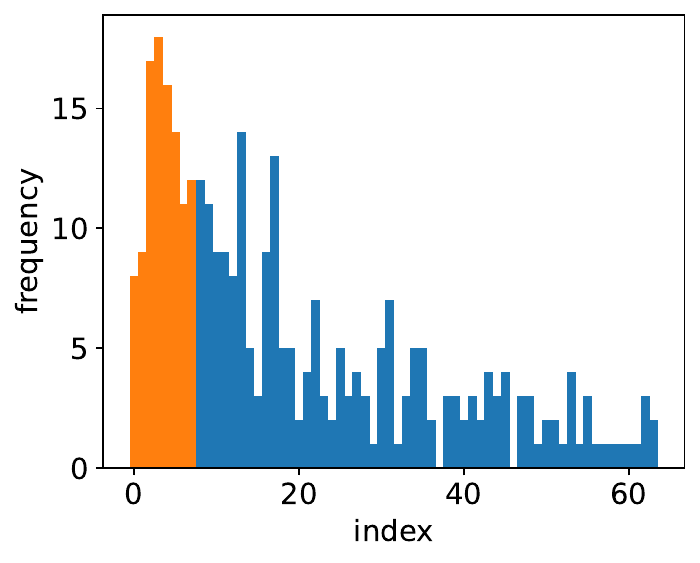}
		\\ 
		\hspace{-0.15cm}
	   (a) BoolQ & (b) ReCoRD & (c) RTE
	 \end{tabular}
	\vspace{-0.2cm}
	\caption{Which singular values have the largest change after finetuning with LoRA of rank $r$? Orange: top-$r$ singular values; blue: other singular values. Note that here we only plot the first 64 singular values as others rarely have sufficiently large change.}
	\vspace{-0.3cm}
	 \label{fig.which-rank}
\end{figure}

Having $\text{ColSpan}(\W_0 )$ alone is insufficient for directional alignment, since it does not specify which directions are more crucial.
To answer this question, we examine the singular values that undergo the most significant change after LoRA finetuning on a few-shot learning task \citep{malladi2023}. OPT-1.3B is chosen as the base model and LoRA is applied to its query and value matrices with $r=8$; more details can be found in Apdx. \ref{apdx.sec.which-rank}.
For each LoRA layer, we count the indices of $r$ singular values that exhibit the largest changes after finetuning, and summarize their frequencies across all layers in Fig. \ref{fig.which-rank}.
It is observed that the top-$r$ singular values tend to have larger change, explaining the success of LoRA initialization approaches that aligns $\X_0$ with the directions corresponding to these singular values, such as PiSSA and OLoRA \citep{meng2024pissa,buyukakyuz2024olora}.
However, across all tested datasets, a substantial portion of non-top-$r$ singular values also demonstrate significant variation, and the frequency is positively linked to the singular values. 
In other words, the directions corresponding to larger singular values tend to be more important. 
This is akin to the principle of \Nystrom initialization $\X_0= \W_0\bfOmega$, evidenced by its spectrum, i.e., $\mathbb{E}[\X_0\X_0^\top] \propto \W_0\W_0^\top$.

Building upon these observations, and considering the accelerated convergence with \Nystrom initialization in ScaledGD, we propose two novel variants of LoRA:
\vspace{-0.2cm}
\begin{enumerate}[leftmargin=0.4cm,itemsep=-0.4mm]
	\item[\textbullet] \textbf{\Nystrom LoRA (NoRA)}: This approach applies \eqref{eq.asym-init} directly on top of LoRA, that is, $\X_0 = \W_0 \bfOmega$ and $\Y_0 = \mathbf{0}$.
	\item[\textbullet] \textbf{\Nystrom preconditioned LoRA (NoRA+)}: This approach not only advances LoRA initialization with \eqref{eq.asym-init}, but also leverages ScaledGD for optimization.
\end{enumerate}
\vspace{-0.2cm}
We note that ScaledGD has already been applied for LoRA training in \citep{zhang2024riemannian}, which we refer to as LoRA-P (P for preconditioning). We will show that both LoRA and LoRA-P benefit significantly from \Nystrom initialization. Due to space limitation, we summarize NoRA and NoRA+ in Algs. \ref{alg.nora} and \ref{alg.nora+}, respectively in the appendix, with additional explanations in Apdx. \ref{apdx.sec.nora}.

\begin{table*}[t]
    \centering
    \vspace{-0.1cm}
    \caption{Test accuracy of NoRA and NoRA+ for few-shot learning with OPT-1.3B.}
    \renewcommand{\arraystretch}{1.18}
	\scriptsize{
    \begin{tabular}{ccccccccccc}
        \toprule
        OPT-1.3B & SST-2 & WSC  & BoolQ & CB & RTE  & ReCoRD & MultiRC & SQuAD  & avg ($\uparrow$) \\
        \midrule
        Prefix
        & 92.9$_{\pm 0.9}$
        & 59.6$_{\pm 1.6}$
        & 73.1$_{\pm 2.3}$
        & 71.6$_{\pm 2.9}$
        & 65.2$_{\pm 2.6}$
        & 69.7$_{\pm 1.0}$
        & 64.4$_{\pm 3.2}$
        & 82.2$_{\pm 1.4}$
        & 72.3
        \\
        LoRA
        & 93.1$_{\pm 0.2}$
        & 59.1$_{\pm 2.0}$
        & 70.6$_{\pm 5.2}$
        & 72.6$_{\pm 3.7}$
        & 69.1$_{\pm 4.7}$
        & 70.8$_{\pm 1.0}$
        & 68.0$_{\pm 1.4}$
        & 81.9$_{\pm 1.8}$
        & 73.2
        \\
        OLoRA
        & 92.7$_{\pm 0.5}$
        & 60.0$_{\pm 2.3}$
        & 70.9$_{\pm 3.1}$
        & 80.3$_{\pm 2.7}$
        & 69.7$_{\pm 1.0}$
        & 71.3$_{\pm 1.2}$
        & 66.7$_{\pm 0.9}$
        & 80.0$_{\pm 1.4}$
        & 74.0
        \\
        PiSSA
        & 92.7$_{\pm 0.6}$
        & 60.6$_{\pm 3.7}$
        & 70.4$_{\pm 0.7}$
        & 78.0$_{\pm 7.2}$
        & 70.4$_{\pm 2.8}$
        & 70.9$_{\pm 1.2}$
        & 67.9$_{\pm 2.1}$
        & 82.1$_{\pm 0.4}$
        & 74.1
        \\
        \midrule
        \textbf{NoRA}
        & 93.4$_{\pm 0.7}$
        & 60.6$_{\pm 3.8}$
        & 73.2$_{\pm 0.6}$
        & 79.2$_{\pm 5.2}$
        & 72.0$_{\pm 1.3}$
        & 71.3$_{\pm 1.0}$
        & 68.5$_{\pm 1.2}$
        & 81.8$_{\pm 0.7}$
        & \textbf{75.0}
        \\
        \textbf{NoRA+}
        & 93.2$_{\pm 0.5}$
        & 61.2$_{\pm 0.6}$
        & 72.9$_{\pm 1.3}$
        & 79.5$_{\pm 5.8}$
        & 72.4$_{\pm 3.6}$
        & 71.5$_{\pm 0.9}$
        & 68.4$_{\pm 1.2}$
        & 82.0$_{\pm 0.9}$
        & \textbf{75.1}
        \\
        \bottomrule
    \end{tabular}
    }
    \label{tab.few-shot-opt}
	\vspace{-0.4cm}
\end{table*}

\textbf{Deployment efficiency.} NoRA offers practical advantages over other initialization methods such as PiSSA and OLoRA. It not only bypasses the computationally expensive SVD or QR decomposition, but also avoids the need to modify to the pretrained weights. NoRA is thus an off-the-shelf solution to enhance LoRA without altering existing pipelines. We expand on this in Apdx. \ref{apdx.sec.nora}.

\begin{table*}[t]
    \centering
    \caption{Training loss of NoRA and NoRA+ with stable-diffusion.}
    \renewcommand{\arraystretch}{1.05}
    \begin{tabular}{cccccccc}
        \toprule
        loss($\downarrow$)  & LoRA & LoRA-P & NoRA & NoRA+  \\
        \midrule 
           avg       &  0.092$\pm 0.012$  & 0.093$\pm 0.012$ & 0.084$\pm 0.017$ & 0.084$\pm 0.015$
         \\
        \bottomrule
    \end{tabular}
    \label{tab.db-loss}
	\vspace{-0.2cm}
\end{table*}

\section{Numerical results for NoRA}

The efficiency of proposed NoRA and NoRA+ is demonstrated on large-scale finetuning tasks involving diffusion and LLMs. The experiments are conducted with PyTorch \citep{pytorch} on NVIDIA H100 GPUs. Details on datasets and experimental procedures can be found in Apdx. \ref{apdx.sec.numerical}.

\subsection{Few-shot learning with OPT-1.3B}\label{sec.num-fewshot}

Our evaluation starts with a few-shot learning task following \citep{malladi2023}. The objective is to rapidly adapt a language model with a small training set. The datasets for this experiment are drawn from GLUE and SuperGLUE benchmarks \citep{wang2018glue, wang2019superglue}. Consistent with \citep{malladi2023}, we randomly sample 1,000 data points for training and another 1,000 for testing.

We embrace OPT-1.3B as our base model \citep{opt2022} and apply LoRA to the query and value matrices in the attention module. This aligns with common practice for models of this size. The rank of LoRA is set to 8, leading to approximately 1.5M trainable parameters, which is significantly less than the model size. We compare the proposed NoRA and NoRA+ with LoRA, prefix tuning \citep{li2021prefix}, OLoRA \citep{buyukakyuz2024olora}, and PiSSA \citep{meng2024pissa}. Note that the latter two serve as alternative methods for initializing LoRA. 

The performance of different algorithms is summarized in Tab. \ref{tab.few-shot-opt}. It is evident that OLoRA, PiSSA, NoRA, and NoRA+ all outperform LoRA because their initialization strategies have provided more favorable directions for optimization. Among these initialization approaches, NoRA and NoRA+ have the best average accuracy, with absolute improvement over LoRA by 1.8 and 1.9, respectively.

\subsection{Subject-driven image generation with stable-diffusion}\label{sec.num-dreambooth}

\begin{figure*}[t]
	\centering
	\begin{tabular}{cc}
		\rotatebox{90}{~~~LoRA} & \includegraphics[width=.94\textwidth]{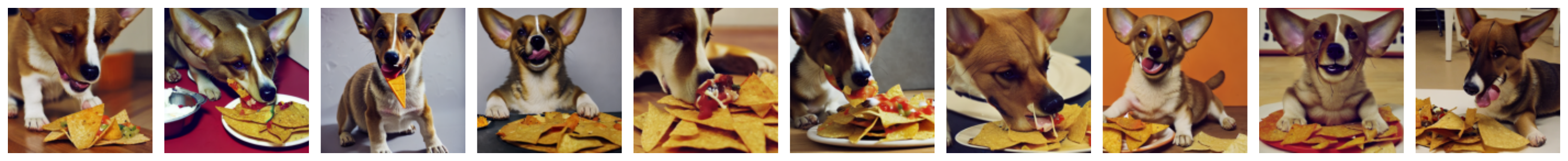} \\ 
		\rotatebox{90}{~LoRA-P} & \includegraphics[width=.94\textwidth]{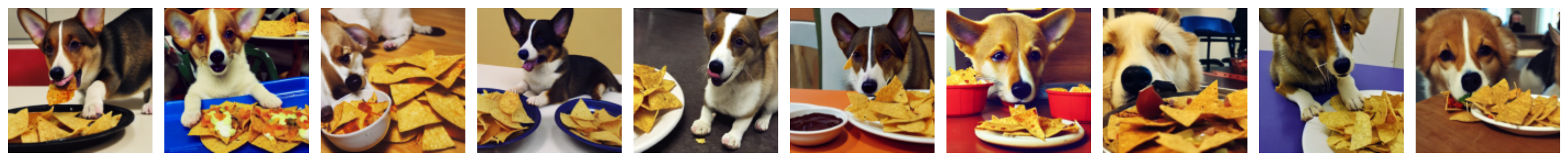} \\ 
		\rotatebox{90}{~~~NoRA} & \includegraphics[width=.94\textwidth]{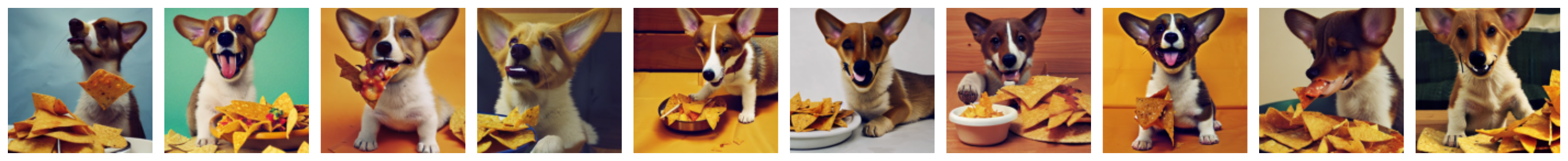} \\ 
		\rotatebox{90}{~~NoRA+} & \includegraphics[width=.94\textwidth]{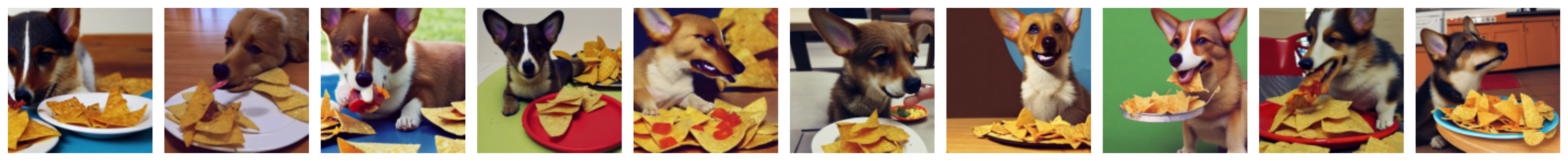} \\ 
	 \end{tabular}
	\caption{Generated images from NoRA and NoRA+ with stable-diffusion.}
	\label{fig.db-gen}
\end{figure*}

\begin{table*}[t]
    \centering
    \caption{Test accuracy of various algorithms for commonsense reasoning on LLaMA-7B.
    HS and WG are abbreviations for HellaSwag and WinoGrande, respectively.}
    \begin{tabular}{ccccccccccc}
        \toprule
        LLaMA-7B & BoolQ & PIQA  & SIQA & HS & WG  & ARC-e & ARC-c & OBQA  & avg ($\uparrow$) \\
        \midrule 
        LoRA
        & 66.42
        & 80.03
        & 77.84
        & 82.88
        & 81.85
        & 79.92
        & 63.40
        & 77.20
        & 76.19
        \\
        LoRA-P 
		& 68.96
		& 80.95
		& 77.43
		& 81.54
		& 80.27
		& 78.83
		& 64.16
		& 79.20
		& 76.41
        \\
        \textbf{NoRA}
        & 68.20
        & 80.79
        & 78.40
        & 85.09
        & 80.27
        & 79.17
        & 62.80
        & 78.80
        & \textbf{76.69}
        \\
        \textbf{NoRA+}
        & 69.85
        & 81.83
        & 77.38
        & 82.09
        & 80.03
        & 79.67
        & 64.25
        & 78.60
        & \textbf{76.71}
        \\
        \bottomrule
    \end{tabular}
    \label{tab.llama}
	\vspace{-0.3cm}
\end{table*}

Next, we focus on subject-driven image generation \citep{ruiz2023dreambooth}. The goal of this task is to finetune a diffusion model with only a few user-specific images (typically less than $10$) so that the modal can generate the same object in various contexts. 
The base model is selected as StableDiffusion v1.4 \citep{sd} (0.98B parameters in total). We adhere to the default setting and finetune the U-Net with LoRA. The rank of LoRA is set as 4, amounting to 0.8M trainable parameters. The diffusion model is finetuned on a user-specific training set containing pictures of a dog labeled ``a photo of $V_{\text{dog}}$,'' with the aim to generate proper images under the prompt ``a $V_{\text{dog}}$ eating nachos.''

To demonstrate the power of initialization, we compare NoRA and NoRA+ with LoRA and LoRA-P. The averaged training loss of considered approaches are summarized in Tab. \ref{tab.db-loss}. It can be seen that NoRA and NoRA+ have $9.6\%$ smaller training loss compared with LoRA and LoRA-P, demonstrating the benefits of directional alignment at initialization.
 The generated images are listed in Fig. \ref{fig.db-gen}. Some of images generated by LoRA are not natural. For instance, the third one does not have a nice expression for nachos, and the tenth is not vivid. For LoRA-P, the dog in the third image is also not natural. NoRA and NoRA+, on the other hand, both generate high-fidelity pictures. However, there is a floating plate in the 8th image of NoRA+, but ensuring diffusion models to follow physical laws goes beyond the scope of this work. Additional results are provided in Apdx. \ref{apdx.diffusion}, where we finetune on images of a cat toy. The generated images from NoRA and NoRA+ have more lively facial details compare to those not using \Nystrom initialization.

\subsection{Commonsense reasoning with LLaMA-7B and LLaMA2-7B}

Our evaluation is further scaled to LLMs using LLaMA and LLaMA2-7B \citep{llama,llama2}. We tackle commonsense reasoning tasks following the setup in \citep{hu2023llm}. Training data are merged from 8 datasets listed in Tab. \ref{tab.llama}. The test sets remain separate for individual evaluation. These reasoning tasks are intended to push the model beyond pattern recognition, requiring commonsense and knowledge to make proper inferences. The rank of LoRA is chosen as 32.

The results on LLaMA-7B are summarized in Tab. \ref{tab.llama}. It is observed that NoRA improves the average accuracy by 0.5 over LoRA, while NoRA+ also surpasses LoRA-P. These results underscore the significance of initialization for optimizing LoRA. The numerical results on LLaMA2-7B are presented in Tab. \ref{tab.llama2}. In this case, it is observed that LoRA is unstable, henceforth the results for LoRA are taken from \citep{liu2024dora}. This instability is not observed in other approaches tested. In this experiment, the benefit of the \Nystrom initialization is particularly pronounced, as the absolute improvement is even greater compared to the results on LLaMA-7B.

\begin{table*}[t]
    \centering
    \caption{Test accuracy of various algorithms for commonsense reasoning on LLaMA2-7B. The results marked with $\ddagger$ are taken from \citep{liu2024dora}.}
    \renewcommand{\arraystretch}{0.95}
    \begin{tabular}{ccccccccccc}
        \toprule
        LLaMA2-7B & BoolQ & PIQA  & SIQA & HS & WG  & ARC-e & ARC-c & OBQA  & avg ($\uparrow$) \\
        \midrule 
        LoRA$^\ddagger$
        & 69.8
        & 79.9
        & 79.5
        & 83.6
        & 82.6
        & 79.8
        & 64.7
        & 81.0
        & 77.6
        \\
		LoRA-P
		& 71.47
		& 81.50
		& 78.81
		& 85.97
		& 80.43
		& 81.14
		& 66.55
		& 81.00
		& 78.35
        \\
        \textbf{NoRA}
        & 71.16
        & 83.08
        & 79.53
        & 85.90
        & 81.85
        & 80.64
        & 66.13
        & 81.80
        & 78.76
        \\
        \textbf{NoRA+}
        & 70.52
        & 81.94
        & 79.07
        & 87.66
        & 82.24
        & 82.70
        & 67.06
        & 80.20
        & \textbf{78.92}
        \\
        \bottomrule
    \end{tabular}
    \label{tab.llama2}
	\vspace{-0.3cm}
\end{table*}

\subsection{Math reasoning with Gemma-7B}\label{apdx.num-math}

Our last evaluation tackles mathematical reasoning. Gemma-7B \citep{gemma} is finetuned for 2 epochs on the MetaMathQA-100K dataset \citep{metamath}. For this task, LoRA rank is set as 32, leading to 100M trainable parameters. The performance is assessed on GSM8K \citep{gsm8k}.

The test accuracy of various approaches is summarized in Tab. \ref{tab.math}. We also include PiSSA \citep{meng2024pissa} into the comparison. Note that PiSSA uses LoRA rank as $64$ but is only finetuned for a single epoch. Despite this difference, the computational cost on backward passes is the same for PiSSA and NoRA. The results clearly show that NoRA (NoRA+) outperforms LoRA (LoRA-P), highlighting the effectiveness of our \Nystrom initialization.

\begin{table*}[t]
    \centering
    \caption{Test accuracy of different algorithms for math reasoning tasks. The results marked with $\ddagger$ are taken from \citep{meng2024pissa}. }
    \renewcommand{\arraystretch}{0.95}
    \begin{tabular}{cccccc}
        \toprule
        GSM8K  & LoRA & PiSSA$^\ddagger$  & NoRA & LoRA-P & NoRA+  \\
        \midrule 
        Gemma-7B       
        & 76.72
        & 77.94
        & 78.62
        & 77.03
        & 78.47
        \\
        \bottomrule
    \end{tabular}
    \label{tab.math}
\end{table*}

\section{Concluding remarks and discussions}

This work characterizes how initialization can crucially determine the convergence behavior of the same optimization algorithm on matrix factorization problems. We prove that \Nystrom initialization can significantly improve the complexity bounds of ScaledGD under a wide spectrum of settings; see details in Tab. \ref{tab.comparison}. One of the key improvements is that \Nystrom initialization enables a quadratic convergence for exact- and over-parametrized problems, whereas small initialization only guarantees a linear rate on ScaledGD. This performance gap calls for more careful investigation into the role of initialization in optimization. Additionally, the proposed \Nystrom initialization offers practical merits when applied on finetuning with LoRA, delivering deployment flexibility and promising numerical performance on large-scale problems with LLMs and diffusion models.

\textbf{An alternative interpretation.} Our theoretical results can also be interpreted as emphasizing the importance of identifying the correct directions when optimizing functions with fourth-order growth. In the matrix factorization and sensing literature, it is common for algorithms such as GD or ScaledGD to exhibit a two-phase behavior: an initial phase of selecting the correct direction (also known as the spectral phase), followed by a second phase of rapid (e.g., linear) local convergence. Our findings clearly indicate that the correct directions in the first phase are critical -- not only for faster termination of this phase, more importantly, for exponentially impacting the convergence rate in the second phase of ScaledGD.

\textbf{Future directions.} Our findings present several avenues for further exploration. From a theoretical standpoint, this work focuses on the impact of initialization in canonical matrix factorization problems. We believe our results can be extended to more complex settings, such as matrix sensing and tensor factorization, which are part of our future research plans. 
On the practical side, our work hints at the potential for further gains by leveraging priors embedded in pretrained weights within the context of Burer-Monteiro factorization with LoRA. Investigating how to better uncover and utilize this hidden information is another attractive direction for future research.

\bibliographystyle{plainnat}
\bibliography{myabrv,datactr}

\newpage
\appendix
\begin{center}
{\Large  \bf Supplementary Document for \\``On the Crucial Role of Initialization for Matrix Factorization'' }
\end{center}

\DoToC

\vspace{0.3cm}

\section{Missing details}
\subsection{More on related work}\label{apex.sec.related-work}

\textbf{Convergence of over-parametrized matrix factorization problems.} Consider again the asymmetric problem as an example, i.e., $\min_{\X, \Y} \| \X \Y^\top - \A \|^2$ with $\A \in \mathbb{R}^{m \times n}$, $\X \in \mathbb{R}^{m \times r}$ and $\Y \in \mathbb{R}^{n \times r}$. Over-parametrization refers to the case where $\rank(\A) \leq r$. The gradient flow on the extreme over-parametrized problems, where $r \geq \max\{m,n\}$, is studied in \citep{tarmoun2021}. 
There are also papers \citep{stoger2021small,zhang2021preconditioned,jin2023understanding,xiong2023over} considering the matrix sensing problem, which partially relates to our problem when there are sufficient Gaussian measures. The work of \citep{arora2018} considers deeper problem (i.e., having more than $3$ layers) while assuming $\A$ is full rank. Our results on over-parametrization can be found in Apdx. \ref{apdx.sec.sym-op} and Apdx. \ref{sec.apdx.asym-op} for symmetric and asymmetric problems, respectively. The comparison of ScaledGD with other works on over-parametrized problems can be found in Tab. \ref{tab.comparison}.

\textbf{LoRA and parameter-efficient finetuning.} LoRA \citep{hu2021lora} is a notable example of parameter-efficient finetuning (PEFT) approaches. The goal of PEFT is to reduce the resource requirement for finetuning LLMs on downstream tasks. Other commonly adopted PEFT methods include, e.g., adapters \citep{houlsby2019}, zeroth-order optimizers \citep{malladi2023, zhang2024dpzero, zhang2024revisiting}, and prefix tuning \citep{li2021prefix}. There are also various efforts to further enhance LoRA via adaptivity \citep{zhang2023adaptive}, chaining \citep{lialin2023relora, xia2024chain}, regularization \citep{li2024implicit,li2024enhancing}, low-bit training \citep{dettmers2024qlora,li2023loftq}, modifications for long-sequences \citep{chen2023longlora}, weight decomposition \citep{liu2024dora}, and combining with sparsity \citep{nikdan2024rosa}. Additionally, there are several approaches aiming at further reducing the number of trainable parameters in LoRA; examples include \citep{kopiczko2023vera, lingam2024svft,gao2024,zhu2024,hao2024flora, balazy2024}. While originally designed for finetuning LLMs, LoRA also finds its applications in other domains, such as image generation \citep{gu2023mix} and continual learning \citep{smith2023continual}.

\textbf{LoRA initialization.} When first proposed, LoRA initialization was largely overlooked. 
The work of \citep{hayou2024impact} justifies that whether setting $\X_0$ or $\Y_0$ to be $\mathbf{0}$ affects performance from a stability perspective. Recent works \citep{buyukakyuz2024olora,meng2024pissa} observe a fundamental difference between initialization of LoRA and neural networks, emphasizing the availability of prior knowledge. These works experimentally demonstrate that pretrained model can serve as prior to guide the direction of adapters, and hence perform QR or SVD on the pretrained matrix and using (scaled) top-$r$ singular vectors for LoRA initialization. Follow-up study \citep{wang2024lora-ga} exploits stability for further improvement. However, these initialization methods are computationally expensive and lack flexibility for deployment. The proposed NoRA initialization overcomes these limitations.

\textbf{\Nystrom sketch.} \Nystrom sketch has well-documented success in signal processing and machine learning for coping with large-scale matrices under memory constraints. It has been applied in various settings; see e.g., \citep{gittens2013revisiting,tropp2017fixed,frangella2023randomized}. This work only employs this approach to ensure full-rankness required in certain settings, yet the properties for recovery is not explored. We believe that other sketches are also applicable once full rankness is ensured.

\textbf{Convergence of quasi-Newton methods.} ScaledGD is sometimes regarded as a quasi-Newton method \citep{tong2021} for our nonconvex objective \eqref{eq.sym}. Since much of the existing work on this topic focuses on the smooth and strongly convex case, we briefly review these approaches to highlight the significance of our results in achieving quadratic convergence for nonconvex and local-smooth problems.  
In the smooth and strongly convex regime, the primary advantage of quasi-Newton methods lies in their asymptotic ability to achieve super-linear convergence as $t \rightarrow \infty$. Non-asymptotic analyses demonstrating super-linear local rates have only been established recently; see, e.g., \citep{rodomanov2021, jin2023non, ye2023towards, jiang2023online}.

\subsection{LoRA for linear models as asymmetric matrix factorization}\label{apdx.sec.aysm-as-lora}

We argue that LoRA applied on linear models given a whitened dataset is equivalent to the asymmetric matrix factorization problem. The whitened dataset is widely adopted for theoretical analyses, and we refer to \citep{arora2018,jiang2023adam,yaras2024} for more details.

Assume that we have a pretrained (linear) model $\mathbf{W}_0 \in \mathbb{R}^{m \times n}$. Applying LoRA on this layer with whitened data $\B$ is equivalent to solving the following problem 
\begin{align}\label{apdx.eq.lora-identical-to-asym}
	\frac{1}{2}\| (\mathbf{W}_0 + \X\Y^\top) - \mathbf{B}  \|_\fro^2	.
\end{align}
It is clearly that this problem \eqref{apdx.eq.lora-identical-to-asym} is the same as \eqref{eq.asym} by setting $\A = \B - \mathbf{W}_0$.

Unfortunately, existing works provide no theoretical support on the most widely adopted initialization approach for LoRA in practice -- either $\X_0$ or $\Y_0$ is chosen as $\mathbf{0}$ to preserve $\mathbf{W}_0 + \X_0\Y_0^\top = \mathbf{W}_0$. In this sense, our \Nystrom initialization in \eqref{eq.asym-init} is the first means of initialization that justifies one variable can be set to $\mathbf{0}$.

\textbf{Additional similarities between LoRA and matrix factorization.} LoRA and matrix factorization share similar mathematical properties. For example, they both have no spurious local minima \citep{du2018, ge2017no, jang2024lora}. There are also recent efforts using insights from matrix factorization to further improve LoRA; see e.g., \citep{yaras2024,nikdan2024rosa}.

\subsection{Initialization without knowing $\A$}

In certain scenarios, direct access to $\A$ in problems \eqref{eq.sym} and \eqref{eq.asym} may not be available. Nevertheless, the \Nystrom initialization remains applicable, as it can be derived from the gradient, which is the minimum requirement for gradient-based methods.
We take exact-parametrization as an illustrative example. For \Nystrom initialization \eqref{eq.sym-init} for the symmetric problem \eqref{eq.sym}, we can simply calculate the gradient at $\bfOmega$, i.e., $\G_0 = (\bfOmega\bfOmega^\top - \A) \bfOmega$ and set initialization as $\X_0 = -\G_0 + \bfOmega\bfOmega^\top \bfOmega$. For the asymmetric problem \eqref{eq.asym}, our \Nystrom initialization in \eqref{eq.asym-init}, can be obtained via the negative gradient at $\mathbf{X} = 0$ and $\mathbf{Y} = \bf{\Omega}$.

\subsection{More on NoRA and NoRA+}\label{apdx.sec.nora}

\begin{table}[t]
\centering
    \begin{tabular}{c c}
\begin{minipage}[t]{0.3\linewidth}
    \begin{algorithm}[H]
        \caption{NoRA for a specific LoRA layer}
        \label{alg.nora}
    \begin{algorithmic}[1]
    \State \textbf{Initialize:} $\xi$ -- standard deviation of random matrix $\bfOmega$
    \State Set $\X_0$ and $\Y_0$ via \Nystrom initialization \eqref{eq.asym-init}
    \State Standard training process
    \end{algorithmic}
    \end{algorithm}
\end{minipage}
\hspace{-3mm}
&
\begin{minipage}[t]{0.67\linewidth}
    \vspace{0.pt}
    \begin{algorithm}[H]
     \caption{NoRA+ for a specific LoRA layer}
     \label{alg.nora+}
    \begin{algorithmic}[1]
    \State \textbf{Initialize:} $\xi$ -- standard deviation of random matrix $\bfOmega$; $\lambda$ -- numerical stability of matrix inversion
    \State Set $\X_0$ and $\Y_0$ via \Nystrom initialization \eqref{eq.asym-init}
        \For {$t=0,\dots,T-1$}
       \State Get gradient $\G_{\X_t}$ and $\G_{\Y_t}$
        \If{$t > 0$}
        	\State $\G_{\X_t} \leftarrow \G_{\X_t} (\Y_t^\top \Y_t + \lambda \I_r)^{-1} / \|(\Y_t^\top \Y_t + \lambda \I_r)^{-1}\|_\fro$
        \EndIf
        \State $\G_{\Y_t} \leftarrow \G_{\Y_t} (\X_t^\top \X_t + \lambda \I_r)^{-1} / \|(\X_t^\top \X_t + \lambda \I_r)^{-1} \|_\fro$
        \State Optimizer update
    \EndFor
     \end{algorithmic}
    \end{algorithm}
\end{minipage}
    \end{tabular}
\end{table}

As discussed in Sec. \ref{sec.nora}, LoRA can significantly benefit from the aligned directions at initialization. Besides the theoretical benefits of applying \Nystrom initialization on ScaledGD (NoRA+), \Nystrom initialization can also be used directly with Adam (or AdamW), i.e., NoRA. There are several reasons for this. First, directional alignment from initialization is beneficial to most optimizers. While our theoretical results focus on ScaledGD, we believe that the aligned directions also improve GD. Despite the improvement may be less significant as in ScaledGD, we conjecture that the linear term in \citep[Theorem 1.1]{ye2021} can be removed with \Nystrom initialization, because it can be roughly understood as the price of searching for proper directions. In other words, the benefits of \Nystrom initialization extend to other optimizers as well. Second, Adam also affords an explanation of preconditioning, and the preconditioner for $\X_t$ is also closely related to $\Y_t$. In other words, Adam shares similarities with ScaledGD in \eqref{eq.asym-iter}. These two reasons prompt the proposed NoRA, as summarized in Alg. \ref{alg.nora}. For NoRA+ in Alg. \ref{alg.nora+}, we modify the vanilla ScaledGD iterations in \eqref{eq.asym-iter} with two add-ons. First, a small parameter $\lambda$  is introduced for numerical stability of matrix inversion. This is a standard practice for numerical optimizers such as Adam \citep{kingma2014,loshchilov2017adamw}. Second, the gradient is normalized by the Frobenius norm of its preconditioner. The reason is that an optimal $\lambda$ is difficult to tune as shown in \citep{zhang2024riemannian}, where they use $\lambda$ from $10^{-6}$ to $100$. With this normalizer, we can set $\lambda=10^{-6}$ in all our experiments without any tuning. Moreover, this normalizer is useful to prevent the instability in earlier iterations due to the non-invertable $\Y_0 = \mathbf{0}$.

\textbf{Deployment efficiency of NoRA.} One benefit of NoRA (as well as NoRA+) is that it can be deployed jointly with adapters trained with LoRA -- and hence there is no need to modify the current pipeline for deployment. This is because both of NoRA and LoRA do not need to modify the pretrained parameters, and the finetuned model is just $\mathbf{W}_0 + \X_T\Y_T^\top$, where $\mathbf{W}_0$ is the pretrained model, and $\X_T$ and $\Y_T$ are finetuned adapter weights. On the contrary, other initialization approaches such as PiSSA and OLoRA \citep{meng2024pissa,buyukakyuz2024olora} are less efficient for using jointly with LoRA at deployment because both approaches modify the pretrained weights, so that the finetuned model becomes $\widehat{\mathbf{W}}_0 + \X_T\Y_T^\top$, where $\widehat{\mathbf{W}}_0 = \mathbf{W}_0  - \X_0 \Y_0^\top$. The use of $\widehat{\mathbf{W}}_0 $ comes from the fact that initialization in PiSSA and OLoRA does not satisfy $\X_0 \Y_0^\top = \mathbf{0}$. Consequently, when deploying PiSSA jointly with LoRA, one needs to store both $\mathbf{W}_0$ (for LoRA) and $\widehat{\mathbf{W}}_0$ (for PiSSA), leading to reduced memory efficiency.

\section{Missing proofs for symmetric settings}
\subsection{Initialization of exact- and under-parametrized problems}

\subsubsection{Proof of Lemma \ref{lemma.sym-init}}\label{apdx.init-proof}

	\begin{proof}
		Let the compact eigenvalue decomposition of $\A$ be $\A = \Q \bfSigma \Q^\top$, where $\Q \in \mathbb{R}^{m \times r_A}$ and $\bfSigma \in \mathbb{R}^{r_A \times r_A}$. We then have that
		\begin{align}\label{eq.sym-x0-decomposed}
			\X_0 = 	(\Q \bfSigma) (\Q^\top \bfOmega).
		\end{align}
		It is not hard to verify that the matrix $\Q^\top \bfOmega \in \mathbb{R}^{r_A \times r}$ is also a Gaussian random matrix, where each entry follows ${\cal N}(0, \xi^2)$. Applying Lemma \ref{apdx.lemma.aux-smallest-singular-value} on $\Q^\top \bfOmega$, it can be seen that 
		\begin{align*}
			\mathbb{P}\Big(\frac{\sigma_r(\Q^\top \bfOmega)}{\xi} \leq \tau (\sqrt{r_A} - \sqrt{r-1})\Big) \leq (C_1 \tau)^{r_A - r + 1}+ e^{-C_2 r_A}:= \delta
		\end{align*}
		where $C_1$ and $C_2$ are universal constants independent of $r_A$ and $r$. This inequality shows that with probability at least $1 - \delta$, $\sigma_r(\Q^\top \bfOmega) \geq \xi \tau (\sqrt{r_A} - \sqrt{r-1})$.
	
		Note that inequality $\sigma_{\min}(\mathbf{C}\mathbf{D}) \geq \sigma_{\min}(\mathbf{C})  \sigma_{\min}(\mathbf{D})$ holds given full column rank of $\mathbf{C}$; see Lemma \ref{apdx.lemma.aux888}. Applying it to \eqref{eq.sym-x0-decomposed}, we have that
		\begin{align*}
			\sigma_r(\X_0) & \geq \sigma_{r_A}(\Q\bfSigma) \sigma_r(\Q^\top \bfOmega) = \sigma_{r_A}(\A) \sigma_r(\Q^\top \bfOmega)  \\
			& \stackrel{(a)}{\geq} \xi \tau (\sqrt{r_A} - \sqrt{r-1}) \sigma_{r_A}(\A)
		\end{align*}
		where (a) holds with probability at least $1 - \delta$.
	\end{proof}

\subsection{Missing proofs for the symmetric and exact-parametrized setting}

In the exact-parametrized setting, it is convenient to define 
\begin{align}\label{apdx.sym-ep-bt}
	\B_t := \bfPhi_t \bfPhi_t^\top	
\end{align}
where $\bfPhi_t \in \mathbb{R}^{r \times r}$ comes from Lemma \ref{lemma.sym-ep.ir}, i.e., $\X_t = \Q\bfPhi_t$. The notation $\B_t$ will be used frequently in this subsection. With the help of Lemma \ref{lemma.sym-ep.ir}, $\B_t$ can be understood as the ``core'' part of $\X_t\X_t^\top$, because $\X_t\X_t^\top = \Q\bfPhi_t \bfPhi_t^\top \Q^\top = \Q \B_t \Q^\top$. Once proving Lemma \ref{lemma.sym-ep.ir}, it allows us to study dynamics using a simpler but equivalent notion $ \| \B_t - \bfSigma   \|_\fro$, i.e.,
\begin{align*}
	\| \X_t\X_t^\top - \A \|	_\fro = \| \Q (\bfPhi_t \bfPhi_t^\top - \bfSigma )\Q^\top  \|_\fro = \| \bfPhi_t \bfPhi_t^\top - \bfSigma   \|_\fro = \| \B_t - \bfSigma   \|_\fro.
\end{align*}

\subsubsection{Proof of Lemma \ref{lemma.sym-ep.ir}}
\begin{proof}
	The proof relies on $\B_t$ defined in \eqref{apdx.sym-ep-bt}.
	We will prove this lemma by induction. Since $\X_0 = \A \bfOmega$ in \Nystrom initialization, 
we have that $\bfPhi_0 = \bfSigma\Q^\top\bfOmega$. Moreover, our base assumption $\sigma_r(\B_0) > 0$ is true because $\rank(\B_0) = \rank(\X_0\X_0^\top) =r$, which is the result of Lemma \ref{lemma.sym-init}. 
	
	For induction, assume that $\X_t$ can be written as $\X_t=\Q\bfPhi_t$ with a full rank $\bfPhi_t \in \mathbb{R}^{r \times r}$ at iteration $t$. By the update \eqref{eq.sym-scaled-gd}, we have that
        \begin{equation}\label{eq.ir-1}
            \begin{split}
                \X_{t+1} & = \X_t - \eta (\X_t\X_t^\top - \A)\X_t(\X_t^\top \X_t)^{-1} \\
    				 & = \Q\bfPhi_t - \eta \Q (\bfPhi_t\bfPhi_t^\top - \bfSigma)\Q^\top \Q \bfPhi_t (\bfPhi_t^\top \Q^\top \Q \bfPhi_t)^{-1} \\
    				 & \stackrel{(a)}{=} \Q \bigg( \bfPhi_t - \eta  (\bfPhi_t\bfPhi_t^\top - \bfSigma) \bfPhi_t (\bfPhi_t^\top \bfPhi_t)^{-1} \bigg) \\
    				 & \stackrel{(b)}{=} \Q \underbrace{\bigg( (1 - \eta) \bfPhi_t + \eta \bfSigma \bfPhi_t^{-\top}  \bigg) }_{:=\bfPhi_{t+1}},
            \end{split}
        \end{equation}
	where (a) uses $\Q^\top \Q = \mathbf{I}_r$; and (b) uses $\bfPhi_t$ is full rank (hence invertible). Note that $\Q$ and $\A$ share the same column space. This proves the first claim i) of this lemma.
	
	Next we show that the smallest eigenvalue of $\B_{t+1}$ is bounded away from $0$, or equivalently, $\bfPhi_{t+1}$ is full rank. To start with, we have that from the expression of $\bfPhi_{t+1}$ in \eqref{eq.ir-1},
        \begin{equation}\label{eq.sym-mtrx-key-simplified}
            \begin{split}
                \B_{t+1} & = \bfPhi_{t+1}\bfPhi_{t+1}^\top =  (1 - \eta)^2 \bfPhi_t \bfPhi_t^\top  + 2 \eta (1 - \eta) \bfSigma   + \eta^2\bfSigma \bfPhi_t^{-\top} \bfPhi_t^{-1} \bfSigma  \\
    			& =  (1 - \eta)^2 \B_t  + 2 \eta (1 - \eta) \bfSigma  + \eta^2\bfSigma \B_t^{-1} \bfSigma.
            \end{split}
        \end{equation}
	
	Note that $\B_{t+1}$ is a PSD matrix by definition (hence the eigenvalues and singular values are the same). 
	To see the smallest eigenvalue of $\B_{t+1}$ is lower bounded, we will apply Lemma \ref{apdx.lemma.aux2} on \eqref{eq.sym-mtrx-key-simplified} twice, i.e., 
        \begin{equation}\label{eq.ir-2}
            \begin{split}
                &~~~~~ \sigma_r(\B_{t+1}) \\
    		& \stackrel{(c)}{\geq}   2 \eta (1 - \eta) \sigma_r( \bfSigma)  + \sigma_r \Big( (1 - \eta)^2 \B_t  + \eta^2\bfSigma \B_t^{-1} \bfSigma 	\Big) \\
    		& \stackrel{(d)}{\geq}   2 \eta (1 - \eta) \sigma_r( \bfSigma)  + (1 - \eta)^2  \sigma_r \big( \B_t \big) \\
    		& \stackrel{(e)}{\geq} (1 - \eta)^{2t+2} \sigma_r(\B_0)  + 2\eta (1 - \eta)\sigma_r( \bfSigma) \frac{1 - (1 - \eta)^{2t + 2} }{2\eta - \eta^2} \\
    		& \stackrel{(f)}{\geq} (1 - \eta)^{2t+2} \sigma_r(\B_0) + (1 - \eta)\sigma_r( \bfSigma) - (1 - \eta)^{2t+3} \sigma_r( \bfSigma),
            \end{split}
        \end{equation}
	where (c) and (d) are because of Lemma \ref{apdx.lemma.aux2}; (e) is by unrolling $\sigma_r(\B_t)$ using (d); and (f) is by $\frac{2\eta}{2\eta - \eta^2} \geq 1$. Combining \eqref{eq.ir-1} and \eqref{eq.ir-2} concludes the induction.
\end{proof}

\subsubsection{Proof of Theorem \ref{thm.sym-ep-full}}

\begin{proof}
	The proof is by combining Lemmas \ref{thm.sym-ep} and \ref{thm.sym-ep-quad}.
\end{proof}

\begin{lemma}[Phase I. Linear convergence to near optima]\label{thm.sym-ep}
	Let $\eta= {\cal O}(\frac{1}{\kappa^3 \|\A \|_\fro}) $. After $ {\cal O}( \kappa^3 \sqrt{r} \log \kappa )$ iterations, ScaledGD \eqref{eq.sym-scaled-gd} with \Nystrom initialization \eqref{eq.sym-init} ensures that $\| \X_t \X_t^\top - \A \|_\fro \leq {\cal O}(1/\kappa^2)$. 
\end{lemma}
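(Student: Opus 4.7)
The plan is to reduce the claim to a matrix recursion in $\B_t := \bfPhi_t \bfPhi_t^\top$. By Lemma \ref{lemma.sym-ep.ir}, $\X_t = \Q \bfPhi_t$ with $\Q^\top\Q = \I_r$, so the error of interest collapses to $\|\X_t\X_t^\top - \A\|_\fro = \|\B_t - \bfSigma\|_\fro$, and the iteration rewrites (as already computed in \eqref{eq.sym-mtrx-key-simplified}) as $\B_{t+1} = (1-\eta)^2 \B_t + 2\eta(1-\eta)\bfSigma + \eta^2 \bfSigma \B_t^{-1}\bfSigma$. Subtracting $\bfSigma$ and using the algebraic identity $(1-\eta)^2 + 2\eta(1-\eta) - 1 = -\eta^2$ gives the compact error recursion
\begin{align*}
\mathbf{E}_{t+1} := \B_{t+1} - \bfSigma = (1-\eta)^2 \mathbf{E}_t + \eta^2\bigl(\bfSigma\B_t^{-1}\bfSigma - \bfSigma\bigr) = (1-\eta)^2 \mathbf{E}_t - \eta^2\bfSigma\B_t^{-1}\mathbf{E}_t.
\end{align*}

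Next I would pass to Frobenius norms and use $\|\bfSigma\| = \sigma_1(\A) = 1$ with sub-multiplicativity to obtain the one-step bound
\begin{align*}
\|\mathbf{E}_{t+1}\|_\fro \leq \Bigl[(1-\eta)^2 + \frac{\eta^2}{\sigma_r(\B_t)}\Bigr]\|\mathbf{E}_t\|_\fro.
\end{align*}
This factor is $\leq 1 - \eta/2$ precisely when $\sigma_r(\B_t) \gtrsim \eta$. The lower bound on $\sigma_r(\B_t)$ already furnished by Lemma \ref{lemma.sym-ep.ir} drives $\sigma_r(\B_t)$ toward $(1-\eta)\sigma_r(\A) = \Theta(1/\kappa)$ within ${\cal O}(1/\eta)$ iterations; hence for the chosen $\eta = {\cal O}(1/(\kappa^3\|\A\|_\fro)) \ll 1/\kappa$ the inequality $\sigma_r(\B_t) \gtrsim \eta$ eventually holds uniformly in $t$.

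The final step is to stitch two sub-regimes. In the \emph{transient} regime, where $\sigma_r(\B_t)$ is still climbing to $\Theta(1/\kappa)$, I would show that $\|\mathbf{E}_t\|_\fro$ stays bounded by ${\cal O}(\|\A\|_\fro)$ by controlling $\|\B_t\|$ from above through the same recursion combined with the worst-case lower bound on $\sigma_r(\B_t)$. In the \emph{contracting} regime, unrolling the one-step bound at rate $(1-\eta/2)$ from $\|\mathbf{E}_{\cdot}\|_\fro = {\cal O}(\|\A\|_\fro)$ down to $1/\kappa^2$ demands $\log(\kappa^2\|\A\|_\fro)/\eta$ iterations. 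Substituting $\eta \asymp 1/(\kappa^3\|\A\|_\fro)$ and $\|\A\|_\fro \leq \sqrt{r}\|\A\| = \sqrt{r}$ (valid in the exact-parametrized setting where $\rank(\A) = r$ and $\sigma_1(\A) = 1$) evaluates this to ${\cal O}(\kappa^3\sqrt{r}\log\kappa)$, matching the claim.

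The main obstacle I anticipate is the transient phase: while $\sigma_r(\B_t)$ remains below $\eta$, the Frobenius recursion factor exceeds $1$ and $\|\mathbf{E}_t\|_\fro$ need not contract. Handling this requires a careful Gronwall-style accumulation over the first ${\cal O}(1/\eta)$ steps, and it is precisely this bookkeeping that justifies why the stepsize must be as conservative as $1/(\kappa^3\|\A\|_\fro)$ rather than the naive $1/\kappa$ that a pure local-contraction analysis would suggest.
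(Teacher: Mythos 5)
Your reduction to the $\B_t$ recursion is the same as the paper's, but your error decomposition is genuinely different and in fact sharper. The paper keeps $\B_{t+1}-\bfSigma = (1-\eta)^2(\B_t-\bfSigma) - \eta^2\bfSigma + \eta^2\bfSigma\B_t^{-1}\bfSigma$ and bounds the last two terms crudely in norm, which yields an \emph{additive} recursion $\|\mathbf{E}_{t+1}\|_\fro \leq (1-\eta)\|\mathbf{E}_t\|_\fro + \eta^2(1+3\kappa)\|\A\|_\fro$; this contracts only down to a noise floor of order $\eta\kappa\|\A\|_\fro$, which is precisely why the paper must take $\eta = {\cal O}(1/(\kappa^3\|\A\|_\fro))$ so that the floor sits below $1/\kappa^2$. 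You instead exploit the exact cancellation $\bfSigma\B_t^{-1}\bfSigma - \bfSigma = -\bfSigma\B_t^{-1}\mathbf{E}_t$, giving $\mathbf{E}_{t+1} = (1-\eta)^2\mathbf{E}_t - \eta^2\bfSigma\B_t^{-1}\mathbf{E}_t$ and the \emph{multiplicative} bound $\|\mathbf{E}_{t+1}\|_\fro \leq \bigl[(1-\eta)^2 + \eta^2/\sigma_r(\B_t)\bigr]\|\mathbf{E}_t\|_\fro$. Once $\sigma_r(\B_t)\gtrsim 1/\kappa$ this is a true geometric contraction (indeed even with step size as large as $\Theta(1/\kappa)$, not just the conservative $1/(\kappa^3\|\A\|_\fro)$), so your Phase~1 actually drives the error all the way to zero rather than merely to the $1/\kappa^2$ neighborhood, which is a stronger conclusion than the lemma asserts.

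The one place you fall short is the transient regime, which you identify as the main obstacle and propose to handle by a Gronwall-style accumulation while $\sigma_r(\B_t)$ climbs. This machinery is unnecessary, and it leads you to misattribute the reason for the small step size. The paper dispenses with the transient phase entirely: Lemma \ref{lemma.sym-ep.ir} shows that if $\sigma_r(\B_{T_1}) \geq \sigma_r(\bfSigma)/3$ and $\eta \leq 2/3$, then $\sigma_r(\B_t) \geq \sigma_r(\bfSigma)/3 = 1/(3\kappa)$ for all $t \geq T_1$, and this is arranged at $T_1 = 0$ simply by taking $\xi$ in \eqref{eq.sym-init} large enough (Lemma \ref{lemma.sym-init} makes $\sigma_r(\B_0)$ scale as $\xi^2$), or alternatively by a warm-up of $T_1 = {\cal O}(1)$ steps at $\eta = 0.5$. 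With that in hand there is no phase where the multiplicative factor exceeds one, no Gronwall accumulation is needed, and your argument closes cleanly. The conservative $\eta = {\cal O}(1/(\kappa^3\|\A\|_\fro))$ in the statement is therefore not forced by transient bookkeeping — in the paper it is forced by the looser additive bound's noise floor, and in your sharper analysis it is not needed at all beyond matching the statement's complexity count ${\cal O}(\kappa^3\sqrt{r}\log\kappa)$.
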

\begin{proof}
Subtracting $\bfSigma$ from both sides of \eqref{eq.sym-mtrx-key-simplified}, we can obtain that
\begin{align*}
	\B_{t+1} - \bfSigma  =  (1 - \eta)^2 (\B_t - \bfSigma)  -\eta^2 \bfSigma  + \eta^2\bfSigma \B_t^{-1} \bfSigma.
\end{align*}
This implies that 
\begin{align*}
	& ~~~~~\|  \B_{t+1} - \bfSigma \|_\fro \\
	& \stackrel{(a)}{\leq}  (1 - \eta)^2 \| \B_t - \bfSigma \|_\fro  + \eta^2 \| \bfSigma \|_\fro + \eta^2  \|\bfSigma \B_t^{-1} \|_2 \| \bfSigma \|_\fro   \\
	& \stackrel{(b)}{\leq}  (1 - \eta)^2 \| \B_t - \bfSigma \|_\fro  + \eta^2 \| \bfSigma \|_\fro + \eta^2  \|\bfSigma \|_2  \|\B_t^{-1} \|_2 \| \bfSigma \|_\fro   \\
	& \leq  (1 - \eta) \| \B_t - \bfSigma\|_\fro  + \eta^2 \| \bfSigma \|_\fro + \eta^2  \frac{ \sigma_1(\bfSigma) \|\bfSigma \|_\fro}{\sigma_r(\B_t)}
\end{align*}
where (a) is by $\| \mathbf{M}\mathbf{N} \|_\fro \leq \| \mathbf{M} \|_2 \| \mathbf{N} \|_\fro$; and (b) follows from the sub-multiplicity of $\| \cdot \|_2$.

By Lemma \ref{lemma.sym-ep.ir}, if $\eta\leq2/3$ and there exists $T_1$ such that $\sigma_r(\B_{T_1})\geq\sigma_r(\Sigma)/3$, then it holds that $\sigma_r(\B_t) \geq \sigma_r(\bfSigma) /3, \forall t \geq T_1 $.
According to Lemma \ref{lemma.sym-init}, we can choose $\xi$ in \eqref{eq.sym-init} sufficiently large such that $\sigma_r(\B_0) \geq \sigma_r(\bfSigma)/3$, i.e., $T_1 = 0$.
Alternatively, to avoid such a requirement on $\xi$, we can simply choose a constant step size, e.g., $\eta=0.5$, and run a constant number of steps, $T_1={\cal O}(1/\eta)$, to ensure $\sigma_r(\B_{T_1})\geq\sigma_r(\Sigma)/3$; see Lemma \ref{lemma.sym-ep.ir}.
For simplicity of the results, our proof below goes with the first method, i.e., $T_1=0$.
\begin{align*}
	& ~~~~~\|  \B_{t+1} - \bfSigma \|_\fro \\
	& \leq  (1 - \eta) \| \B_t - \bfSigma\|_\fro  + \eta^2 \| \bfSigma \|_\fro + \eta^2  \frac{ \sigma_1(\bfSigma) \|\bfSigma \|_\fro}{\sigma_r(\B_t)} \\
	& \leq  (1 - \eta) \| \B_t - \bfSigma\|_\fro  + \eta^2 \| \bfSigma \|_\fro + 3 \eta^2  \frac{ \sigma_1(\bfSigma) \|\bfSigma \|_\fro}{\sigma_r(\bfSigma)} \\
	& \stackrel{(c)}{\leq} \eta \| \bfSigma \|_\fro + 3 \eta  \kappa \|\bfSigma \|_\fro + (1 - \eta)^{t+1-T_1} \| \B_{T_1} - \bfSigma\|_\fro \\
	& = \eta \| \A \|_\fro + 3 \eta  \kappa \|\A \|_\fro + (1 - \eta)^{t+1-T_1} \| \B_{T_1} - \bfSigma\|_\fro
\end{align*}
where (c) is by Lemma \ref{apdx.lemma.aux1}. From this inequality it is not difficult to see that once $\eta = {\cal O}(\frac{1}{ \kappa^3 \| \A \|_\fro})$, one will have $\|  \B_{t+1} - \bfSigma \|_\fro  \leq {\cal O} (1/\kappa^2)$ within the stated iterations.
\end{proof}

\begin{lemma}[Phase II. Quadratic convergence to global optima] \label{thm.sym-ep-quad}
If we choose $\eta = 0.5$ and suppose that after $T_2$ iterations, $\sigma_r(\B_{T_2}) \geq  \sigma_r(\bfSigma) / 3$ and $\|\B_{T_2} - \bfSigma\|_\fro \leq 2 /(3 \kappa^2)$ are satisfied, ScaledGD then ensures that for any $t \geq T_2$,
	\begin{align*}
		\| \X_{t+1} \X_{t+1}^\top - \A \|_\fro = \| \B_{t+1} - \bfSigma_r\|_\fro \leq \frac{4 }{3 \kappa^2} \frac{1}{ 2^{2^{t+1}}}.
	\end{align*}
\end{lemma}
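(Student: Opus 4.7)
The plan is to exploit $\eta = 0.5$ in \eqref{eq.sym-mtrx-key-simplified} to derive a clean algebraic identity that expresses $\B_{t+1}-\bfSigma$ as a quadratic expression in $\B_t-\bfSigma$, and then iterate the resulting one-dimensional scalar recursion. First, substituting $\eta=1/2$ into \eqref{eq.sym-mtrx-key-simplified} gives
\[
\B_{t+1} \;=\; \tfrac{1}{4}\B_t + \tfrac{1}{2}\bfSigma + \tfrac{1}{4}\bfSigma\B_t^{-1}\bfSigma.
\]
Writing $\D_t := \B_t - \bfSigma$ and computing $\D_t\B_t^{-1}\D_t = (\I-\bfSigma\B_t^{-1})\D_t = \D_t - \bfSigma + \bfSigma\B_t^{-1}\bfSigma$ (which is legitimate even though $\B_t$ and $\bfSigma$ need not commute, since it only uses right-multiplications by $\B_t^{-1}\B_t = \I$), I would establish the key identity
\[
\B_{t+1} - \bfSigma \;=\; \tfrac{1}{4}\,(\B_t-\bfSigma)\,\B_t^{-1}\,(\B_t-\bfSigma).
\]
This identity is what produces the quadratic (as opposed to linear) rate, and I expect its derivation to be the main technical obstacle, because the natural ``completing-the-square'' intuition is obstructed by the non-commutativity of $\B_t$ and $\bfSigma$.

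Second, I would verify the inductive invariant $\sigma_r(\B_t) \geq \sigma_r(\bfSigma)/3$ for every $t\geq T_2$. The base case is assumed. For the inductive step, apply Lemma \ref{apdx.lemma.aux2} directly to \eqref{eq.sym-mtrx-key-simplified} (dropping the PSD term $\eta^2\bfSigma\B_t^{-1}\bfSigma$) to obtain $\sigma_r(\B_{t+1}) \geq \tfrac{1}{4}\sigma_r(\B_t) + \tfrac{1}{2}\sigma_r(\bfSigma)$; plugging in the inductive hypothesis yields $\sigma_r(\B_{t+1}) \geq \tfrac{7}{12}\sigma_r(\bfSigma) \geq \sigma_r(\bfSigma)/3$. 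In particular $\|\B_t^{-1}\|_2 \leq 3\kappa$ along the trajectory, since $\sigma_r(\bfSigma)=1/\kappa$.

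Third, I would pass to norms on the key identity using $\|A M B\|_\fro \leq \|A\|_\fro \|M\|_2 \|B\|_2$ together with $\|\B_t-\bfSigma\|_2 \leq \|\B_t-\bfSigma\|_\fro$, yielding the quadratic contraction
\[
\|\B_{t+1}-\bfSigma\|_\fro \;\leq\; \tfrac{1}{4}\|\B_t-\bfSigma\|_\fro^{\,2}\,\|\B_t^{-1}\|_2 \;\leq\; \tfrac{3\kappa}{4}\,\|\B_t-\bfSigma\|_\fro^{\,2}.
\]

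Finally, I would renormalize by setting $a_t := (3\kappa/4)\,\|\B_t-\bfSigma\|_\fro$ so that $a_{t+1}\leq a_t^{2}$. The assumption $\|\B_{T_2}-\bfSigma\|_\fro \leq 2/(3\kappa^2)$ gives $a_{T_2} \leq 1/(2\kappa) \leq 1/2$, and unrolling the scalar recursion produces the doubly-exponential bound $a_{T_2+k} \leq a_{T_2}^{2^{k}} \leq (1/2)^{2^{k}}$. Translating back yields $\|\B_{T_2+k}-\bfSigma\|_\fro \leq \tfrac{4}{3\kappa}\,(1/2)^{2^{k}}$, which with a re-indexing $k \mapsto t+1-T_2$ (and the fact that $\kappa\geq 1$) matches the claimed estimate $\|\B_{t+1}-\bfSigma\|_\fro \leq \tfrac{4}{3\kappa^2}\,2^{-2^{t+1}}$ up to reshuffling of constants; the equality $\|\X_{t+1}\X_{t+1}^\top - \A\|_\fro = \|\B_{t+1}-\bfSigma\|_\fro$ follows from Lemma \ref{lemma.sym-ep.ir}, which ensures $\X_t = \Q\bfPhi_t$ throughout the trajectory.
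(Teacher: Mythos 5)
Your proof is correct, and it diverges from the paper's route at the key algebraic step. The paper substitutes $\C_t := \bfSigma^{-1}\B_t$, derives $\C_{t+1}-\I_r = \tfrac14\C_t^{-1}(\C_t-\I_r)^2$, and then multiplies back by $\bfSigma$ to obtain $\B_{t+1}-\bfSigma = \tfrac14\,\bfSigma\B_t^{-1}(\B_t-\bfSigma)\bfSigma^{-1}(\B_t-\bfSigma)$, which upon taking norms yields the contraction constant $\tfrac34\|\bfSigma\|_2\|\B_t^{-1}\|_2\|\bfSigma^{-1}\|_2 \le \tfrac{3\kappa^2}{4}$. You instead establish the sandwich form $\B_{t+1}-\bfSigma = \tfrac14(\B_t-\bfSigma)\B_t^{-1}(\B_t-\bfSigma)$ directly (I verified the expansion: both your identity and the paper's are equal to $\tfrac14(\B_t - 2\bfSigma + \bfSigma\B_t^{-1}\bfSigma)$), and this symmetric form avoids introducing $\C_t$ altogether and only picks up a single factor of $\|\B_t^{-1}\|_2\le 3\kappa$, giving the sharper contraction constant $\tfrac{3\kappa}{4}$. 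Your treatment of the lower-bound invariant $\sigma_r(\B_t)\ge\sigma_r(\bfSigma)/3$ via the one-step inequality $\sigma_r(\B_{t+1})\ge\tfrac14\sigma_r(\B_t)+\tfrac12\sigma_r(\bfSigma)$ is equivalent to what the paper obtains from Lemma \ref{lemma.sym-ep.ir}.

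The one place you should tighten is the final constant: if you discard the extra $\kappa$ and only carry $a_{T_2}\le\tfrac12$, you end at $\|\B_{T_2+k}-\bfSigma\|_\fro \le \tfrac{4}{3\kappa}2^{-2^k}$, which is actually \emph{weaker} than the stated $\tfrac{4}{3\kappa^2}2^{-2^k}$ by a factor of $\kappa$. Keeping the stronger initial estimate $a_{T_2}\le\tfrac{1}{2\kappa}$ gives $\|\B_{T_2+k}-\bfSigma\|_\fro\le \tfrac{4}{3\kappa}(2\kappa)^{-2^k} = \tfrac{4}{3\kappa^2}\cdot\kappa^{1-2^k}\cdot 2^{-2^k}\le \tfrac{4}{3\kappa^2}2^{-2^k}$ since $\kappa\ge1$ and $k\ge0$, which reproduces the stated bound (and in fact improves on it for $k\ge1$). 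So your route actually buys a slightly better $\kappa$-dependence, but the ``reshuffling of constants'' you gesture at is a genuine inequality in the wrong direction unless you retain the $1/(2\kappa)$ estimate throughout.
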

\begin{proof}
	Let $\C_t = \bfSigma^{-1} \B_t$. We can rewrite \eqref{eq.sym-mtrx-key-simplified} as 
	\begin{align*}
		\C_{t+1} =  (1 - \eta)^2 \C_t  + 2 \eta (1 - \eta) \mathbf{I}_r + \eta^2 \C_t^{-1}. 
	\end{align*}
	Subtracting $\mathbf{I}_r$ and rearranging it, we arrive at
	\begin{align*}
		\C_{t+1} - \mathbf{I}_r  =  (1 - 2\eta) (\C_t - \mathbf{I}_r) + \eta^2 \C_t^{-1} (\C_t - \mathbf{I}_r)^2.
	\end{align*}
	By choosing $\eta=0.5$, we have that 
	\begin{align*}
		\C_{t+1} - \mathbf{I}_r  = \frac{1}{4} \C_t^{-1} (\C_t - \mathbf{I}_r)^2.
	\end{align*}
	Multiplying both sides with $\bfSigma$, we have that
	\begin{align}
		\B_{t+1} - \bfSigma & = \frac{1}{4} \bfSigma \B_t^{-1}	 \bfSigma (\C_t - \mathbf{I}_r) (\C_t - \mathbf{I}_r) \nonumber \\
		& = \frac{1}{4} \bfSigma \B_t^{-1} (\B_t - \bfSigma ) \bfSigma^{-1} (\B_t - \bfSigma ). \nonumber
	\end{align}
	This implies that
	\begin{align}
		\| \B_{t+1} - \bfSigma\|_\fro & \leq \frac{1}{4} \| \bfSigma\|_2 \|\B_t^{-1}\|_2 \| \B_t - \bfSigma \|_\fro \|\bfSigma^{-1}\|_2 \| \B_t - \bfSigma \|_\fro \nonumber \\
		& \stackrel{(a)}{\leq} \frac{3}{4} \frac{\sigma_1(\bfSigma)}{ \sigma_r^2(\bfSigma)} \| \B_t - \bfSigma \|_\fro^2 \nonumber \stackrel{(b)}{=} \frac{3 \kappa^2}{4} \| \B_t - \bfSigma \|_\fro^2 
	\end{align}
	where (a) is by Lemma \ref{lemma.sym-ep.ir}, i.e., once $\sigma_r(\B_{T_2}) \geq \sigma_r(\bfSigma) / 3$, then $\sigma_r(\B_t) \geq \sigma_r(\bfSigma) / 3$ holds for all $t \geq T_2$; and (b) is by $\sigma_1(\bfSigma) = 1$ and $\sigma_r(\bfSigma) = 1/\kappa$.

	Finally, applying Lemma \ref{apdx.lemma.aux3}, it can be seen that a quadratic rate can be established long as $\| \B_{T_2} - \bfSigma \|_\fro \leq \frac{2}{3\kappa^2}$, and this condition is satisfied from Lemma \ref{thm.sym-ep}.
\end{proof}

\subsection{Missing proofs for the symmetric and under-parametrized setting}\label{apdx.sec.sym-up}

We start with some notation that would be helpful for this subsection. Let the compact eigenvalue decomposition of $\A= \Q \bfSigma \Q^\top$, where $\Q \in \mathbb{R}^{m\times r_A}$, and $\bfSigma \in \mathbb{R}^{r_A \times r_A}$.

In Lemma \ref{lemma.sym-up-x-equiv}, we will prove that $\X_t = \Q\bfPhi_t$ always holds if we employ \Nystrom initialization and ScaledGD in \eqref{eq.sym-scaled-gd}, where $\bfPhi_t \in \mathbb{R}^{r_A \times r}$. We also denote  $\bfTheta_t := \bfPhi_t(\bfPhi_t^\top \bfPhi_t)^{-1}$, where the invertibility of $(\bfPhi_t^\top \bfPhi_t)$ will become clear in the proof.

Lastly, let $\B_t:= \bfPhi_t^\top \bfSigma^{-1} \bfPhi_t$. Note that $\B_t \in \mathbb{R}^{r \times r}$ and $\B_t = \X_t^\top \A^{\dagger} \X_t$. 

\subsubsection{Proof of Lemma \ref{lemma.sym-up-weak-opt}}
\begin{proof}
	We start with rewriting $\A$, 
	\begin{align}
		\A = [\Q_1, \Q_2] 
		\begin{bmatrix}
		\bfSigma_1  & \mathbf{0} \\
		\mathbf{0}  & \bfSigma_2
		\end{bmatrix} \begin{bmatrix}
		\Q_1^\top  \\
		\Q_2^\top
		\end{bmatrix} = \Q_1 \bfSigma_1\Q_1^\top + \Q_2 \bfSigma_2 \Q_2^\top
	\end{align}
	where $\Q_1 \in \mathbb{R}^{m \times r}$ and $\Q_2 \in \mathbb{R}^{m \times (r_A - r)}$ are the first $r$ and other columns of $\Q$, respectively; and $\bfSigma_1 \in \mathbb{R}^{r \times r}$ and $\bfSigma_2 \in \mathbb{R}^{(r- r_A) \times (r - r_A)}$ are diagonal matrices formed by the first $r$ and the rest diagonal entries of $\bfSigma$.
	
	It is not difficult to see that the optimal solution of \eqref{eq.sym} is $\X_* = \Q_1 \bfSigma_1^{1/2} \U^\top $, where $\U \in \mathbb{R}^{r \times r}$ is any unitary matrix that accounts for rotation. Note that the pseudo-inverse of $\A$ can be written as $\A^{\dagger}= \Q \bfSigma^{-1} \Q^\top$. Plugging $\X_*$ into the definition of weak optimality, we arrive at
	\begin{align}
		\X_*^\top \A^{\dagger} \X_* & = \U \Sigma_1^{1/2} \Q_1^\top  (\Q_1 \bfSigma_1^{-1} \Q_1^\top + \Q_2 \bfSigma_2^{-1} \Q_2^\top)\Q_1 \bfSigma_1^{1/2} \U^\top  \stackrel{(a)}{=} \I_r \nonumber
	\end{align}
	where in (a) we use the facts $\Q_1^\top \Q_1 = \I_r$ and $\Q_1^\top \Q_2 = \mathbf{0}_{r \times (r_A - r)}$. This concludes the proof.
\end{proof}

\subsubsection{Proof of Lemma \ref{lemma.sym-up-x-equiv}}

\begin{proof}
	The proof is based on induction. First we have that $\X_0 = \A \bfOmega = \Q\bfSigma\Q^\top \bfOmega$. It is clear that $\bfPhi_0 = \bfSigma\Q^\top \bfOmega$. Now suppose that one can write $\X_t = \Q\bfPhi_t$, following the update \eqref{eq.sym-scaled-gd}, it is not hard to see that
        \begin{equation}\label{eq.apdx.xxxxx}
            \begin{split}
                \bfPhi_{t+1} & = \bfPhi_t - \eta \big( \bfPhi_t \bfPhi_t^\top  -  \bfSigma \big) \bfPhi_t (\bfPhi_t^\top \bfPhi_t)^{-1}  \\
    	 	& = (1 - \eta) \bfPhi_t  + \eta  \bfSigma \underbrace{\bfPhi_t (\bfPhi_t^\top \bfPhi_t)^{-1}}_{:= \bfTheta_t}. 
            \end{split}
        \end{equation}
	The variable $\bfTheta_t \in \mathbb{R}^{r_A \times r}$ can be roughly viewed as a pseudo-inverse of $\bfPhi_t^\top$ because $\bfPhi_t^\top \bfTheta_t = \mathbf{I}_r$. 
	We note that the invertibility of $(\bfPhi_t^\top \bfPhi_t)$ will become clear in Lemma \ref{lemma.up-lower-bound}. 
\end{proof}

\subsubsection{Proof of Theorem \ref{thm.sym-up}}

\begin{proof}
	Using $\bfPhi_t^\top \bfTheta_t = \mathbf{I}_r$, definition of $\B_t=\bfPhi_t^\top \bfSigma^{-1} \bfPhi_t$ (at the start of Apdx. \ref{apdx.sec.sym-up}), and the update of $\bfPhi_{t+1}$ in \eqref{eq.apdx.xxxxx}, it is not difficult to verify that
	\begin{align}\label{eq.sym-up-bt}
		\B_{t+1} = (1 - \eta)^2 \B_t + 2\eta(1 - \eta) \I_r + \eta^2 \bfTheta_t^\top \bfSigma \bfTheta_t.
	\end{align}

	Subtracting $\I_r$ on both sides of \eqref{eq.sym-up-bt}, we can get
	\begin{align*}
		\B_{t+1} - \I_r = (1 - \eta)^2 (\B_t - \I_r) - \eta^2 \I_r + \eta^2 \bfTheta_t^\top \bfSigma \bfTheta_t.
	\end{align*}
	This ensures that
	\begin{align*}
		& ~~~~ \| \B_{t+1} - \I_r \|_\fro  \\
		& \leq (1 -\eta)^2 \| \B_t - \I_r \|_\fro + \eta^2 \sqrt{r} + \eta^2 \| \bfTheta_t^\top \bfSigma \bfTheta_t \|_\fro \nonumber \\
		& \leq (1 -\eta)^2 \| \B_t - \I_r \|_\fro + \eta^2 \sqrt{r} + \eta^2  \frac{r}{\sigma_r(\B_t)} \nonumber
	\end{align*}
	where the last inequality is because of Lemma \ref{lemma.sym-ep-preparation}. Suppose that $\eta \leq 2/3$, from Lemma \ref{lemma.up-lower-bound}, one can see that there exists a time $T_1$ such that $\sigma_r(\B_t) \geq 1/3, \forall t\geq T_1$. We assume $T_1 = 0$ following the same argument (i.e., initialized large with large $\xi$) as previous proofs. With these arguments, we obtain that
        \begin{equation}\label{eq.sym-up-fnl}
            \begin{split}
                & ~~~~\| \B_{t+1} - \I_r \|_\fro \\
    		& \leq  (1 - \eta) \| \B_t - \I_r \|_\fro  + \eta^2 \sqrt{r} + 3 r \eta^2  \\
    		& \leq \eta \sqrt{r} + 3 \eta r + (1 - \eta)^{t+1-T_1} \| \B_{T_1} - \I_r \|_\fro \\
    		& \leq  \eta \sqrt{r}  + 3 \eta r + (1 - \eta)^{t+1-T_1} \| \B_{T_1} - \I_r \|_\fro.
            \end{split}
        \end{equation}
	This implies a linear rate, i.e, $\| \B_{t+1} - \I_r \|_\fro  \leq {\cal O}(\eta r) + \epsilon$ if $\eta = {\cal O}(1)$ with sufficient iterations.
	
	Inequality \eqref{eq.sym-up-fnl} also implies that choosing $\eta = {\cal O}(\epsilon/r)$, $\| \B_{t+1} - \I_r \|_\fro \leq \epsilon$ at a rate of ${\cal O}(\frac{r}{\epsilon} \log\frac{1}{\epsilon})$. The proof is thus completed.
\end{proof}

\subsubsection{Proof of Lemma \ref{lemma.sym-up-how-far-away-from-opt}}

\begin{proof}

We start with notation. Let 
\begin{align}\label{eq.decompostion-for-up}
	\bfSigma = 
	\begin{bmatrix}
	\bfSigma_1 & \mathbf{0} \\
	\mathbf{0} & \bfSigma_2 
	\end{bmatrix}, ~~~~~ \bfPhi_t = \begin{bmatrix}
	\M_t  \\
	\N_t
	\end{bmatrix},
\end{align}
where $\bfSigma_1 \in \mathbb{R}^{r \times r}$ is the learnable eigenvalues, while $\bfSigma_2 \in \mathbb{R}^{ (r_A - r) \times (r_A - r)}$ are the unlearnable eigenvalues, and $\M_t \in \mathbb{R}^{r \times r}$ and $\N_t \in \mathbb{R}^{(r_A - r) \times r}$. Ideally at global convergence, we hope that $\M_t \rightarrow \bfSigma_1^{1/2}$ up to rotation; while $\N_t \rightarrow \mathbf{0}$.

We consider a scenario with $t \rightarrow \infty$, i.e., $\epsilon\rightarrow0$ and $\mathbf{B}_t = \I_r$. Using \eqref{eq.decompostion-for-up} to rewrite $\mathbf{B}_t = \I_r$, we have that
\begin{align}\label{eq.key}
	\M_t^\top \bfSigma_1^{-1} \M_t + \N_t^\top \bfSigma_2^{-1} \N_t = \I_r.
\end{align}
The above equation implies that
\begin{align}\label{eq.up-trace}
	\tr(\M_t^\top \bfSigma_1^{-1} \M_t)	& = \tr(\M_t^\top \bfSigma_1^{-1/2} \bfSigma_1^{-1/2} \M_t) \\
	& = \| \bfSigma_1^{-1/2} \M_t \|	_\fro^2 \stackrel{(a)}{\leq} r \nonumber
\end{align}
where (a) is by \eqref{eq.key} and Lemma \ref{apdx.lemma.aux4}.

Since we hope $\bfSigma_1^{-1/2} \M_t \rightarrow \mathbf{I}_r$, we have that
\begin{equation}\label{eq.up-m}
    \begin{split}
        & ~~~~ \|\bfSigma_1^{-1/2} \M_t - \I_r \|_\fro^2 \\
    	& = \tr\bigg(  (\bfSigma_1^{-1/2} \M_t - \I_r )^\top (\bfSigma_1^{-1/2} \M_t - \I_r )  \bigg) \\
    	& = \tr\big(  \M_t^\top  \bfSigma_1^{-1/2}  \bfSigma_1^{-1/2}  \M_t \big) + \tr(\I_r) - 2 \tr(\M_t^\top  \bfSigma_1^{-1/2} ) \\
    	& \stackrel{(a)}{\leq} \tr\big(  \M_t^\top  \bfSigma_1^{-1/2}  \bfSigma_1^{-1/2}  \M_t \big) + \tr(\I_r) + 2 r^{3/2} \\
    	& \stackrel{(b)}{\leq} 2r + 2r^{3/2},
    \end{split}
\end{equation}
where (a) is because that i) for any $r \times r$ matrix $\C$, we have that $\tr(\C) \geq r \min_i \C_{ii} \geq  - r \| \C\|_\fro $, ii) take $\C = \M_t^\top  \bfSigma_1^{-1/2} $ and then apply \eqref{eq.up-trace}; and (b) is by \eqref{eq.up-trace}.

To bound $\N_t$, it can be seen that
\begin{align}\label{eq.up-n}
	\frac{1}{\sigma_{r+1}(\A)} \tr\big( \N_t^\top \N_t \big)	\leq \tr\big( \N_t^\top \bfSigma_2^{-1} \N_t	\big) \stackrel{(c)}{\leq} r
\end{align}
where (c) is by applying Lemma \ref{apdx.lemma.aux4} on \eqref{eq.key}.
This suggests that $\| \N_t \|_\fro \leq \sqrt{r \sigma_{r+1}(\A)}$. 

Lastly, note that $\X_*$ can be written as $\X_* = \Q [ \bfSigma_1^{1/2}, \mathbf{0} ]^\top$ and $\X_t = \Q \mathbf{\Phi}_t$. Using this fact and combining \eqref{eq.up-m} and \eqref{eq.up-n}, we have that
\begin{equation}
    \begin{split}
        \| \X_t - \X_* \|_\fro^2 & = \| \M_t - \bfSigma_1^{1/2} \|_\fro^2 + \| \N_t \|_\fro^2  \\
	& =\| \bfSigma_1^{1/2} (\bfSigma_1^{-1/2}\M_t - \I_r) \|_\fro^2 + \| \N_t \|_\fro^2  \\
	& \leq \sigma_1(\bfSigma_1^{1/2})^2 \| \bfSigma_1^{-1/2}\M_t - \I_r \|_\fro^2 +  \| \N_t \|_\fro^2 \\
	& = {\cal O}(r^{3/2}),
    \end{split}
\end{equation}
where we used $\sigma_1(\bfSigma) = 1$ and $\sigma_{r+1}(\bfSigma) \leq 1$. The proof is thus completed.
\end{proof}

\subsubsection{Useful lemmas for symmetric and under-parametrized problems}

It is clear that $\B_t$ is symmetric by definition, i.e., $\B_t = \bfPhi_t^\top \bfSigma^{-1} \bfPhi_t$. This enables us to give a lower bound on $\sigma_r(\B_t)$ using Lemma \ref{apdx.lemma.aux2}.

\begin{lemma}\label{lemma.up-lower-bound}
	$\sigma_r(\B_t)$ is lower bounded by  
	\begin{align*}
		\sigma_r(\B_{t+1}) \geq  (1 - \eta) - (1 - \eta)^{2t+3} + (1 - \eta)^{2t+2} \sigma_r(\B_0). 
	\end{align*}
\end{lemma}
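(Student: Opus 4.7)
The plan is to treat \eqref{eq.sym-up-bt} as a one-step recursion on $\sigma_r(\B_t)$ and then unroll. The identity \eqref{eq.sym-up-bt} decomposes $\B_{t+1}$ into three pieces: $(1-\eta)^2 \B_t$, the scalar shift $2\eta(1-\eta)\I_r$, and the extra term $\eta^2 \bfTheta_t^\top \bfSigma \bfTheta_t$. Since $\bfSigma$ is positive definite, the last piece is PSD, so its $r$-th singular value is nonnegative. The middle piece shifts every eigenvalue uniformly by $2\eta(1-\eta)$. Applying Lemma \ref{apdx.lemma.aux2} (which lower bounds the smallest singular value of a sum of symmetric matrices by the sum of their smallest singular values, when applicable here because $\B_t$ is symmetric and $\bfTheta_t^\top \bfSigma \bfTheta_t$ is PSD) yields the one-step inequality
\begin{align*}
\sigma_r(\B_{t+1}) \;\geq\; (1-\eta)^2 \sigma_r(\B_t) + 2\eta(1-\eta).
\end{align*}

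Next I would unroll this linear recursion. Writing $s_t := \sigma_r(\B_t)$ and iterating gives
\begin{align*}
s_{t+1} \;\geq\; (1-\eta)^{2t+2} s_0 + 2\eta(1-\eta) \sum_{k=0}^{t}(1-\eta)^{2k} \;=\; (1-\eta)^{2t+2} s_0 + \frac{2(1-\eta)\bigl(1 - (1-\eta)^{2t+2}\bigr)}{2-\eta},
\end{align*}
using the geometric sum formula with ratio $(1-\eta)^2$ and the factorization $1-(1-\eta)^2 = \eta(2-\eta)$.

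The final step is the elementary bound $\frac{2(1-\eta)}{2-\eta} \geq 1-\eta$, which holds whenever $\eta \geq 0$ (equivalent to $2 \geq 2-\eta$). Substituting this lower bound and expanding yields
\begin{align*}
\sigma_r(\B_{t+1}) \;\geq\; (1-\eta)^{2t+2}\sigma_r(\B_0) + (1-\eta) - (1-\eta)^{2t+3},
\end{align*}
which is exactly the claimed inequality. The only non-routine point is justifying the application of Lemma \ref{apdx.lemma.aux2}, which requires $\B_t$ symmetric and $\bfTheta_t^\top \bfSigma \bfTheta_t$ PSD; both are immediate from the definitions $\B_t = \bfPhi_t^\top \bfSigma^{-1} \bfPhi_t$ and positivity of $\bfSigma$ on its column space. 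No magnitude or initialization assumption on $\X_0$ is needed beyond what Lemma \ref{lemma.sym-up-x-equiv} has already established.
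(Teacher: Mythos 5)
Your proof is correct and follows essentially the same route as the paper: apply Lemma~\ref{apdx.lemma.aux2} to the three PSD summands in \eqref{eq.sym-up-bt}, drop the PSD term $\eta^2\bfTheta_t^\top\bfSigma\bfTheta_t$, unroll the resulting linear recursion, and finish with the bound $\frac{2(1-\eta)}{2-\eta}\geq 1-\eta$. The only cosmetic difference is that you evaluate the geometric sum directly whereas the paper delegates the unrolling to Lemma~\ref{apdx.lemma.aux1}; the resulting expressions coincide.
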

\begin{proof}
	Given the definition of $\B_t$, it is not difficult to see that $\B_t$ is PSD for all $t$. We can then apply Lemma \ref{apdx.lemma.aux2} on \eqref{eq.sym-up-bt} to arrive at 
	\begin{align*}
		&~~~~~ \sigma_r(\B_{t+1}) \\
		& \geq   2 \eta (1 - \eta)   + \sigma_r \big( (1 - \eta)^2 \B_t  + \eta^2 \bfTheta_t^\top \bfSigma \bfTheta_t	\big) \nonumber \\
		& \geq  2 \eta (1 - \eta) + (1 - \eta)^2  \sigma_r \big( \B_t \big) \nonumber \\
		& \stackrel{(a)}{\geq} (1 - \eta)^{2t+2} \sigma_r(\B_0)  + 2\eta (1 - \eta) \frac{1 - (1 - \eta)^{2t +2} }{2\eta - \eta^2} \nonumber \\
		& \stackrel{(b)}{\geq} (1 - \eta)^{2t+2} \sigma_r(\B_0) + (1 - \eta)- (1 - \eta)^{2t+3} \nonumber
	\end{align*}
	where (a) uses Lemma \ref{apdx.lemma.aux1} to unroll $\sigma_r(\B_t)$; and (b) is because $\frac{2\eta}{2\eta - \eta^2} \geq 1$. 
\end{proof}

\begin{lemma}\label{lemma.sym-ep-preparation}
	Let $\bfTheta_t$ and $\B_t$ defined the same as those in Apdx. \ref{apdx.sec.sym-up}. It is guaranteed to have that 
	\begin{align*}
		\| \bfTheta_t^\top \bfSigma \bfTheta_t \|_\fro \leq \frac{r}{\sigma_r(\B_t)}.
	\end{align*}
\end{lemma}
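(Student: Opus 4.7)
The plan is to use positive semidefiniteness of $\bfTheta_t^\top \bfSigma \bfTheta_t$ to pass from the Frobenius norm to a trace, then exploit the spectral normalization $\sigma_1(\bfSigma) = 1$ to strip $\bfSigma$ out of the trace, leaving a quantity depending only on $\bfPhi_t^\top \bfPhi_t$ that can then be linked back to $\sigma_r(\B_t)$. The key identity at my disposal is $\bfPhi_t^\top \bfTheta_t = \I_r$ (which follows directly from the definition of $\bfTheta_t$), and the key spectral fact is $\I_{r_A} \preceq \bfSigma^{-1} \preceq \kappa\I_{r_A}$.

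In order, the steps I would carry out are as follows. First, observe that $\bfTheta_t^\top \bfSigma \bfTheta_t$ is PSD (it is a congruence of the PSD matrix $\bfSigma$), so $\|\bfTheta_t^\top \bfSigma \bfTheta_t\|_\fro \leq \tr(\bfTheta_t^\top \bfSigma \bfTheta_t)$. Second, by cyclicity and the operator inequality $\bfSigma \preceq \sigma_1(\bfSigma)\I_{r_A} = \I_{r_A}$,
\begin{equation*}
\tr(\bfTheta_t^\top \bfSigma \bfTheta_t) = \tr(\bfSigma \bfTheta_t \bfTheta_t^\top) \leq \tr(\bfTheta_t \bfTheta_t^\top).
\end{equation*}
Third, a further cyclic step together with the definition $\bfTheta_t = \bfPhi_t(\bfPhi_t^\top\bfPhi_t)^{-1}$ telescopes to
\begin{equation*}
\tr(\bfTheta_t \bfTheta_t^\top) = \tr\big((\bfPhi_t^\top \bfPhi_t)^{-1}\big) \leq \frac{r}{\sigma_r(\bfPhi_t^\top\bfPhi_t)},
\end{equation*}
using the standard $\tr(\mathbf{M}^{-1}) \leq r\|\mathbf{M}^{-1}\|_2$ for any $r\times r$ PSD $\mathbf{M}$.

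The step I expect to be the main obstacle is the final bridge from $\sigma_r(\bfPhi_t^\top \bfPhi_t)$ to $\sigma_r(\B_t) = \sigma_r(\bfPhi_t^\top \bfSigma^{-1}\bfPhi_t)$ in the required direction. The easy comparison via $\bfSigma^{-1} \succeq \I_{r_A}$ gives $\B_t \succeq \bfPhi_t^\top\bfPhi_t$ and hence $\sigma_r(\B_t) \geq \sigma_r(\bfPhi_t^\top\bfPhi_t)$, which goes the wrong way. A safe fallback is to invoke the reverse bound $\bfSigma^{-1} \preceq \kappa\I_{r_A}$, which gives $\sigma_r(\bfPhi_t^\top\bfPhi_t) \geq \sigma_r(\B_t)/\kappa$ and concludes the argument. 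To avoid the extra $\kappa$ and recover the tight form in the statement, I would instead revisit Step 2 and replace the coarse estimate $\bfSigma \preceq \I_{r_A}$ with a Cauchy--Schwarz-type pairing that splits $\bfSigma = \bfSigma^{1/2}\cdot\bfSigma^{1/2}$ and groups one half with $\bfPhi_t$ via $\bfPhi_t^\top \bfTheta_t = \I_r$, so that $\B_t = (\bfSigma^{-1/2}\bfPhi_t)^\top(\bfSigma^{-1/2}\bfPhi_t)$ enters the bound directly; pushing this refinement through is the delicate piece.
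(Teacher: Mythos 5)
Your Steps 1 through 3 are sound and already yield
\begin{align*}
\|\bfTheta_t^\top \bfSigma \bfTheta_t\|_\fro \leq \tr\bigl((\bfPhi_t^\top\bfPhi_t)^{-1}\bigr) \leq \frac{r}{\sigma_r(\bfPhi_t^\top\bfPhi_t)} \leq \frac{\kappa r}{\sigma_r(\B_t)},
\end{align*}
the last inequality coming from $\B_t \preceq \kappa\,\bfPhi_t^\top\bfPhi_t$. The ``delicate piece'' you flag --- removing the extra $\kappa$ --- is not actually achievable, because the inequality as stated in the lemma appears to be false without a $\kappa$ factor. Take $r=1$, $r_A=2$, $\bfSigma=\text{diag}(1,1/\kappa)$, $\bfPhi_t=(1,1)^\top$. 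Then $\bfTheta_t=(1/2,1/2)^\top$, $\bfTheta_t^\top\bfSigma\bfTheta_t=(1+1/\kappa)/4$, $\B_t=1+\kappa$, and the claimed bound becomes $(\kappa+1)/(4\kappa)\leq 1/(\kappa+1)$, equivalent to $(\kappa-1)^2\leq 0$, which fails for every $\kappa>1$.

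The paper's proof is precisely the refinement you sketch: it sets $\mathbf{E}_t=\bfSigma^{1/2}\bfTheta_t$ and $\mathbf{F}_t=\bfSigma^{-1/2}\bfPhi_t$, uses $\mathbf{F}_t^\top\mathbf{E}_t=\I_r$ and $\mathbf{F}_t^\top\mathbf{F}_t=\B_t$, and then asserts $\sqrt{r}=\|\mathbf{F}_t^\top\mathbf{E}_t\|_\fro\geq\sigma_r(\mathbf{F}_t)\|\mathbf{E}_t\|_\fro$ on the grounds that $\mathbf{E}_t$ and $\mathbf{F}_t$ share a column space. That claim does not hold in general: $\text{ColSpan}(\mathbf{E}_t)=\bfSigma^{1/2}\,\text{ColSpan}(\bfPhi_t)$ while $\text{ColSpan}(\mathbf{F}_t)=\bfSigma^{-1/2}\,\text{ColSpan}(\bfPhi_t)$, and these coincide only when $\text{ColSpan}(\bfPhi_t)$ is $\bfSigma$-invariant. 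In the example above, $\mathbf{F}_t\propto(1,\sqrt{\kappa})^\top$ and $\mathbf{E}_t\propto(1,1/\sqrt{\kappa})^\top$ span different lines; once the columns of $\mathbf{E}_t$ have a component outside $\text{ColSpan}(\mathbf{F}_t)$, the inequality $\|\mathbf{F}_t^\top\mathbf{e}\|\geq\sigma_r(\mathbf{F}_t)\|\mathbf{e}\|$ can reverse, which is exactly what the counterexample exhibits. In short, your ``safe fallback'' with the $\kappa$ factor is the correct conclusion, your instinct that the refinement is the delicate piece was right, and this seems to be a genuine gap in the paper's own argument; the $\kappa$ would then propagate into the constants of Theorem~\ref{thm.sym-up}.
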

\begin{proof}
	Using the inequality $\|  \A^\top \A \|_\fro \leq \| \A \|_\fro^2 $, we have that
	\begin{align}\label{eq.init}
		\| \bfTheta_t^\top \bfSigma \bfTheta_t \|_\fro = \| \bfTheta_t^\top \bfSigma^{1/2} \bfSigma^{1/2} \bfTheta_t \|_\fro \leq \| \bfSigma^{1/2} \bfTheta_t \|_\fro^2.
	\end{align}
	
	Now let $\mathbf{E}_t:= \bfSigma^{1/2}\bfTheta_t$ and $\mathbf{F}_t:= \bfSigma^{-1/2}\bfPhi_t$. Since we have that $\mathbf{F}_t^\top \mathbf{E}_t = \I_r$, we have that
	\begin{align*}
		\| \mathbf{F}_t^\top \mathbf{E}_t \|_\fro = 	\| \I_r \|_\fro = \sqrt{r}.
	\end{align*}
	
	Since we also have that
	\begin{align}
		\sqrt{r} = \| \mathbf{F}_t^\top \mathbf{E}_t \|_\fro \stackrel{(a)}{\geq}  \sigma_r( \mathbf{F}_t ) \| \mathbf{E}_t \|_\fro \stackrel{(b)}{=} \sqrt{\sigma_r(\B_t)}\| \mathbf{E}_t \|_\fro,
	\end{align}
	where (a) holds because $\mathbf{E}_t$ and $\mathbf{F}_t$ share the same column space and row space and both of them have rank $r$, which implies that $\langle Null(\mathbf{F}), [\mathbf{E}_t]_i \rangle = \mathbf{0}, \forall i$ ($[\mathbf{E}_t]_i$ is the $i$th column of $\mathbf{E}_t$). Note that (a) does not hold true for general two matrices $\mathbf{E}_t$ and $\mathbf{F}_t$. (b) is because $\mathbf{F}_t^\top \mathbf{F}_t = \B_t$, which means that the singular values of $\mathbf{F}_t$ are just square root of eigenvalues of $\B_t$. This implies that $\| \mathbf{E}_t \|_\fro \leq \sqrt{r} / \sqrt{\sigma_r(\B_t)}$. Combining this inequality with \eqref{eq.init}, we have that
	\begin{align*}
		\| \bfTheta_t^\top \bfSigma \bfTheta_t \|_\fro \leq \| \bfTheta_t^\top \bfSigma^{1/2}\|_\fro^2 = \| \mathbf{E}_t \|_\fro^2 \leq \frac{r}{\sigma_r(\B_t)}.
	\end{align*}
	The proof is thus completed.
\end{proof}

\subsection{Symmetric and over-parametrized setting}\label{apdx.sec.sym-op}

\textbf{\Nystrom initialization for over-parametrization.} While the initialization still follows \eqref{eq.sym-init}, we need to adapt Lemma \ref{lemma.sym-init} to the over-parameterized setting, i.e., $r > r_A$. 

\begin{lemma}[Initialization for over-parametrization]	
	There exists a universal constant $\tau > 0$ such that $\sigma_{r_A}(\X_0) \geq \xi \tau (\sqrt{r} - \sqrt{r_A-1}) \sigma_{r_A}(\A)$ is satisfied with high probability. In other words, $\rank(\X_0)=r_A$ w.h.p.
\end{lemma}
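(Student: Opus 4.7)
The plan is to mirror the proof of Lemma \ref{lemma.sym-init}, with the roles of $r$ and $r_A$ swapped throughout to reflect that the matrix $\bfOmega$ is now ``wide'' relative to the rank of $\A$. The target bound has exactly the symmetric form of the exact/under-parametrized bound, with $r$ and $r_A$ interchanged in the $\sqrt{\cdot}-\sqrt{\cdot}$ factor, which strongly suggests the same three-ingredient argument carries over.

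First, I would factor the initialization as $\X_0 = (\Q \bfSigma)(\Q^\top \bfOmega)$, using the compact eigendecomposition $\A = \Q \bfSigma \Q^\top$. The left factor $\Q\bfSigma \in \mathbb{R}^{m \times r_A}$ has full column rank $r_A$ with $\sigma_{r_A}(\Q\bfSigma) = \sigma_{r_A}(\A)$. The right factor $\Q^\top \bfOmega \in \mathbb{R}^{r_A \times r}$ is now a ``fat'' Gaussian matrix because $r > r_A$; by rotational invariance of the i.i.d. Gaussian distribution (since the rows of $\Q^\top$ are orthonormal), its entries are i.i.d. ${\cal N}(0,\xi^2)$, just as in the original lemma.

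Next, I would invoke the Gaussian smallest-singular-value bound (Lemma \ref{apdx.lemma.aux-smallest-singular-value}) applied to $\Q^\top \bfOmega$. In the wide regime, for a $r_A \times r$ Gaussian matrix with $r \geq r_A$, the appropriate form of the tail inequality yields
\begin{equation*}
\mathbb{P}\!\left(\tfrac{\sigma_{r_A}(\Q^\top \bfOmega)}{\xi} \leq \tau (\sqrt{r} - \sqrt{r_A - 1})\right) \leq (C_1 \tau)^{r - r_A + 1} + e^{-C_2 r},
\end{equation*}
for universal constants $C_1, C_2 > 0$. This is the analog of the inequality used in Lemma \ref{lemma.sym-init}, with $r$ and $r_A$ swapped since now $r \geq r_A$ instead of $r_A \geq r$.

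Finally, I would combine the two ingredients via Lemma \ref{apdx.lemma.aux888} (product of smallest singular values), which applies because $\Q\bfSigma$ has full column rank. This gives
\begin{equation*}
\sigma_{r_A}(\X_0) \;\geq\; \sigma_{r_A}(\Q\bfSigma)\,\sigma_{r_A}(\Q^\top \bfOmega) \;=\; \sigma_{r_A}(\A)\,\sigma_{r_A}(\Q^\top\bfOmega) \;\geq\; \xi\tau(\sqrt{r}-\sqrt{r_A-1})\,\sigma_{r_A}(\A)
\end{equation*}
with high probability, as claimed. The only subtle step is confirming the correct form of the Gaussian singular-value bound in the wide regime (as opposed to the tall regime used in Lemma \ref{lemma.sym-init}); once that is in hand, the rest is a direct transcription. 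I do not expect a serious obstacle, as the bound on smallest singular values of rectangular Gaussian matrices is standard and symmetric up to transposition.
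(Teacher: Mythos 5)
Your proposal matches the paper's proof essentially step for step: same factorization $\X_0 = (\Q\bfSigma)(\Q^\top\bfOmega)$, same reduction to the smallest singular value of an $r_A \times r$ Gaussian matrix, same product-of-singular-values lemma, and the correct ``wide'' form of the tail bound. The one point you flag as subtle — the wide-regime version of Lemma~\ref{apdx.lemma.aux-smallest-singular-value} — is resolved in the paper exactly as you anticipate, by applying the lemma to the transpose $(\Q^\top\bfOmega)^\top$ and using invariance of singular values under transposition.
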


\begin{proof}
	Similar to the proof of Lemma \ref{lemma.sym-init}, let the compact eigenvalue decomposition of $\A$ be $\A = \Q \bfSigma \Q^\top$, where $\Q \in \mathbb{R}^{m \times r_A}$ and $\bfSigma \in \mathbb{R}^{r_A \times r_A}$. This implies that $ \X_0 = (\Q \bfSigma) (\Q^\top \bfOmega)$. 
	
	It is not hard to verify that the matrix $\Q^\top \bfOmega \in \mathbb{R}^{r_A \times r}$ is also a Gaussian random matrix, where each entry follows ${\cal N}(0, \xi^2)$. 	
	 Applying Lemma \ref{apdx.lemma.aux-smallest-singular-value} on $(\Q^\top \bfOmega)^\top$, and using the fact $(\Q^\top \bfOmega)^\top$ and $(\Q^\top \bfOmega)$ share the same singular values, it can be seen that 
		\begin{align*}
			\mathbb{P}\Big(\frac{\sigma_{r_A}(\Q^\top \bfOmega)}{\xi} \leq \tau (\sqrt{r} - \sqrt{r_A-1})\Big) \leq (C_1 \tau)^{r - r_A + 1}+ e^{-C_2 r}:= \delta_2
		\end{align*}
		where $C_1$ and $C_2$ are universal constants independent of $r_A$ and $r$. This inequality shows that with probability at least $1 - \delta_2$, $\sigma_{r_A}(\Q^\top \bfOmega) \geq \xi \tau (\sqrt{r} - \sqrt{r_A-1})$.
	
		Note that inequality $\sigma_{\min}(\mathbf{C}\mathbf{D}) \geq \sigma_{\min}(\mathbf{C})  \sigma_{\min}(\mathbf{D})$ holds given full column rank of $\mathbf{C}$; see Lemma \ref{apdx.lemma.aux888}. Applying it to \eqref{eq.sym-x0-decomposed}, we have that
		\begin{align*}
			\sigma_{r_A}(\X_0) & \geq \sigma_{r_A}(\Q\bfSigma) \sigma_{r_A}(\Q^\top \bfOmega) = \sigma_{r_A}(\A) \sigma_{r_A}(\Q^\top \bfOmega)  \\
			& \stackrel{(a)}{\geq} \xi \tau (\sqrt{r} - \sqrt{r_A-1}) \sigma_{r_A}(\A)
		\end{align*}
		where (a) holds with probability at least $1 - \delta_2$.
	\end{proof}

Next, we provide additional results of \Nystrom initialization on over-paramtrized setting of problem \eqref{eq.sym}, where we have $r_A < r$. For a desirable convergence rate, we need to slightly modify the ScaledGD update to 
\begin{align}\label{eq.sym-op-iter}
	\X_{t+1} = \X_t - \eta (\X_t \X_t^\top - \A) \X_t (\X_t^\top\X_t)^\dagger.	
\end{align}
Compared with iteration \eqref{eq.sym-scaled-gd} for exact-parametrization, the modification is on $(\X_t^\top\X_t)^\dagger$. This pseudo-inverse is necessary because $(\X_t^\top\X_t) $ is not necessarily invertible in the over-parametrized setting. We note that unlike previous work \citep{xu2023} which modifies the same term to $(\X_t^\top\X_t+\lambda \I)^{-1} $, \eqref{eq.sym-op-iter} does not need the damping parameter $\lambda \I$ in the preconditioner. We will observe shortly in Fig. \ref{fig.sym-op} that the quadratic rate is not achieved with the damping factor.

Let the compact eigendecomposition of $\A = \Q \bfSigma \Q^\top$ for $\Q \in \mathbb{R}^{m \times r_A}$, and $\bfSigma \in \mathbb{R}^{r_A \times r_A}$. We can also establish that $\X_t$ affords a simpler representation.

\begin{lemma}
	Under the \Nystrom initialization \eqref{eq.sym-init} and iteration \eqref{eq.sym-op-iter}, the variable $\X_t$ can be written as $\X_t = \Q \bfPhi_t$ for some $\bfPhi_t \in \mathbb{R}^{r_A \times r}$. Moreover, we have that
	\begin{align}\label{apdx.sym-op-phi}
		\bfPhi_{t+1} = (1 - \eta) \bfPhi_t + \eta \bfSigma (\bfPhi_t^\dagger)^{\top}.
	\end{align}
\end{lemma}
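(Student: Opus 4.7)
The proof proceeds by induction on $t$, following exactly the template of Lemma~\ref{lemma.sym-ep.ir}, with the only new ingredient being careful handling of the Moore--Penrose pseudoinverse that now replaces the ordinary inverse in \eqref{eq.sym-op-iter}. For the base case, I would simply substitute the \Nystrom initialization \eqref{eq.sym-init} into the compact eigendecomposition to write $\X_0 = \A\bfOmega = \Q\bfSigma\Q^\top\bfOmega$, giving $\X_0 = \Q\bfPhi_0$ with $\bfPhi_0 = \bfSigma\Q^\top\bfOmega \in \mathbb{R}^{r_A \times r}$.

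For the inductive step, assume $\X_t = \Q\bfPhi_t$. Plugging this into \eqref{eq.sym-op-iter} and using $\Q^\top\Q = \I_{r_A}$, the $\Q$ factors to the left, giving
\begin{align*}
\X_{t+1} = \Q\Big[\bfPhi_t - \eta(\bfPhi_t\bfPhi_t^\top - \bfSigma)\bfPhi_t(\bfPhi_t^\top\bfPhi_t)^{\dagger}\Big].
\end{align*}
Thus $\X_{t+1} = \Q\bfPhi_{t+1}$ with $\bfPhi_{t+1} = \bfPhi_t - \eta(\bfPhi_t\bfPhi_t^\top - \bfSigma)\bfPhi_t(\bfPhi_t^\top\bfPhi_t)^{\dagger}$. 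This preserves the first claim.

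To obtain the clean form \eqref{apdx.sym-op-phi}, I would invoke two standard pseudoinverse identities that follow directly from the SVD of $\bfPhi_t$: (i) $\bfPhi_t(\bfPhi_t^\top\bfPhi_t)^{\dagger} = (\bfPhi_t^\dagger)^\top$, and (ii) $\bfPhi_t\bfPhi_t^\dagger\bfPhi_t = \bfPhi_t$ (one of the four Moore--Penrose conditions). Using (i), the update becomes $\bfPhi_{t+1} = \bfPhi_t - \eta\bfPhi_t\bfPhi_t^\top(\bfPhi_t^\dagger)^\top + \eta\bfSigma(\bfPhi_t^\dagger)^\top$. The middle term rewrites as $\bfPhi_t(\bfPhi_t^\dagger\bfPhi_t)^\top$; since $\bfPhi_t^\dagger\bfPhi_t$ is an orthogonal projector (hence symmetric), this equals $\bfPhi_t\bfPhi_t^\dagger\bfPhi_t$, which collapses to $\bfPhi_t$ by (ii). Substituting yields $\bfPhi_{t+1} = (1-\eta)\bfPhi_t + \eta\bfSigma(\bfPhi_t^\dagger)^\top$, as claimed.

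The only real point of care is the pseudoinverse manipulation in identity (i), since $\bfPhi_t \in \mathbb{R}^{r_A \times r}$ with $r > r_A$ need not have full row rank across all iterations (rank could in principle drop). I would verify (i) at the SVD level: writing $\bfPhi_t = \U_\phi\bfSigma_\phi\V_\phi^\top$ with $\bfSigma_\phi$ diagonal, both sides reduce to $\U_\phi \bfSigma_\phi^\dagger \V_\phi^\top$, so the identity holds for any $\bfPhi_t$ without rank assumptions. This sidesteps the need to track invertibility along the trajectory, which is the feature that makes the analysis in the over-parametrized setting cleaner than one based on ordinary inverses.
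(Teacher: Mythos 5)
Your proof is correct, and it reaches the recursion \eqref{apdx.sym-op-phi} by a genuinely different algebraic route than the paper. Both proofs share the same skeleton: induction on $t$, with the base case $\bfPhi_0 = \bfSigma\Q^\top\bfOmega$ and the inductive step pulling $\Q$ to the left using $\Q^\top\Q = \I_{r_A}$, landing on $\bfPhi_{t+1} = \bfPhi_t - \eta(\bfPhi_t\bfPhi_t^\top - \bfSigma)\bfPhi_t(\bfPhi_t^\top\bfPhi_t)^\dagger$. Where you diverge is in simplifying this expression. The paper substitutes an explicit compact SVD $\bfPhi_t = \U_t\bfSigma_t\V_t^\top$ with $\U_t \in \mathbb{R}^{r_A \times r_A}$ unitary and $\bfSigma_t$ invertible, then cancels factors by hand; this implicitly bakes in the assumption $\rank(\bfPhi_t) = r_A$ at every iteration, which is a separate fact that the paper justifies only afterwards (via the $\B_t$ recursion and the rank-$r_A$ initialization). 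You instead invoke two coordinate-free pseudoinverse identities, $\bfPhi_t(\bfPhi_t^\top\bfPhi_t)^\dagger = (\bfPhi_t^\dagger)^\top$ and $\bfPhi_t\bfPhi_t^\dagger\bfPhi_t = \bfPhi_t$, together with the fact that $\bfPhi_t^\dagger\bfPhi_t$ is a symmetric projector, and these hold for an arbitrary matrix of any rank. Your SVD verification of the first identity is right: writing $\bfPhi_t = \U_\phi\bfSigma_\phi\V_\phi^\top$ (compact, rank $k$), both sides equal $\U_\phi\bfSigma_\phi^{-1}\V_\phi^\top$. The practical difference: your derivation of \eqref{apdx.sym-op-phi} is self-contained and rank-agnostic, so the lemma statement truly needs no hypothesis on $\rank(\bfPhi_t)$, whereas the paper's SVD manipulation tacitly presupposes full row rank and would formally need that tracked first. (Full rank is of course still required later to invert $\B_t$ in the convergence argument, so the burden is only relocated, but your version cleanly separates the algebraic identity from the rank-propagation claim.)
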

\begin{proof}
	We prove this by induction. Clearly, our initialization satisfies this because $\X_0 = \A \bfOmega =\Q \bfSigma \Q^\top \bfOmega$, i.e., $\bfPhi_0 := \bfSigma \Q^\top \bfOmega$. Now suppose that $\X_t = \Q \bfPhi_t$ holds for $t$. We then show that $\X_{t+1} = \Q \bfPhi_{t+1}$ to finish the induction. In particular, plugging $\X_t = \Q \bfPhi_t$ into \eqref{eq.sym-op-iter}, we arrive at 
	\begin{align*}
		\X_{t+1} = \Q \underbrace{\bigg[  \bfPhi_t - \eta (\bfPhi_t \bfPhi_t^\top - \bfSigma ) \bfPhi_t (\bfPhi_t^\top\bfPhi_t)^\dagger \bigg]}_{:= \bfPhi_{t+1}}.
	\end{align*}
	Clearly, the term inside the brackets is $\bfPhi_{t+1}$. The induction is thus finished.

	Now we proof the second part of this lemma. Let the SVD of $\bfPhi_t:= \U_t\bfSigma_t\V_t^\top$, where $\U_t \in \mathbb{R}^{r_A \times r_A}$, $\bfSigma_t \in \mathbb{R}^{r_A \times r_A}$, and $\V_t \in \mathbb{R}^{r \times r_A}$. We note that $\U_t$ is unitary for this case. With the SVD, we have that $\bfPhi_t\bfPhi_t^\top = \U_t \bfSigma_t^2 \U_t^\top$, and $(\bfPhi_t^\top\bfPhi_t)^\dagger = 	\V_t \bfSigma_t^{-2} \V_t^\top$. Plugging these into $\bfPhi_{t+1}$ defined earlier, we arrive at 
	\begin{align}
		\bfPhi_{t+1} & = \bfPhi_t - \eta (\U_t \bfSigma_t^2 \U_t^\top - \bfSigma ) \U_t\bfSigma_t\V_t^\top \V_t \bfSigma_t^{-2} \V_t^\top \nonumber \\
		& = \bfPhi_t - \eta (\U_t \bfSigma_t^2 \U_t^\top - \bfSigma ) \U_t \bfSigma_t^{-1} \V_t^\top \nonumber \\
		& =  \bfPhi_t - \eta \U_t \bfSigma_t  \V_t^\top + \eta \bfSigma \U_t \bfSigma_t^{-1} \V_t^\top \nonumber \\
		& = (1 - \eta) \bfPhi_t + \eta \bfSigma (\bfPhi_t^\dagger)^{\top}. \nonumber
	\end{align}
	This completes the proof.
\end{proof}

Next, let $\B_t = \bfPhi_t \bfPhi_t^\top$. With \eqref{apdx.sym-op-phi} we have that
\begin{equation}\label{apdx.op-Bt}
    \begin{split}
        \B_{t+1} & = (1 - \eta)^2 \bfPhi_t\bfPhi_t^\top + \eta (1 - \eta) \bfPhi_t\bfPhi_t^\dagger \bfSigma + \eta (1 - \eta) \bfSigma (\bfPhi_t^\dagger)^\top \bfPhi_t^\top + \eta^2 \bfSigma (\bfPhi_t^\dagger)^\top \bfPhi_t^\dagger \bfSigma  \\
	& \stackrel{(a)}{=} (1 - \eta)^2 \B_t + 2 \eta (1 - \eta)  \bfSigma + \eta^2 \bfSigma (\bfPhi_t^\dagger)^\top \bfPhi_t^\dagger \bfSigma \\
	& \stackrel{(b)}{=} (1 - \eta)^2 \B_t + 2 \eta (1 - \eta)  \bfSigma + \eta^2  \bfSigma \B_t^{-1} \bfSigma,
    \end{split}
\end{equation}
where in (a) we used the SVD of $\bfPhi_t:= \U_t\bfSigma_t\V_t^\top$, where $\U_t \in \mathbb{R}^{r_A \times r_A}$, $\bfSigma_t \in \mathbb{R}^{r_A \times r_A}$ and $\V_t \in \mathbb{R}^{r \times r_A}$, $\bfPhi_t^\dagger= \V_t\bfSigma_t^{-1}\U_t^\top$, and $\U_t$ is unitary; and in (b) we assume that $\B_t$ is full rank. Note that this assumption can be easily verified given $\rank(\B_0) = r_A$; and the iteration on $\B_t$ \eqref{apdx.op-Bt} is exactly the same as in exact-parametrized cases \eqref{eq.sym-mtrx-key-simplified}. The latter allows us to bound $\sigma_{r_A}(\B_t)$ away from $0$ in the same way as Lemma \ref{lemma.sym-ep.ir}.

In other words, the over-parametrized case under our initialization reduces to the exact-parametrized case given the same iteration on $\B_t$ \eqref{apdx.op-Bt} (cf. \eqref{eq.sym-mtrx-key-simplified}). This allows as to use the same argument of Theorem \ref{thm.sym-ep-full} to derive a quadratic rate for over-parametrized case. 

\begin{theorem}\label{thm.sym-op}
	With high probability over the initialization, the behavior of update \eqref{eq.sym-op-iter} under \Nystrom initialization \eqref{eq.sym-init} can be described as:
	
	Phase 1 (linear convergence). Let $\eta= {\cal O}(\frac{1}{\kappa^3 \|\A \|_\fro}) $. After $T_1:= {\cal O}(\kappa^3 \sqrt{r}\log \kappa)$ iterations, ScaledGD ensures that $\| \X_{T_1} \X_{T_1}^\top - \A \|_\fro \leq {\cal O}(1/\kappa^2)$. 
	
	Phase 2 (quadratic convergence). After Phase I, ScaledGD converges quadratically with $\eta=0.5$. In particular, $\| \X_T \X_T^\top - \A \|_\fro  \leq \epsilon$ is ensured after $T = {\cal O}\big( \log \log(\frac{1}{\kappa \epsilon}) \big)$ iterations.
\end{theorem}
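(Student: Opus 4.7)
The plan is to exploit the reduction already visible in the excerpt: under Nyström initialization, the modified ScaledGD iterate $\X_t$ always lies in the column span of $\A$, so writing $\X_t = \Q\bfPhi_t$ collapses the dynamics of the $m\times m$ object $\X_t\X_t^\top - \A$ to the dynamics of the $r_A\times r_A$ object $\B_t - \bfSigma$, where $\B_t := \bfPhi_t\bfPhi_t^\top$. The key observation is that the recursion \eqref{apdx.op-Bt} derived for $\B_t$ in the over-parametrized case is \emph{identical} to the recursion \eqref{eq.sym-mtrx-key-simplified} used in the exact-parametrized analysis. Thus, once we show the reduction is valid on the entire trajectory, Phases 1 and 2 of Theorem \ref{thm.sym-ep-full} transfer verbatim.

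The steps I would carry out are as follows. First, invoke the lemma preceding the theorem to write $\X_t = \Q \bfPhi_t$ with $\bfPhi_t \in \mathbb{R}^{r_A \times r}$, so that
\begin{align*}
\|\X_t\X_t^\top - \A\|_\fro = \|\bfPhi_t\bfPhi_t^\top - \bfSigma\|_\fro = \|\B_t - \bfSigma\|_\fro.
\end{align*}
Second, establish that the initialization is full rank in the sense $\rank(\X_0) = \rank(\bfPhi_0) = r_A$ with high probability, via the over-parametrized variant of Lemma \ref{lemma.sym-init} (that is $\sigma_{r_A}(\X_0) \geq \xi\tau(\sqrt{r}-\sqrt{r_A-1})\sigma_{r_A}(\A)$ w.h.p., which is the statement at the start of this subsection). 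This implies $\B_0 \in \mathbb{R}^{r_A \times r_A}$ is invertible, which is precisely the hypothesis needed to pass from line (a) to line (b) of \eqref{apdx.op-Bt}. Third, mirror the induction of Lemma \ref{lemma.sym-ep.ir} on the new recursion to prove the analogue
\begin{align*}
\sigma_{r_A}(\B_{t+1}) \geq (1-\eta)^{2t+2}\sigma_{r_A}(\B_0) + (1-\eta)\sigma_{r_A}(\A) - (1-\eta)^{2t+3}\sigma_{r_A}(\A),
\end{align*}
which keeps $\B_t$ invertible throughout the trajectory and therefore keeps the SVD-based derivation of \eqref{apdx.op-Bt} valid for every $t$.

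Given the reduction, Phase 1 follows by copying the argument of Lemma \ref{thm.sym-ep} line by line: subtract $\bfSigma$ from the $\B_t$-recursion, bound the extra term $\eta^2 \bfSigma \B_t^{-1}\bfSigma$ via $\sigma_{r_A}(\B_t) \geq \sigma_{r_A}(\bfSigma)/3$ (which holds after a constant number of iterations by the eigenvalue lower bound above, or immediately with a sufficiently large $\xi$), and unroll the resulting linear recursion with $\eta = {\cal O}(1/(\kappa^3\|\A\|_\fro))$ to obtain $\|\B_{T_1}-\bfSigma\|_\fro \leq 2/(3\kappa^2)$ after $T_1 = {\cal O}(\kappa^3\sqrt{r}\log\kappa)$ iterations. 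Phase 2 then follows by copying the argument of Lemma \ref{thm.sym-ep-quad}: with $\eta=1/2$, writing $\C_t = \bfSigma^{-1}\B_t$ gives $\C_{t+1}-\I = \tfrac{1}{4}\C_t^{-1}(\C_t-\I)^2$, so that $\|\B_{t+1}-\bfSigma\|_\fro \leq \tfrac{3\kappa^2}{4}\|\B_t-\bfSigma\|_\fro^2$, and Lemma \ref{apdx.lemma.aux3} yields the claimed $\log\log$ rate.

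The main obstacle I anticipate is the third step above: justifying that $\B_t$ stays invertible along the entire trajectory so that the clean recursion \eqref{apdx.op-Bt} actually applies for all $t$. In the exact-parametrized case, the inductive step uses $\bfPhi_t$ being a square invertible matrix; here $\bfPhi_t$ is rectangular and one must instead argue about $\bfPhi_t^\dagger$ and about $\B_t = \bfPhi_t\bfPhi_t^\top$ having full row-rank $r_A$. This is done by fixing the SVD $\bfPhi_t = \U_t\bfSigma_t\V_t^\top$ with $\U_t$ unitary (since $r \geq r_A$) and verifying that $\bfSigma_t$ retains $r_A$ nonzero singular values throughout, which is exactly the eigenvalue lower bound on $\B_t$ obtained above. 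Once this technical point is settled, the rest of the proof is a faithful transcription of the exact-parametrized analysis, and the theorem follows.
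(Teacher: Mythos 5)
Your proposal is correct and follows essentially the same route as the paper: the paper's proof of Theorem~\ref{thm.sym-op} consists precisely of establishing $\X_t = \Q\bfPhi_t$ with $\bfPhi_t \in \mathbb{R}^{r_A\times r}$, deriving the $\B_t$-recursion~\eqref{apdx.op-Bt} (which is identical to the exact-parametrized recursion~\eqref{eq.sym-mtrx-key-simplified}), noting that $\rank(\B_0)=r_A$ w.h.p.\ keeps the recursion well-defined, and then transferring Lemmas~\ref{thm.sym-ep} and~\ref{thm.sym-ep-quad} verbatim. Your treatment of the invertibility of $\B_t$ via the SVD of the rectangular $\bfPhi_t$ and the eigenvalue lower bound mirrors the paper's handling of that point.
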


\begin{proof}
	The proof is the same as Theorem \ref{thm.sym-ep-full} given the same iteration on $\B_t$ in \eqref{apdx.op-Bt}. We omit it to avoid redundancy.
\end{proof}

\textbf{Numerical illustration.} A numerical illustration for ScaledGD under \Nystrom initialization in over-parametrized case can be found in Fig. \ref{fig.sym-op}. We adopt ScaledGD-($\lambda$) \citep{xu2023}, the damping version of ScaledGD, as another baseline. It can be seen that only our approach achieves a quadratic rate; see  Fig. \ref{fig.sym-op}(a). We also slightly perturb our initialization with small noise, and it can be seen that the quadratic convergence breaks down immediately. This demonstrate the critical role of initialization: i) it helps to get rid of damping using pseudo-inverse; and ii) it ensures a quadratic rate. 

\begin{figure}[t]
	\centering
	\begin{tabular}{cc}
		\hspace{-0.2cm}
		\includegraphics[width=.45\textwidth]{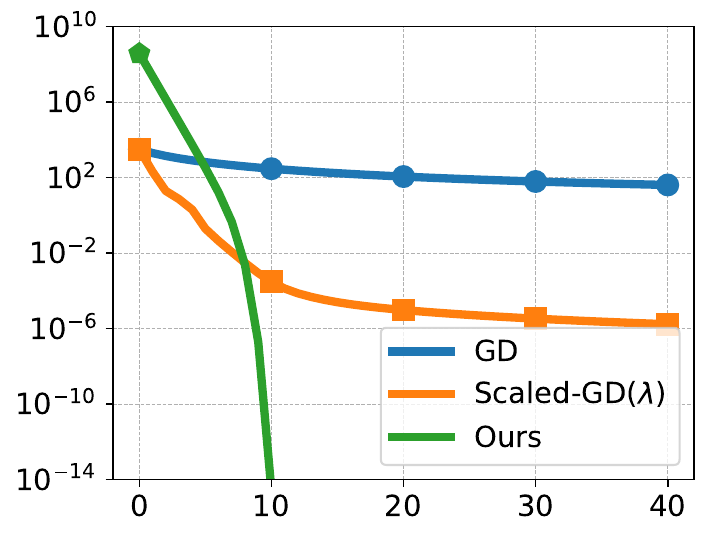}&
		\hspace{-0.2cm}
		\includegraphics[width=.45\textwidth]{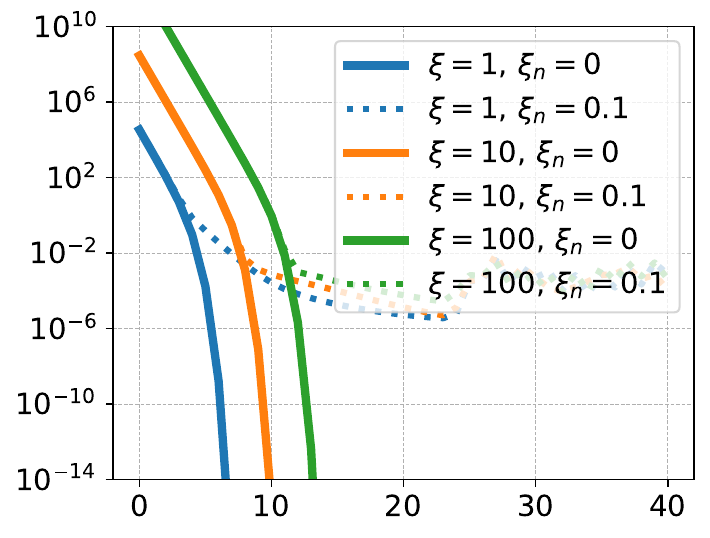}
		\\ 
	   (a) over-parametrization & (b) over-parametrization
	 \end{tabular}
	\vspace{-0.2cm}
	\caption{Convergence of ScaledGD under \Nystrom initialization (optimality error vs. iteration) on over-parametrized problems detailed in Apdx. \ref{apdx.sec.synthetic}. (a) Comparison of GD, ScaledGD-$(\lambda)$ with small initialization, and ScaledGD with our initialization. (b) Solid lines show that our initialization is not sensitive to magnitude; and dotted lines illustrate that quadratic convergence cannot be obtained even with slightly perturbed initialization, i.e., $\X_0 = \A \bfOmega + \mathbf{N}$, where $[\mathbf{N}]_{ij}~\sim {\cal N}(0, \xi_n^2)$. }
	\vspace{-0.3cm}
	 \label{fig.sym-op}
\end{figure}

\section{Missing proofs for asymmetric settings}
\subsection{Missing proofs for asymmetric and exact-parametrized setting}

\subsubsection{Proof of Lemma \ref{lemma.asym-ep-space-alignment}}

\begin{proof}
	The proof is finished by induction. From our \Nystrom initialization, one has that $\bfPsi_0 = \mathbf{0}$ and $\bfPhi_0 = \bfSigma \V^\top \bfOmega$. Now assume that one can write $\X_t = \U \bfPhi_t$ and $\Y_t = \V \bfPsi_t$ for some iteration $t$. We will show that $\X_{t+1} = \U \bfPhi_{t+1}$ and $\Y_{t+1} = \V \bfPsi_{t+1}$ under iteration \eqref{eq.asym-iter}. Let us start with $\X_{t+1}$. Note that if $t=0$, $\X_1 = \U\bfPhi_1$ is trivial. We only focus on $t \geq 1$, where we have
	\begin{align*}
		\X_{t+1} &= \X_t - \eta (\X_t \Y_t^\top - \A) \Y_t (\Y_t^\top \Y_t)^{-1} \\
		         & = \U\bfPhi_t - \eta (\U\bfPhi_t\bfPsi_t^\top \V^\top - \U\bfSigma \V^\top) \V \bfPsi_t (\bfPsi_t^\top \V^\top \V \bfPsi_t )^{-1} \\
		         & =  \U\bfPhi_t - \eta \U (\bfPhi_t\bfPsi_t^\top  - \bfSigma ) \bfPsi_t (\bfPsi_t^\top \bfPsi_t )^{-1} \\
		         & = \U \underbrace{\bigg( \bfPhi_t - \eta (\bfPhi_t\bfPsi_t^\top  - \bfSigma ) \bfPsi_t (\bfPsi_t^\top \bfPsi_t )^{-1} \bigg)}_{:= \bfPhi_{t+1}}.
	\end{align*}
	Note that the invertible of $(\bfPsi_t^\top \bfPsi_t )$ will become clear in the proof of Corollary \ref{coro.asym-ep}. 
	
	Using a similar argument, it is not hard to show that $\Y_t = \V \bfPsi_t$ for all $t$. We do not repeat here.
\end{proof}

\subsubsection{Proof of Theorem \ref{thm.asym-ep}}
	\begin{proof}
	Based on the initialization \eqref{eq.asym-init} and iteration 	\eqref{eq.asym-iter}, we can obtain that

	\begin{subequations}
	\begin{align}
		\bfPhi_1 = \bfPhi_0
	\end{align}
        \begin{equation}
            \begin{split}
                \bfPsi_1 & = \V^\top \Y_1 = \mathbf{0} -  \eta \V^\top (\mathbf{0} - \A)^\top \U\bfPhi_0 (\bfPhi_0^\top \U^\top \U\bfPhi_0 )^{-1} \\
    		& = \eta \V^\top \V\bfSigma \U^\top \U\bfPhi_0 (\bfPhi_0^\top \U^\top \U\bfPhi_0 )^{-1}  \\
    		& = \eta \bfSigma \bfPhi_0 (\bfPhi_0^\top \bfPhi_0 )^{-1} \\
    		& = \eta \bfSigma \bfPhi_0^{-\top}.
            \end{split}
        \end{equation}
	\end{subequations}

	This ensures that
	\begin{align*}
		\bfPhi_1 \bfPsi_1^\top = \eta \bfSigma. 
	\end{align*}
	Choosing $\eta=1$ completes the proof.
	\end{proof}

\subsubsection{Proof of Corollary \ref{coro.asym-ep}}

\begin{proof}
	The corollary is proved through an asymmetric-to-symmetric reduction.

	\textbf{Step 1. Positive definiteness of $\bfPhi_t\bfPsi_t^\top$.}
	We will first show that $\bfPhi_t\bfPsi_t^\top$ is symmetric and positive definite (PD) for any $t \geq 1$. From the proof of Theorem \ref{thm.asym-ep}, it can be seen that $\bfPhi_1\bfPsi_1^\top = \eta \bfSigma$ is symmetric and PD. This means that the base case of induction holds. Now suppose that $\bfPhi_t\bfPsi_t^\top$ is symmetric and PD at iteration $t$. Based on Lemma \ref{lemma.asym-ep-space-alignment}, we can write the iteration as
	\begin{subequations}
	\begin{align}
		\bfPhi_{t+1} &= (1 - \eta) \bfPhi_t + \eta \bfSigma \bfPsi_t^{-\top} \\
		\bfPsi_{t+1} &= (1 - \eta) \bfPsi_t + \eta \bfSigma \bfPhi_t^{-\top}.
	\end{align}
	\end{subequations}
	
	This gives that
	\begin{align}\label{eq.mtrx-key}
		\bfPhi_{t+1}\bfPsi_{t+1}^\top =  (1 - \eta)^2 \bfPhi_t \bfPsi_t^\top & + 2 \eta (1 - \eta) \bfSigma  + \eta^2\bfSigma  (\bfPhi_t\bfPsi_t^{\top})^{-1} \bfSigma.
	\end{align}
	The symmetry of $\bfPhi_{t+1}\bfPsi_{t+1}^\top$ directly follows from \eqref{eq.mtrx-key}. For the positive definiteness of $\bfPhi_{t+1}\bfPsi_{t+1}^\top$, we can apply Lemma \ref{apdx.lemma.aux2} to get
	\begin{align}
		\lambda_{\min}(\bfPhi_{t+1}\bfPsi_{t+1}^\top) \geq (1 - \eta)^2 \lambda_{\min} (\bfPhi_t \bfPsi_t^\top) & + 2 \eta (1 - \eta) \lambda_{\min}(\bfSigma)  + \eta^2 \lambda_{\min}(\bfSigma  (\bfPhi_t\bfPsi_t^{\top})^{-1} \bfSigma) > 0. \nonumber
	\end{align}
	This concludes the PD of $\bfPhi_{t+1}\bfPsi_{t+1}^\top$.

	\textbf{Step 2.} Define $\B_t:= \bfPhi_t \bfPsi_t^\top$, then \eqref{eq.mtrx-key} can be rewritten as 
	\begin{align}
		\B_{t+1} = (1 - \eta)^2 \B_t  + 2 \eta (1 - \eta) \bfSigma  + \eta^2\bfSigma \B_t^{-1} \bfSigma 
	\end{align}
	which is exactly the same iteration as \eqref{eq.sym-mtrx-key-simplified} for the symmetric exact-parametrized case. Based on the results from Step 1, that is, $\bfPhi_{t+1}\bfPsi_{t+1}^\top$ is symmetric and PD, we can apply the same analysis steps for symmetric exact-parametrized problems, i.e., Theorem \ref{thm.sym-ep-full} to get the bounds stated in this corollary. We do not repeat for conciseness.
\end{proof}

\subsection{Missing proofs for asymmetric and under-parametrized setting}

\subsubsection{How good is weak optimality?}\label{apdx.sec.gwo}

\begin{lemma}\label{lemma.asym-up-weak-opt}
	Every global optimum for \eqref{eq.asym} is also weakly optimal. 
\end{lemma}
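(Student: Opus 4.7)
The plan is to characterize every global optimum of \eqref{eq.asym} via the Eckart--Young theorem, express it as a factorization aligned with the top-$r$ singular subspaces of $\A$, and then verify the weak optimality identity $\Y_\star^\top \A^\dagger \X_\star = \I_r$ by a direct computation using the compact SVD.

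First I would invoke Eckart--Young to conclude that any globally optimal $(\X_\star,\Y_\star)$ satisfies $\X_\star \Y_\star^\top = \U_1 \bfSigma_1 \V_1^\top$, where $\U_1 \in \mathbb{R}^{m\times r}$ and $\V_1 \in \mathbb{R}^{n\times r}$ are the leading $r$ left/right singular vectors of $\A$ and $\bfSigma_1 = \mathrm{diag}(\sigma_1(\A),\ldots,\sigma_r(\A))$ is the top-left $r\times r$ block of $\bfSigma$. Since $\sigma_r(\A) > 0$ in the under-parametrized regime, $\rank(\X_\star \Y_\star^\top) = r$, which forces $\rank(\X_\star) = \rank(\Y_\star) = r$ and $\mathrm{ColSpan}(\X_\star) = \mathrm{ColSpan}(\U_1)$, $\mathrm{ColSpan}(\Y_\star) = \mathrm{ColSpan}(\V_1)$. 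Hence there exist invertible $\mathbf{M},\mathbf{N} \in \mathbb{R}^{r\times r}$ with
\begin{align*}
\X_\star = \U_1 \mathbf{M},\qquad \Y_\star = \V_1 \mathbf{N}, \qquad \mathbf{M}\mathbf{N}^\top = \bfSigma_1.
\end{align*}

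Next I would plug this parametrization into $\Y_\star^\top \A^\dagger \X_\star$, using $\A^\dagger = \V \bfSigma^{-1} \U^\top$ together with the identities $\V_1^\top \V = [\I_r \;\; \mathbf{0}]$ and $\U^\top \U_1 = [\I_r \;\; \mathbf{0}]^\top$. A short computation gives
\begin{align*}
\V_1^\top \V \bfSigma^{-1} \U^\top \U_1 \;=\; [\I_r\;\;\mathbf{0}]\,\bfSigma^{-1}\,[\I_r\;\;\mathbf{0}]^\top \;=\; \bfSigma_1^{-1},
\end{align*}
so $\Y_\star^\top \A^\dagger \X_\star = \mathbf{N}^\top \bfSigma_1^{-1} \mathbf{M}$. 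The constraint $\mathbf{M}\mathbf{N}^\top = \bfSigma_1$ yields $\mathbf{N}^\top = \mathbf{M}^{-1}\bfSigma_1$, hence $\mathbf{N}^\top \bfSigma_1^{-1}\mathbf{M} = \mathbf{M}^{-1}\mathbf{M} = \I_r$, which is exactly the weak optimality condition in Definition \ref{def.gwc}.

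The only technical subtlety to handle is the case of degenerate singular values at the boundary ($\sigma_r(\A) = \sigma_{r+1}(\A)$), where the best rank-$r$ approximation is not unique. I would address this by noting that for any such $(\X_\star,\Y_\star)$ one can still choose a valid SVD of $\A$ whose top-$r$ singular subspaces contain $\mathrm{ColSpan}(\X_\star)$ and $\mathrm{ColSpan}(\Y_\star)$; the rest of the argument is unaffected since $\A^\dagger$ is intrinsic and does not depend on this choice. Beyond this minor point, the proof is essentially a bookkeeping exercise; no hard step is anticipated.
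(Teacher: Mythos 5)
Your proof is correct and follows essentially the same route as the paper: characterize global optima through the leading $r$ singular subspaces of $\A$, then verify $\Y_\star^\top \A^\dagger \X_\star = \I_r$ by direct computation using the block structure of $\A^\dagger = \V_1\bfSigma_1^{-1}\U_1^\top + \V_2\bfSigma_2^{-1}\U_2^\top$. Your parametrization $\X_\star = \U_1\M$, $\Y_\star = \V_1\N$ with $\M\N^\top = \bfSigma_1$ is equivalent to the paper's $\X_\star = \U_1\bfSigma_1^{1/2}\Q$, $\Y_\star = \V_1\bfSigma_1^{1/2}\Q^{-\top}$ (take $\Q = \bfSigma_1^{-1/2}\M$), so the computations coincide; your version is marginally cleaner since it does not commit to a symmetric splitting of $\bfSigma_1$. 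The one genuine addition is your explicit invocation of Eckart--Young and the remark on degenerate singular values ($\sigma_r = \sigma_{r+1}$), where the rank-$r$ approximation is non-unique and the paper's stated characterization of global optima is, strictly speaking, incomplete; your fix of choosing an SVD basis adapted to a given optimal pair is correct and closes that small gap.
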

\begin{proof}
	We start with rewriting the SVD of $\A= \U\bfSigma\V^\top$ as
	 \begin{align}
		\A = [\U_1, \U_2] 
		\begin{bmatrix}
		\bfSigma_1  & \mathbf{0} \\
		\mathbf{0}  & \bfSigma_2
		\end{bmatrix} \begin{bmatrix}
		\V_1^\top  \\
		\V_2^\top
		\end{bmatrix} = \U_1 \bfSigma_1\V_1^\top + \U_2 \bfSigma_2 \V_2^\top
	\end{align}
	where $\U_1 \in \mathbb{R}^{m \times r}$ and $\U_2 \in \mathbb{R}^{m \times (r_A - r)}$ are the first $r$ and other columns of $\U$, respectively; $\bfSigma_1 \in \mathbb{R}^{r \times r}$ and $\bfSigma_2 \in \mathbb{R}^{(r- r_A) \times (r - r_A)}$ are diagonal matrices formed by the first $r$ and rest diagonal entries of $\bfSigma$; and $\V_1 \in \mathbb{R}^{n \times r}$ and $\V_2 \in \mathbb{R}^{n \times (r_A - r)}$ are the first $r$ and other columns of $\V$. 
	
	It is not hard to see that the optimal solutions of \eqref{eq.sym} are $\X_* = \U_1 \bfSigma_1^{1/2} \Q $ and $\Y_* = \V_1 \bfSigma_1^{1/2} \Q^{-\top}$, where $\Q \in \mathbb{R}^{r \times r}$ is any invertible matrix. Using these notation, we have that
	\begin{align}
		\Y_*^\top \A^{\dagger} \X_* & = \Q^{-1} \bfSigma_1^{1/2} \V_1^\top  (\V_1 \bfSigma_1^{-1} \U_1^\top + \V_2 \bfSigma_2^{-1} \U_2^\top)\U_1 \bfSigma_1^{1/2} \Q \nonumber \\
		& \stackrel{(a)}{=} \I_r \nonumber
	\end{align}
	where in (a) we use the facts $\U_1^\top \U_1 = \I_r$ and $\U_1^\top \U_2 = \mathbf{0}_{r \times (r_A - r)}$. This concludes the proof.

\end{proof}

\subsubsection{Proof of Theorem \ref{thm.asym-up}}
\begin{proof}
	The update in \eqref{eq.asym-iter} ensures that
        \begin{subequations}
	\begin{align}
		\bfPhi_1 = \bfPhi_0, ~~~~~~ 
	\end{align}
        \begin{equation}
            \begin{split}
                \bfPsi_1 & = \V^\top \Y_1 = \mathbf{0} -  \eta \V^\top (\mathbf{0} - \A)^\top \U\bfPhi_0 (\bfPhi_0^\top \U^\top \U\bfPhi_0 )^{-1} \\
		& = \eta \V^\top \V\bfSigma \U^\top \U\bfPhi_0 (\bfPhi_0^\top \U^\top \U\bfPhi_0 )^{-1}   \\
		& = \eta \bfSigma \bfPhi_0 (\bfPhi_0^\top \bfPhi_0 )^{-1} \\
		& \stackrel{(a)}{:=} \eta \bfSigma \bfTheta_0
            \end{split}
        \end{equation}
        \end{subequations}
	where in (a) we define $\bfTheta_t:= \bfPhi_t (\bfPhi_t^\top \bfPhi_t )^{-1}$. 
	
	From the Definition \ref{def.gwc}, we can see that
	\begin{align*}
		\Y_1^\top \A^{\dagger} \X_1 & = \bfPsi_1^\top \V^\top \V \bfSigma^{-1} \U^\top \U \bfPhi_1 = \bfPsi_1^\top \bfSigma^{-1} \bfPhi_1 \\
		& = \eta \bfTheta_0^\top \bfSigma \bfSigma^{-1} \bfPhi_0 = \eta \I_r.
 	\end{align*}
 	
 	This means that when $\eta=1$, generalized weak optimality can be achieved in one step for under-parametrized problems. 
\end{proof}

\subsection{Asymmetric and over-parametrized setting}\label{sec.apdx.asym-op}

Next, we establish the one step convergence with \Nystrom initialization in the asymmetric over-parametrized setting, where $r_A < r$. We also need to slightly modify the ScaledGD update to 
\begin{subequations}\label{eq.asym-op-iter}
\begin{align}
	& \X_1  = \X_0, ~\text{and}~ \X_{t+1} = \X_t - \eta (\X_t \Y_t^\top - \A) \Y_t (\Y_t^\top \Y_t)^{\dagger}, \forall t \geq 1 \\
	& \Y_{t+1} = \Y_t - \eta (\X_t \Y_t^\top - \A)^\top \X_t (\X_t^\top \X_t)^{\dagger}, \forall t \geq 0.
\end{align}
\end{subequations}
Comparing with \eqref{eq.asym-iter}, the difference is that here we use pseudo-inverse to bypass the possible non-invertibility of $(\X_t^\top\X_t) $ and $(\Y_t^\top\Y_t) $ in the over-parametrized case. We also note that to the best of our knowledge, there is no previous result that establishes the convergence of ScaledGD (or its variants) for asymmetric over-parametrized problems.

\begin{theorem}\label{thm.asym-op}
	Under \Nystrom initialization \eqref{eq.asym-init}, the modified ScaledGD iterations \eqref{eq.asym-op-iter} converge globally in a single step w.h.p. over the initialization, i.e., $\X_1\Y_1^\top  =  \A$ if the learning rate is chosen as $\eta=1$.
\end{theorem}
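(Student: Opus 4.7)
The plan is to mimic the computation from the proof of Theorem~\ref{thm.asym-ep}, while being careful about the rank of the intermediate matrices since the $r \times r$ Gram matrices $\X_0^\top\X_0$ and $\Y_0^\top\Y_0$ are now rank deficient (so that pseudo-inverses genuinely differ from inverses). The starting observation is that, writing the compact SVD $\A = \U\bfSigma\V^\top$ with $\U \in \mathbb{R}^{m \times r_A}$, $\bfSigma \in \mathbb{R}^{r_A \times r_A}$, $\V \in \mathbb{R}^{n \times r_A}$, the \Nystrom initialization factors as $\X_0 = \A\bfOmega = \U\bfPhi_0$ with $\bfPhi_0 := \bfSigma\V^\top\bfOmega \in \mathbb{R}^{r_A \times r}$. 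By the same reasoning used in Lemma~\ref{lemma.sym-init} and its over-parametrized counterpart, $\bfPhi_0$ has full row rank $r_A$ w.h.p.\ over the initialization (since $r_A < r$), which is the sole probabilistic ingredient I will need.

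Next I would unwind one step of \eqref{eq.asym-op-iter}. Because $\Y_0 = \mathbf{0}$, we have $\X_1 = \X_0 = \U\bfPhi_0$, and the $\Y$-update reduces to
\begin{align*}
\Y_1 \;=\; \eta \A^\top \X_0 (\X_0^\top \X_0)^\dagger \;=\; \eta\, \V\bfSigma\,\bfPhi_0\,(\bfPhi_0^\top \bfPhi_0)^\dagger,
\end{align*}
where I used $\U^\top\U = \I_{r_A}$ so that $\X_0^\top \X_0 = \bfPhi_0^\top \bfPhi_0$. Multiplying out and using the symmetry of the pseudo-inverse of a symmetric matrix gives
\begin{align*}
\X_1 \Y_1^\top \;=\; \eta\, \U\, \bigl[\,\bfPhi_0 (\bfPhi_0^\top \bfPhi_0)^\dagger \bfPhi_0^\top\,\bigr]\, \bfSigma\, \V^\top.
\end{align*}

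The crux, and the only nontrivial step, is to identify the bracketed expression. For any matrix $\mathbf{M}$, $\mathbf{M}(\mathbf{M}^\top\mathbf{M})^\dagger \mathbf{M}^\top$ is the orthogonal projector onto the column space of $\mathbf{M}$; this can be verified directly from the compact SVD of $\bfPhi_0$ (writing $\bfPhi_0 = \mathbf{U}_\phi \bfSigma_\phi \mathbf{V}_\phi^\top$ with $\mathbf{U}_\phi \in \mathbb{R}^{r_A \times r_A}$ unitary, the bracketed product collapses to $\mathbf{U}_\phi \mathbf{U}_\phi^\top$). Since $\bfPhi_0 \in \mathbb{R}^{r_A \times r}$ has full row rank $r_A$ w.h.p., its column space is all of $\mathbb{R}^{r_A}$, so the projector equals $\I_{r_A}$.

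Plugging this in yields $\X_1 \Y_1^\top = \eta\, \U \bfSigma \V^\top = \eta\, \A$, and choosing $\eta = 1$ finishes the proof. The main obstacle is purely conceptual: one must recognize that even though $(\X_0^\top \X_0)^\dagger$ and $(\bfPhi_0^\top \bfPhi_0)^\dagger$ are rank-$r_A$ objects sitting inside $r \times r$ matrices, the sandwich $\bfPhi_0 (\bfPhi_0^\top \bfPhi_0)^\dagger \bfPhi_0^\top$ lives in $\mathbb{R}^{r_A \times r_A}$ and, thanks to the full row rank of $\bfPhi_0$ provided by \Nystrom initialization, it is exactly the identity --- which is precisely why the modification from $(\cdot)^{-1}$ to $(\cdot)^\dagger$ in \eqref{eq.asym-op-iter} suffices with no damping parameter.
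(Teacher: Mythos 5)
Your proof is correct and follows essentially the same route as the paper's: both reduce the one-step computation to the compact SVD of $\bfPhi_0 = \bfSigma\V^\top\bfOmega$ and use that $\bfPhi_0$ has full row rank $r_A$ w.h.p.\ to make the sandwich $\bfPhi_0(\bfPhi_0^\top\bfPhi_0)^\dagger\bfPhi_0^\top$ collapse to $\I_{r_A}$. The only cosmetic difference is that you phrase this collapse conceptually as ``orthogonal projector onto a full column space,'' while the paper writes out the SVD cancellation $\mathbf{P}\mathbf{D}\Q^\top\Q\mathbf{D}^{-2}\Q^\top\Q\mathbf{D}\mathbf{P}^\top = \I_{r_A}$ explicitly.
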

\begin{proof}
	Let the compact eigendecomposition of $\A = \U \bfSigma \V^\top$ for $\U \in \mathbb{R}^{m \times r_A}$, $\bfSigma \in \mathbb{R}^{r_A \times r_A}$, and $\V \in \mathbb{R}^{n \times r_A}$. 
	
	The \Nystrom initialization ensures that $\X_0 = \X_1 = \U \bfPhi_0$, where $\bfPhi_0 \in \mathbb{R}^{r_A \times r}$ and clearly $\bfPhi_0  = \bfSigma \V^\top \bfOmega$. Using the expression of $\X_1$, iteration \eqref{eq.asym-op-iter} gives that
	\begin{align*}
		\Y_1 = \eta \V \bfSigma \bfPhi_0(\bfPhi_0^\top\bfPhi_0)^\dagger. 
	\end{align*}
	Let the compact SVD of $\bfPhi_0:= \mathbf{P} \mathbf{D} \mathbf{Q}^\top $, where $\mathbf{P} \in \mathbb{R}^{r_A \times r_A}$, $\mathbf{D} \in \mathbb{R}^{r_A \times r_A}$ and $\mathbf{Q} \in \mathbb{R}^{r \times r_A}$. Note that $\mathbf{P}$ is unitary. With the compact SVD of $\bfPhi_0$, we have that $(\bfPhi_0^\top\bfPhi_0)^\dagger = \Q \mathbf{D}^{-2}\Q^\top$, which implies that
	\begin{align*}
		\X_1\Y_1^\top = \eta \U \mathbf{P}\mathbf{D}\Q^\top  \Q \mathbf{D}^{-2}\Q^\top \Q \mathbf{D} \mathbf{P}^\top \bfSigma \V^\top \stackrel{(a)}{=} \U \bfSigma\V^\top = \A
 	\end{align*}
	where (a) is because $\mathbf{P}$ is unitary and the choice of $\eta=1$.
\end{proof}

\section{Other useful lemmas}

\begin{lemma}\label{apdx.lemma.aux1}
	Let $A_{t+1} = (1 - \theta) A_t + \beta$ with some $\alpha \in (0, 1)$ and $\beta \geq 0$, then we have
	\begin{align*}
		A_{t+1} = (1 - \theta)^{t+1} A_0 + \beta \frac{1 - (1 - \theta)^{t+1}}{\theta}  \leq (1 - \theta)^{t+1} A_0 + \frac{\beta}{\theta}.
	\end{align*}
\end{lemma}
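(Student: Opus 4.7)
The statement is a standard closed-form solution for a scalar affine recursion, so the plan is simply to unroll and recognize a geometric series. First I would rewrite the recursion to isolate the linear and constant contributions from iteration $t$ back to iteration $0$, obtaining $A_{t+1} = (1-\theta)^{t+1} A_0 + \beta \sum_{k=0}^{t} (1-\theta)^k$ by applying the one-step rule repeatedly. Then I would evaluate the geometric sum $\sum_{k=0}^{t}(1-\theta)^k = \frac{1-(1-\theta)^{t+1}}{\theta}$, valid because $\theta \in (0,1)$ (I read the hypothesis ``$\alpha \in (0,1)$'' as a typo for $\theta \in (0,1)$; the claim would be vacuous otherwise, since $\theta = 0$ makes the denominator zero). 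Substituting yields the stated equality.

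For the inequality, I would observe that since $\theta \in (0,1)$ we have $(1-\theta)^{t+1} \in (0,1)$, hence $1 - (1-\theta)^{t+1} \leq 1$, which together with $\beta \geq 0$ gives $\beta \cdot \frac{1-(1-\theta)^{t+1}}{\theta} \leq \frac{\beta}{\theta}$. Combining with the closed-form identity delivers the upper bound $A_{t+1} \leq (1-\theta)^{t+1} A_0 + \frac{\beta}{\theta}$.

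Alternatively one may verify the formula by induction on $t$: the base case $t = 0$ reduces to $A_1 = (1-\theta) A_0 + \beta$, which matches the recursion directly, and the inductive step applies the recursion once to the assumed formula at time $t$ and simplifies, using $(1-\theta)\cdot\frac{1-(1-\theta)^{t+1}}{\theta} + 1 = \frac{1-(1-\theta)^{t+2}}{\theta}$. I do not expect any obstacle here; the argument is purely algebraic and involves no sign, positivity, or probabilistic subtlety beyond $\theta \in (0,1)$ and $\beta \geq 0$, both of which are assumed.
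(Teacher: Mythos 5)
Your proposal is correct and matches the paper's argument: the paper likewise proves this by unrolling the recursion and invoking the geometric series bound $1+\alpha+\cdots+\alpha^t \le \frac{1}{1-\alpha}$. Your reading of ``$\alpha \in (0,1)$'' as a typo for ``$\theta \in (0,1)$'' is the right interpretation.
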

\begin{proof}
	The proof can be completed by simply unrolling $A_{t+1}$ and using the fact $1 + \alpha + \alpha^2 + \ldots + \alpha^t \leq \frac{1}{1 - \alpha }$.	
\end{proof}

\begin{lemma}\label{apdx.lemma.aux2}
	If $\mathbf{A} \in \mathbb{R}^{n \times n}$ and $\mathbf{B} \in \mathbb{R}^{n \times n}$ are positive semi-definite matrices, we have $\lambda_{\min}(\A + \B) \geq \lambda_{\min}(\A) + \lambda_{\min}(\B)$.
\end{lemma}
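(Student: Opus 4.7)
The plan is to invoke the Rayleigh quotient (variational) characterization of the smallest eigenvalue for symmetric matrices, which is the standard route for such Weyl-type inequalities. Since $\A$ and $\B$ are PSD (hence symmetric), the Courant-Fischer theorem gives
\begin{align*}
\lambda_{\min}(\M) = \min_{\|x\|=1} x^\top \M x
\end{align*}
for any symmetric $\M$, and $\A+\B$ is itself symmetric.

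First, I would apply this characterization to $\A+\B$ to write
\begin{align*}
\lambda_{\min}(\A+\B) = \min_{\|x\|=1} \bigl( x^\top \A x + x^\top \B x \bigr).
\end{align*}
Then I would use the elementary fact that the minimum of a sum is at least the sum of the minima, i.e., $\min_x [f(x)+g(x)] \geq \min_x f(x) + \min_x g(x)$, applied to $f(x)=x^\top\A x$ and $g(x)=x^\top\B x$ over the unit sphere. Applying the Rayleigh characterization once more to each term individually recovers $\lambda_{\min}(\A)$ and $\lambda_{\min}(\B)$, completing the bound.

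There is no real obstacle here; the result does not even require PSD-ness (symmetry suffices), but PSD is the hypothesis actually used elsewhere in the paper (e.g., in Lemma~\ref{lemma.sym-ep.ir} and the analogous bound for $\B_t$ in \eqref{eq.ir-2}). I would just note in passing that the same argument yields the more general Weyl inequality for symmetric matrices, so the PSD hypothesis is not tight. The proof should take only a few lines.
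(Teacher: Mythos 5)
Your proof is correct and uses essentially the same argument as the paper: both invoke the Rayleigh quotient (Courant--Fischer) characterization of $\lambda_{\min}$ and then observe that the minimum of a sum is at least the sum of the minima (the paper phrases this as the constrained problem having a shrunken feasible region). Your remark that symmetry suffices and PSD-ness is not actually needed is also accurate.
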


\begin{proof}
	The smallest eigenvalue of $\A + \B$ can be expressed as 
	\begin{align}\label{apex.eq.both}
		\lambda_{\min}(\A + \B) = \min_{\x \neq \mathbf{0}} \frac{\x^\top (\A + \B) \x}{\x^\top \x} = \min_{\x_1 \neq \mathbf{0}, \x_1 = \x_2} \frac{\x_1^\top \A \x_1}{\x_1^\top \x_1}  + \frac{\x_2^\top \B \x_2}{\x_2^\top \x_2}.
	\end{align}
	On the other hand, we also have that 
	\begin{align}\label{apex.eq.individual}	
		\lambda_{\min}(\A) + \lambda_{\min}(\B)  = \min_{\x_1 \neq \mathbf{0}, \x_2 \neq \mathbf{0}} \frac{\x_1^\top \A \x_1}{\x_1^\top \x_1}  + \frac{\x_2^\top \B \x_2}{\x_2^\top \x_2}.
	\end{align}
	Because \eqref{apex.eq.both} is a constrained version of the minimization problem \eqref{apex.eq.individual}, they share the same objective, but \eqref{apex.eq.both} has shrinked feasible region. It is not difficult to see that $\lambda_{\min}(\A + \B) \geq \lambda_{\min}(\A) + \lambda_{\min}(\B) $. The proof is thus completed.
\end{proof}

\begin{lemma}\label{apdx.lemma.aux3}
	Consider a sequence $\{ A_t \}_t$ with $A_t \geq 0, \forall t$. If there exists $\alpha$ such that $A_{t+1} \leq \alpha A_t^2$ and $A_0 \leq \frac{1}{2 \alpha}$, $A_t$ converges to $0$ at a quadratic rate, i.e.,
	\begin{align*}
		A_{t+1} \leq \frac{1}{\alpha} \frac{1}{2^{2^{t+1}}}.
	\end{align*}
\end{lemma}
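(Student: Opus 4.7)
\textbf{Proof plan for Lemma \ref{apdx.lemma.aux3}.} The plan is to reduce the recursion to the canonical quadratic map $x \mapsto x^2$ via a simple rescaling, and then conclude by a one-line induction. Concretely, I would introduce $B_t := \alpha A_t$, which is nonnegative for all $t$ since $A_t \geq 0$. The hypothesis $A_{t+1} \leq \alpha A_t^2$ translates directly into
\begin{align*}
    B_{t+1} = \alpha A_{t+1} \leq \alpha \cdot \alpha A_t^2 = (\alpha A_t)^2 = B_t^2,
\end{align*}
while the initialization hypothesis $A_0 \leq \tfrac{1}{2\alpha}$ becomes $B_0 \leq \tfrac{1}{2}$. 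This rescaling is the whole point: it removes the constant $\alpha$ from both the recursion and the initial bound, leaving a clean self-similar inequality.

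Next, I would prove by induction on $t \geq 0$ the statement $B_t \leq (1/2)^{2^t}$. The base case $t=0$ is exactly $B_0 \leq 1/2 = (1/2)^{2^0}$. For the inductive step, assuming $B_t \leq (1/2)^{2^t}$, the recursion gives
\begin{align*}
    B_{t+1} \leq B_t^2 \leq \left( (1/2)^{2^t} \right)^2 = (1/2)^{2^{t+1}},
\end{align*}
which closes the induction. Note that the squaring of an upper bound is legitimate because both $B_t$ and $(1/2)^{2^t}$ are nonnegative, so the inequality is preserved; this is the only subtle bookkeeping point in the proof.

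Finally, I would translate back: $A_t = B_t / \alpha \leq \tfrac{1}{\alpha} \cdot \tfrac{1}{2^{2^t}}$ for all $t \geq 0$, which shifted by one index yields the claimed bound $A_{t+1} \leq \tfrac{1}{\alpha} \cdot \tfrac{1}{2^{2^{t+1}}}$. I do not foresee any real obstacle here; the only thing to be careful about is maintaining the invariant $B_t \leq 1/2$ so that squaring does not inflate the bound (which is automatic from the induction hypothesis, since $(1/2)^{2^t} \leq 1/2$ for $t \geq 0$). Hence the quadratic convergence rate follows directly, and this lemma serves as the black-box tool invoked in the final step of Lemma \ref{thm.sym-ep-quad}.
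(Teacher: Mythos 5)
Your proof is correct and takes essentially the same approach as the paper: the paper directly unrolls the recursion to get $A_{t+1} \leq \frac{1}{\alpha}(\alpha A_0)^{2^{t+1}}$, which is exactly your rescaled quantity $B_t = \alpha A_t$ with the induction written out as a chain of inequalities. Your version is a slightly cleaner formalization of the same idea.
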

\begin{proof}
	Unrolling $A_{t+1}$, we get that
	\begin{align*}
		A_{t+1} \leq \alpha A_t^2 \leq \alpha^3  A_{t-1}^4  \leq \alpha^7 A_{t-2}^8 \leq \frac{1}{\alpha}  (\alpha A_0)^{2^{t + 1}} \leq \frac{1}{\alpha} \frac{1}{2^{2^{t+1}}}.
	\end{align*}
	The proof is thus completed.
\end{proof}

\begin{lemma}\label{apdx.lemma.aux888}
    Let $\A \in \mathbb{R}^{m \times n}$ be a matrix with full column rank and $\B \in \mathbb{R}^{n \times p}$ be a non-zero matrix. Let $\sigma_{\min}(\cdot)$ be the smallest non-zero singular value. Then it holds that $\sigma_{\min}(\A\B) \geq\sigma_{\min}(\A) \sigma_{\min}(\B)$.
\end{lemma}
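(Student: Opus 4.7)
The plan is to exploit the variational characterization of the smallest nonzero singular value, i.e., $\sigma_{\min}(\M) = \min_{\x \perp \ker(\M),\, \|\x\|=1} \|\M\x\|$, and to combine it with the fact that full column rank of $\A$ makes $\A$ injective. The chain of inequalities should be straightforward once the null spaces are matched up correctly.

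First I would observe that since $\A$ has full column rank, $\ker(\A) = \{\mathbf{0}\}$, which gives the key identity $\ker(\A\B) = \ker(\B)$. Consequently, the nonzero singular values of $\A\B$ correspond precisely to unit vectors in $\ker(\B)^{\perp}$. Next, because $\A$ is injective with smallest singular value $\sigma_{\min}(\A)>0$, the inequality $\|\A\y\|\geq \sigma_{\min}(\A)\|\y\|$ holds for \emph{every} $\y \in \mathbb{R}^n$, not just for $\y$ orthogonal to some null space. This is exactly the place where the full-column-rank hypothesis is used.

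Given any unit vector $\x \in \ker(\B)^{\perp}$, I would then chain the two bounds: $\|\A\B\x\| \geq \sigma_{\min}(\A)\|\B\x\| \geq \sigma_{\min}(\A)\sigma_{\min}(\B)\|\x\| = \sigma_{\min}(\A)\sigma_{\min}(\B)$, where the second inequality uses the variational definition of $\sigma_{\min}(\B)$ applied to $\x \perp \ker(\B)$. Taking the minimum over such $\x$ on the left-hand side yields the target inequality $\sigma_{\min}(\A\B) \geq \sigma_{\min}(\A)\sigma_{\min}(\B)$.

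The only subtlety, which I do not expect to be a real obstacle, is ensuring the set $\ker(\B)^\perp$ is nonempty so that the minimum is actually attained; this is precisely the role of the assumption $\B \neq \mathbf{0}$, since a nonzero $\B$ has at least one positive singular value, guaranteeing that $\sigma_{\min}(\B)$ and, via the chain above, $\sigma_{\min}(\A\B)$ are both well-defined as smallest \emph{nonzero} singular values. No further machinery is needed.
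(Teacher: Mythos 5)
Your proof is correct and follows essentially the same approach as the paper: both rely on the variational characterization of $\sigma_{\min}$, use the full-column-rank assumption to match $\ker(\A\B)$ with $\ker(\B)$, and chain the two singular-value lower bounds. Your version is a touch cleaner — by invoking $\|\A\y\| \geq \sigma_{\min}(\A)\|\y\|$ for all $\y$, you avoid the paper's explicit change of variables $\y = \B\x/\|\B\x\|$ and the subsequent relaxation step, but the underlying mechanism is identical.
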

\begin{proof}
    Using the min-max principle for singular values,
    \begin{align*}
        \sigma_{\min}(\A\B) & = \min_{\| \x \|=1, \x \in \text{ColSpan}(\B)} \| \A\B\x \| \\
        & = \min_{\| \x \|=1, \x \in \text{ColSpan}(\B)} \Big{\|} \A \frac{\B\x }{\| \B\x  \|} \Big{\|} \cdot \| \B\x  \|\\
        & \stackrel{(a)}{=} \min_{\| \x \|=1, \| \y\|=1, \x \in \text{ColSpan}(\B), \y \in \text{ColSpan}(\B)} \| \A\y \|  \cdot \| \B\x  \| \\
        & \geq  \min_{\| \y\|=1, \y \in \text{ColSpan}(\B)} \| \A\y \|  \cdot \min_{\| \x \|=1,  \x \in \text{ColSpan}(\B)}  \| \B\x  \| \\
        & \geq  \min_{\| \y\|=1 } \| \A\y \|  \cdot \min_{\| \x \|=1,  \x \in \text{ColSpan}(\B)}  \| \B\x  \| \\
        & =\sigma_{\min}(\A) \sigma_{\min}(\B)
    \end{align*}
    where (a) is by changing of variables, i.e., $\y = \B\x/ \| \B\x \|$.
\end{proof}

\begin{lemma}\label{apdx.lemma.aux4}
	For PSD matrices $\A$ and $\B$, if $\A + \B = \I_r$, then we have $\tr(\A) \leq r$ and $\tr(\B) \leq r$.
\end{lemma}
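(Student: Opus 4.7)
The plan is to exploit the linearity of the trace together with the nonnegativity of traces of PSD matrices. First I would take the trace of both sides of the identity $\A + \B = \I_r$, which by linearity yields $\tr(\A) + \tr(\B) = \tr(\I_r) = r$. Next, I would invoke the basic fact that the trace of a PSD matrix equals the sum of its (nonnegative) eigenvalues, hence $\tr(\A) \geq 0$ and $\tr(\B) \geq 0$.

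Combining these two observations gives $\tr(\A) = r - \tr(\B) \leq r$ and symmetrically $\tr(\B) = r - \tr(\A) \leq r$, which is the desired conclusion. There is no real obstacle here; the only thing to be careful about is to cite the nonnegativity of the trace of a PSD matrix (which follows since $\tr(\A) = \sum_i \lambda_i(\A)$ with $\lambda_i(\A) \geq 0$), rather than trying to compare $\A$ and $\I_r$ entrywise, which would not directly give the bound on the trace.
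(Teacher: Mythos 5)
Your proof is correct. The paper omits the proof of this lemma (stating it is straightforward), so there is no argument to compare against; your reasoning via linearity of the trace and nonnegativity of the trace of a PSD matrix is exactly the standard argument one would supply.
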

\begin{proof}
	The proof is straightforward and is omitted here.	
\end{proof}

\begin{lemma}[\citet{rudelson2009}]\label{apdx.lemma.aux-smallest-singular-value}
	Let $\bf{W}$ be an $d \times r$ matrix with $d \geq r$. The entries of 	$\bf{W}$ are drawn independently from ${\cal N}(0, 1)$. Then for every $\tau > 0$, we have that
	\begin{align*}
		\mathbb{P}\big(\sigma_r({\bf W}) \leq \tau (\sqrt{d} - \sqrt{r-1})\big) \leq (C_1 \tau)^{d - r + 1} + e^{-C_2 d}.
	\end{align*}
	where $C_1$ and $C_2$ are universal constants independent of $d$ and $r$.
\end{lemma}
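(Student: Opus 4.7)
The plan is to follow the standard Gaussian net argument popularized by Rudelson and Vershynin. Start from the variational characterization
\begin{align*}
\sigma_r(\mathbf{W}) = \min_{\mathbf{x} \in S^{r-1}} \| \mathbf{W}\mathbf{x} \|,
\end{align*}
where $S^{r-1}$ is the unit sphere in $\mathbb{R}^r$. For every fixed $\mathbf{x} \in S^{r-1}$, the vector $\mathbf{W}\mathbf{x}$ is a standard Gaussian in $\mathbb{R}^d$ (by rotational invariance), so $\|\mathbf{W}\mathbf{x}\|^2$ is $\chi^2_d$-distributed. The tail near zero of this distribution is governed by the density of $\chi_d$ at the origin, which yields the pointwise small-ball bound
\begin{align*}
\mathbb{P}\big(\| \mathbf{W}\mathbf{x} \| \leq s\big) \leq (C s/\sqrt{d})^{d}, \qquad s \leq \sqrt{d},
\end{align*}
for a universal constant $C>0$. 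Plugging in $s = \tau(\sqrt{d}-\sqrt{r-1})$ gives a pointwise probability of order $(C_1\tau)^{d}$, and after the standard sharpening (using only $d-r+1$ ``free'' directions, i.e.\ integrating out the $r-1$ coordinates aligned with $\mathbf{x}$-dependent structure) one arrives at $(C_1\tau)^{d-r+1}$.

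Next I would build a $\tfrac{1}{2}$-net $\mathcal{N} \subset S^{r-1}$ of cardinality at most $6^r$ and union-bound over $\mathcal{N}$; the pointwise bound above dominates the net's cardinality when $d - r + 1$ is large, while for $d$ close to $r$ the constant $C_1$ absorbs the $6^r$ factor. To lift the resulting estimate from $\mathcal{N}$ to all of $S^{r-1}$, approximate an arbitrary $\mathbf{x} \in S^{r-1}$ by the closest $\mathbf{x}_0 \in \mathcal{N}$ and use
\begin{align*}
\|\mathbf{W}\mathbf{x}\| \geq \|\mathbf{W}\mathbf{x}_0\| - \|\mathbf{W}\|_{\mathrm{op}}\cdot\|\mathbf{x}-\mathbf{x}_0\| \geq \|\mathbf{W}\mathbf{x}_0\| - \tfrac{1}{2}\|\mathbf{W}\|_{\mathrm{op}}.
\end{align*}
Since for Gaussian matrices $\|\mathbf{W}\|_{\mathrm{op}} \leq \sqrt{d}+\sqrt{r}+t$ holds with probability at least $1-2e^{-t^2/2}$ (by Gordon's inequality or Slepian), setting $t = c\sqrt{d}$ yields a deterministic replacement $\|\mathbf{W}\|_{\mathrm{op}} \leq K\sqrt{d}$ on an event of probability at least $1 - e^{-C_2 d}$. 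On this event the lifting step costs at most a constant factor in the threshold, which can again be absorbed into $C_1$.

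Combining the two bad events -- small-ball failure on the net (contributing $(C_1\tau)^{d-r+1}$) and operator-norm blow-up (contributing $e^{-C_2 d}$) -- yields the claimed estimate. The only genuine obstacle is the small-ball step: verifying that the exponent really is $d-r+1$ rather than the naive $d$, which requires the compressible/incompressible decomposition of $S^{r-1}$ from \citep{rudelson2009}. Everything else (net construction, operator-norm tail, union bound) is routine, so I would import the pointwise anti-concentration inequality from that reference rather than reprove it.
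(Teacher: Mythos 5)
The paper does not prove this lemma at all; it is imported verbatim from \citet{rudelson2009} with no argument supplied, so there is no ``paper's own proof'' to compare against. Your sketch is therefore evaluated only as an outline of the known proof, and its final disposition (``import the anti-concentration step from that reference rather than reprove it'') is consistent with how the paper treats the result.

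That said, one step in your sketch is genuinely misleading. The naive version of the argument you describe --- pointwise small-ball bound $\mathbb{P}(\|\mathbf{W}\mathbf{x}\|\le s)\le (Cs/\sqrt{d})^{d}$ for fixed $\mathbf{x}$, a $\tfrac12$-net of cardinality $6^r$, union bound, and operator-norm control --- yields a bound of the form
\begin{align*}
(C\tau)^{d}\cdot 6^{r}+e^{-cd}\;\le\;(C'\tau)^{d-r}+e^{-cd},
\end{align*}
i.e.\ exponent $d-r$, not $d-r+1$. At $d=r$ this exponent vanishes and the estimate is vacuous, whereas the RV bound $(C_1\tau)^{d-r+1}$ is still linear in $\tau$ there (and in fact the square case is exactly where the paper needs it, in the exact-parametrized setting where $\Q^\top\bfOmega$ is $r\times r$). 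Your proposed explanation for recovering the extra $+1$ --- ``the standard sharpening\ldots integrating out the $r-1$ coordinates aligned with $\mathbf{x}$-dependent structure'' --- is not an accurate description of how RV obtain it and does not constitute a valid argument; the exponent $d-r+1$ comes from the compressible/incompressible decomposition of the sphere combined with Littlewood--Offord-type anti-concentration for the distance of a random column to a random subspace, which is precisely the non-routine core of RV2009. You do correctly identify this as the crux in your closing sentence, so the overall sketch is salvageable once the ``standard sharpening'' sentence is replaced by an explicit appeal to that machinery, but as written that sentence asserts a shortcut that does not exist.
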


\section{Missing experimental details}\label{apdx.sec.numerical}

\subsection{Details for problems with synthetic data}\label{apdx.sec.synthetic}

This subsection contains the detailed setup for the problems with synthetic data in Figs. \ref{fig.sym} and \ref{fig.sym-op}. Recall that here we focus on symmetric problems under exact-, under-, and over-parametrization.

For the exact-parametrized problem in Fig. \ref{fig.sym} (a) and (b), we choose the PSD matrix $\A \in \mathbb{R}^{m \times m}$ in the following manner. We set $m=1000$ and $r = r_A = 20$. The non-zero singular values are set as $\{1.0, 0.99, 0.98, \ldots, 0.82, 0.01\}$, where we intentionally set $\sigma_{r_A} =0.01$ to enlarge the condition number. We choose the step size of GD as $0.01$ to avoid divergence. The learning rate for ScaledGD is $0.5$.

For the under-parametrized problem in Fig. \ref{fig.sym} (c), we choose PSD matrix $\A \in \mathbb{R}^{m \times m}$ in the following manner. We set $m=1000$ and $r_A = 40$. The singular values of $\A$ are $\{1.0, 0.99, 0.98, \ldots, 0.65, 0.64, 0.05, $ $0.025, 0.01\}$. We choose $r=20$ to ensure the under-parametrized nature of this problem.

For the over-parametrized case in Fig. \ref{fig.sym-op} (a) and (b), we choose PSD matrix $\A \in \mathbb{R}^{m \times m}$ in the following manner. We set $m=1000$ and $r_A = 20$. The non-zero singular values are chosen as $\{1.0, 0.99, 0.98, \ldots, 0.82, 0.01\}$, where we intentionally set $\sigma_{r_A} =0.01$ to enlarge the condition number. We set $\X$ to be over-parametrized by letting $r = 60$. 
We choose the step size of GD as $0.01$. The learning rate of ScaledGD-$\lambda$ is set as 0.5, and its damping parameter $\lambda$ is chosen as $0.01$. The learning rate for ScaledGD with \Nystrom initialization is $0.5$.

\begin{figure*}[t]
	\centering
	\begin{tabular}{c}
	 	\includegraphics[width=.95\textwidth]{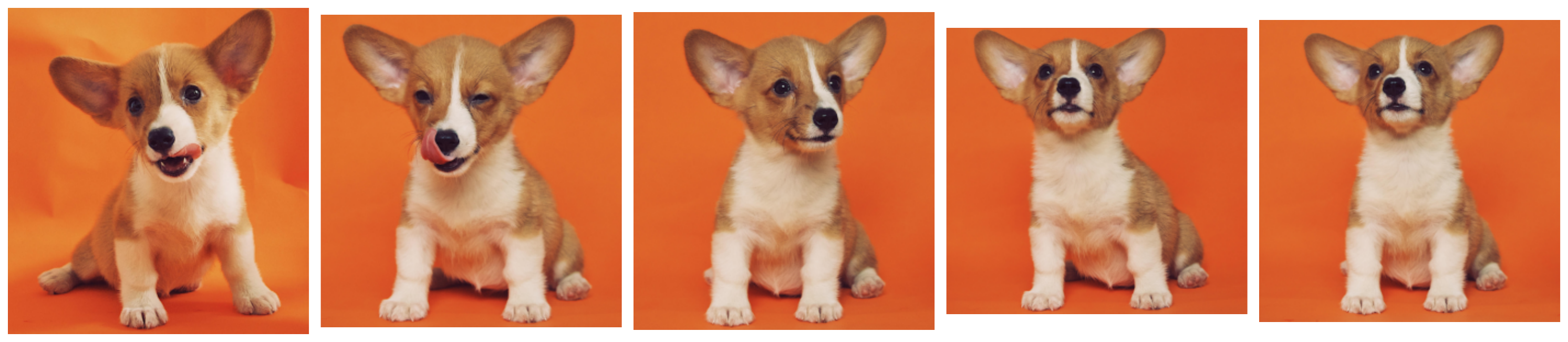} \\ 
	 \end{tabular}
	\vspace{-0.1cm}
	\caption{The dog dataset.}
	\vspace{-0.2cm}
	 \label{fig.dogs-dataset}
\end{figure*}

\begin{figure*}[t]
	\centering
	\begin{tabular}{c}
	 	\includegraphics[width=.95\textwidth]{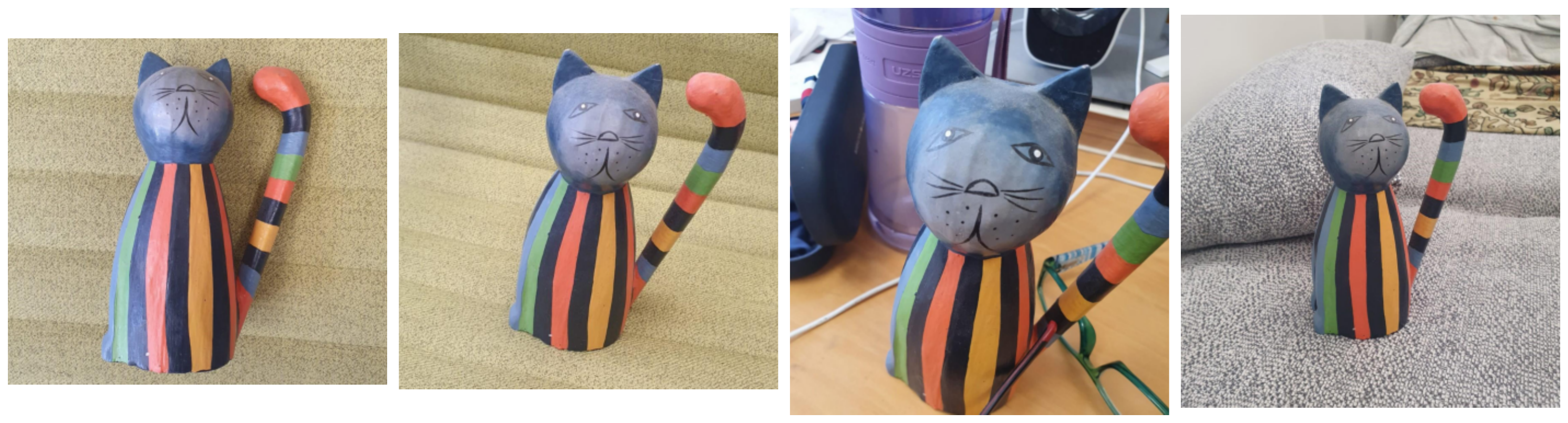} \\ 
	 \end{tabular}
	\vspace{-0.1cm}
	\caption{The cat-toy dataset.}
	\vspace{-0.2cm}
	 \label{fig.toy-cats-dataset}
\end{figure*}

\subsection{Datasets}

The evaluation of NoRA and NoRA+ is carried out on commonly adopted datasets in the literature.

\textbf{GLUE benchmark.} GLUE is designed to provide general-purpose evaluation of language
understanding \citep{wang2018glue}. Those adopted in our work include SST-2 (sentiment analysis, \citep{sst2}), RTE\footnote{\url{https://paperswithcode.com/dataset/rte}} (inference). These datasets are released under different permissive licenses.

\textbf{SuperGLUE benchmark.} SuperGLUE \citep{wang2019superglue} is another commonly adopted benchmark for language understanding, and it is more challenging compared with GLUE. The considered datasets include CB (inference, \citep{cb}), ReCoRD (question answering, \citep{record}), WSC (coreference resolution, \citep{wsc}), BoolQ (question answering, \citep{boolq}), and MiltiRC (question answering, \citep{multirc}). These datasets are released under different permissive licenses.

\textbf{Commonsense reasoning.} These datasets are a collection tasks that require commonsense reasoning to answer. The considered datasets include WinoGrande \citep{winogrande}, PIQA \citep{piqa}, SOCIAL-I-QA (SIQA) \citep{siqa}, HellaSwag \citep{hellaswag}, ARC-easy, ARC-challenge \citep{arc} and OpenbookQA \citep{obqa}. These datasets are released under different permissive licenses.

\textbf{Math.} For mathematical problems, we consider GSM8K \citep{gsm8k} dataset that consists
of high quality linguistically diverse school math problems created by human problem writers. This dataset is under MIT license. We also adopt MetaMathQA dataset \citep{metamath}, which is constructed through bootstrapping mathematical questions by rewriting the question from multiple perspectives. This dataset is under MIT license.

\textbf{Additional datasets.} We also use SQuAD (question answering, \citep{squad}) in our experiments, which is released under license CC BY-SA 4.0.

\textbf{Datasets for DreamBooth.} The datasets (dog and cat-toy) used for Sec. \ref{sec.num-dreambooth} are obtained directly from Huggingface. The dog dataset\footnote{\url{https://huggingface.co/datasets/diffusers/dog-example}} contains 5 dog images; see Fig. \ref{fig.dogs-dataset}. The cat-toy\footnote{\url{https://huggingface.co/datasets/diffusers/cat-toy-example}} dataset has 4 images; see Fig. \ref{fig.toy-cats-dataset}. Both datasets are representative examples for the purpose of DreamBooth -- finetuning with only few images for personalized generalization.

\subsection{Details for Fig. \ref{fig.which-rank} }\label{apdx.sec.which-rank}

The experiment setting and training protocols are the same as few-shot learning with OPT-1.3B in the following subsection. Here, we are interested in the change of singular values after LoRA finetuning. For each LoRA layer, we compare the singular values of $\W_0$ and $\W_0 + \X_T\Y_T^\top$, where $\X_T, \Y_T$ are LoRA weights after training, and find out the indices of $r$ singular values that have the largest change after finetuning. We then count the indices across all LoRA layers. Fig. \ref{fig.which-rank} plots indices vs. counts.

\subsection{Few-shot learning with OPT-1.3B}

For this experiment, we first search for the best batchsizes for LoRA, and the same batchsize is applied for other tested algorithms as well. Then we search additionally for the best learning rate for each algorithm. This ensures that different algorithms see the same amount of data, while still having their best performed learning rate. The hyperparameters adopted are searched over values in Tab. \ref{tab.opt-few-shot-hp}. Adam is adopted for optimization.

\begin{table}[ht]
    \centering
    \caption{Hyperparameters used for few-shot learning with OPT-1.3B.}
    \begin{tabular}{cc}
        \toprule
        Hyperparameters & Values \\
        \midrule
        LoRA $r$  & 8 \\
        LoRA $\alpha$   & 16 \\
        LoRA module     & q\_proj, v\_proj \\
        \# epochs       & 5 \\
        batchsize       & 2, 4, 8 \\
        learning rate   & 1$ \times 10^{-5}$, 5$\times 10^{-5}$, 1$ \times 10^{-4}$ \\
        NoRA $\xi$      & 0.05, 0.1, 0.2 \\
        \bottomrule
    \end{tabular}
    \label{tab.opt-few-shot-hp}
\end{table}

\subsection{DreamBooth with stable-diffusion}\label{apdx.diffusion}

Stable Diffusion V1.4 \citep{sd} is adopted as base model, where LoRA is applied to the UNet. The text-encoder is not finetuned. We adopt the default parameter-choice from Huggingface, which is summarized in Tab. \ref{tab.db-hp}. We adopt AdamW as the optimizer with a weight decay of $0.01$. 

\begin{table}[ht]
    \centering
    \caption{Hyperparameters used for DreamBooth with stable-diffusion.}
    \begin{tabular}{cc}
        \toprule
        Hyperparameters & Values \\
        \midrule
        LoRA $r$  & 4 \\
        LoRA $\alpha$   & 4 \\
        LoRA module     & to\_q, to\_k, to\_v, to\_out \\
        \# iterations   & 500 \\
        batchsize       & 1 \\
        learning rate   & 1$ \times 10^{-4}$  \\
        NoRA $\xi$      & 0.1 \\
        \bottomrule
    \end{tabular}
    \label{tab.db-hp}
\end{table}

We provide additional results to further support the efficiency of NoRA by finetuning the stable-diffusion-v1.4 model using the same protocol as in Sec. \ref{sec.num-dreambooth}. Here we adopt a dataset with $4$ toy-cat images; see Fig. \ref{fig.toy-cats-dataset}. After finetuning $500$ steps using prompt ``a photo of toy cat'', our goal is to generate images ``a toy cat wearing glasses.'' The generated images are shown in Fig. \ref{fig.db-gen-cat}. In general, all tested algorithms do not distinguish the hands and the tail of toy cat well. However, both LoRA and LoRA-P generate images with less accurate facial details. For example, the glasses are not wearing well, or the eyes are not clear. However, the details of faces generated by NoRA and NoRA+ are quite clear.

\begin{figure*}[t]
	\centering
	\begin{tabular}{cc}
		\rotatebox{90}{~~~LoRA} & \includegraphics[width=.95\textwidth]{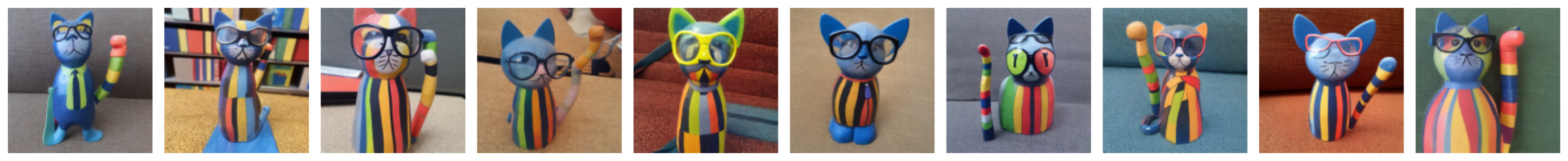} \\ 
		\rotatebox{90}{~LoRA-P} & \includegraphics[width=.95\textwidth]{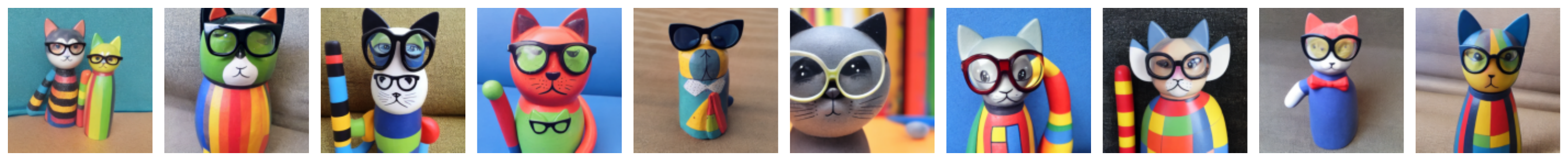} \\ 
		\rotatebox{90}{~~~NoRA} & \includegraphics[width=.95\textwidth]{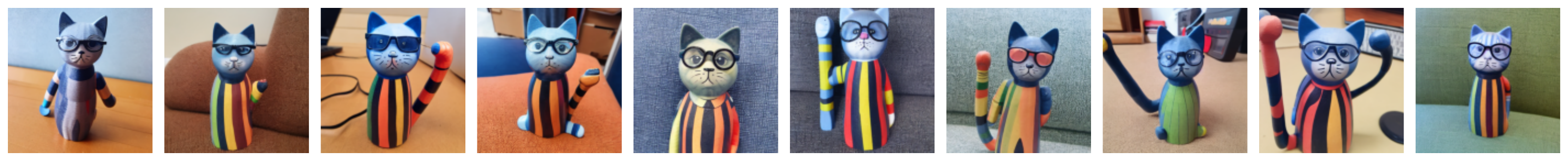} \\ 
		\rotatebox{90}{~~NoRA+} & \includegraphics[width=.95\textwidth]{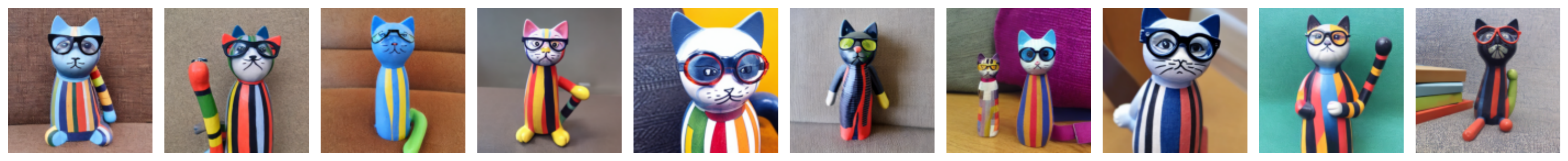} \\ 
	 \end{tabular}
	\vspace{-0.1cm}
	\caption{Generated images from NoRA and NoRA+ with stable-diffusion.}
	\vspace{-0.2cm}
	 \label{fig.db-gen-cat}
\end{figure*}

\subsection{Commonsense reasoning with LLaMA and LLaMA2 }

The base models considered are LLaMA-7B and LLaMA2-7B. The experimental setup and choices of hyperparameters follow \citep{liu2024dora}. The hyperparameters are summarized in Tab. \ref{tab.llama-hp}.

\begin{table}[ht]
    \centering
    \caption{Hyperparameters used for commonsense reasoning with LLaMA-7B and LLaMA2-7B.}
    \begin{tabular}{cc}
        \toprule
        Hyper-parameters & Values \\
        \midrule
        LoRA $r$ (rank) & 32 \\
        LoRA $\alpha$   & 64 \\
        LoRA module     & q\_proj, k\_proj, v\_proj, up\_proj, down\_proj \\
        epoch           & 3  \\
        learning rate   & $3 \times 10^{-4}$ \\
        batchsize       & 16 \\ 
        cutoff length   & 256 \\
        NoRA $\xi$      & 0.02, 0.05, 0.1 \\
        \bottomrule
    \end{tabular}
    \label{tab.llama-hp}
\end{table}

\subsection{Math reasoning with Gemma-7B}\label{apdx.num-math}

Our last evaluation tackles mathematical reasoning. Gemma-7B \citep{gemma} is finetuned for 2 epochs on MetaMathQA-100K dataset \citep{metamath}. LoRA rank is set as 32, leading to 100M trainable parameters. The performance is assessed on GSM8K \citep{gsm8k}, and hyperparameters are summarized in Tab. \ref{tab.gemma-hp}.

\begin{table}[ht]
    \centering
    \caption{Hyperparameters used for math reasoning with Gemma-7B.}
    \begin{tabular}{cc}
        \toprule
        Hyper-parameters & Values \\
        \midrule
        LoRA $r$ (rank) & 32 \\
        LoRA $\alpha$   & 64 \\
        LoRA module     & q\_proj, k\_proj, v\_proj, o\_proj, up\_proj, down\_proj, gate\_proj  \\
        epoch           & 2  \\
        learning rate   & $3 \times 10^{-4}$, $4 \times 10^{-4}$, $5 \times 10^{-4}$ \\
        batchsize       & 128 \\ 
        NoRA $\xi$      & 0.02, 0.05, 0.1 \\
        \bottomrule
    \end{tabular}
    \label{tab.gemma-hp}
\end{table}

\end{document}